\theoremstyle{definition}
\newtheorem{definition}{Definition}[section]
\newtheorem{corollary}[definition]{Corollary}
\newtheorem{remark}[definition]{Remark}
\DeclareMathOperator*{\argmin}{argmin}
\DeclareMathOperator*{\argmax}{argmax}
\renewcommand\AB@affilsepx{, \protect\Affilfont}
\renewcommand*{\Affilfont}{\normalsize\normalfont}
\title{Towards Understanding the Data Dependency of Mixup-style Training\vspace{-3mm}}
\author[1]{Muthu Chidambaram}
\author[1]{Xiang Wang}
\author[2]{Yuzheng Hu}
\author[1]{Chenwei Wu}
\author[1]{Rong Ge\vspace{-2mm}}
\affil[1]{Duke University}
\affil[2]{University of Illinois at Urbana-Champaign\vspace{-7mm}}
\newcommand\blfootnote[1]{%
  \begingroup
  \renewcommand\thefootnote{}\footnote{#1}%
  \addtocounter{footnote}{-1}%
  \endgroup
}
\begin{document}

\maketitle

\begin{abstract}\vspace{-2mm}
In the Mixup training paradigm, a model is trained using convex combinations of data points and their associated labels. Despite seeing very few true data points during training, models trained using Mixup seem to still minimize the original empirical risk and exhibit better generalization and robustness on various tasks when compared to standard training. In this paper, we investigate how these benefits of Mixup training rely on properties of the data in the context of classification. For minimizing the original empirical risk, we compute a closed form for the Mixup-optimal classification, which allows us to construct a simple dataset on which minimizing the Mixup loss can provably lead to learning a classifier that does not minimize the empirical loss on the data. On the other hand, we also give sufficient conditions for Mixup training to also minimize the original empirical risk. For generalization, we characterize the margin of a Mixup classifier, and use this to understand why the decision boundary of a Mixup classifier can adapt better to the full structure of the training data when compared to standard training. In contrast, we also show that, for a large class of linear models and linearly separable datasets, Mixup training leads to learning the same classifier as standard training.
\end{abstract}

\section{Introduction}
Mixup \citep{zhang2018Mixup} is a modification to the standard supervised learning setup which involves
training on convex combinations of pairs of data points and their labels instead of the original data itself.\blfootnote{Correspondence to Muthu Chidambaram (\texttt{muthu@cs.duke.edu}).}
In the original paper, \citet{zhang2018Mixup} demonstrated that training deep neural networks using Mixup leads to better
generalization performance, as well as greater robustness to adversarial attacks and label noise on image classification tasks.
The empirical advantages of Mixup training have been affirmed by several follow-up works
\citep{He_2019_CVPR, thulasidasan2019Mixup, lamb2019interpolated, arazo2019unsupervised, guo20text}. The idea of Mixup has also been extended beyond the supervised learning setting, and been applied to semi-supervised learning \citep{mixmatch, fixmatch}, contrastive learning \citep{verma2021domainagnostic, imix}, privacy-preserving learning \citep{huang2021instahide}, and learning with fairness constraints \citep{fairmixup}. 

However, from a theoretical perspective, Mixup training is still mysterious even in the basic multi-class classfication setting \--- why should the output of a linear mixture of two training samples be the same linear mixture of their labels, especially when considering highly nonlinear models? Despite several recent theoretical results \citep{guo2019mixup, carratino2020mixup, zhang2020does, zhang2021mixup}, there is still not a complete understanding of why Mixup training actually works in practice. In this paper, we try to understand {\em why} Mixup works by first understanding {\em when} Mixup works: in particular, how the properties of Mixup training rely on the structure of the training data.

We consider two properties for classifiers trained with Mixup. First, even though Mixup training does not observe many original data points during training, it usually can still correctly classify all of the original data points (empirical risk minimization (ERM)). Second, the aforementioned empirical works have shown how classifiers trained with Mixup often have better adversarial robustness and generalization than standard training. In this work, we show that both of these properties can rely heavily on the data used for training, and that they need not hold in general.

\textbf{Main Contributions and Related Work.} The idea that Mixup can potentially fail to minimize the original risk is not new; \citet{guo2019mixup} provide examples of how Mixup labels can conflict with actual data point labels. However, their theoretical results do not characterize the data and model conditions under which this failure can provably happen when minimizing the Mixup loss. In Section \ref{mixuperm} of this work, we provide a concrete classification dataset on which continuous approximate-minimizers of the Mixup loss can fail to minimize the empirical risk. We also provide sufficient conditions for Mixup to minimize the original risk, and show that these conditions hold approximately on standard image classification benchmarks. 

With regards to generalization and robustness, the parallel works of \citet{carratino2020mixup} and \citet{zhang2020does} showed that Mixup training can be viewed as minimizing the empirical loss along with a data-dependent regularization term. \citet{zhang2020does} further relate this term to the adversarial robustness and Rademacher complexity of certain function classes learned with Mixup. In Section \ref{mixupgen}, we take an alternative approach to understanding generalization and robustness by analyzing the margin of Mixup classifiers. Our perspective can be viewed as complementary to that of the aforementioned works, as we directly consider the properties exhibited by a Mixup-optimal classifier instead of considering what properties are encouraged by the regularization effects of the Mixup loss. In addition to our margin analysis, we also show that for the common setting of linear models trained on high-dimensional Gaussian features both Mixup (for a large class of mixing distributions) and ERM with gradient descent learn the same classifier with high probability.

Finally, we note the related works that are beyond the scope of our paper; namely the many Mixup-like training procedures such as Manifold Mixup \citep{verma2019manifold}, Cut Mix \citep{yun2019cutmix}, Puzzle Mix \citep{kim2020puzzle}, and Co-Mixup \citep{kim2021comixup}.

\section{Mixup and Empirical Risk Minimization}\label{mixuperm}
The goal of this section is to understand when Mixup training can also minimize the empirical risk. Our main technique for doing so is to derive a closed-form for the Mixup-optimal classifier over a sufficiently powerful function class, which we do in Section \ref{subsec:closedform} after introducing the basic setup in Section \ref{subsec:ermsetup}. We use this closed form to motivate a concrete example on which Mixup training does not minimize the empirical risk in Section \ref{subsec:failcase}, and show under mild nondegeneracy conditions that Mixup will minimize the emprical risk in Section \ref{subsec:gooderm}.

\subsection{Setup} \label{subsec:ermsetup}
We consider the problem of $k$-class classification where the classes $1, ..., k$ correspond to compact disjoint sets
$X_1, ..., X_k \subset \mathbb{R}^n$ with an associated probability measure $\mathbb{P}_X$ supported on $X = \bigcup_{i = 1}^k X_i$.
We use $\mathcal{C}$ to denote the set of all functions $g: \mathbb{R}^n \to [0, 1]^k$
satisfying the property that $\sum_{i = 1}^k g^i(x) = 1$ for all $x$ (where $g^i$ represents the $i$-th coordinate function of $g$).
We refer to a function $g \in \mathcal{C}$ as a \textit{classifier}, and say that $g$ classifies $x$ as class $j$ if $j = \argmax_i g^i (x)$.
The cross-entropy loss associated with such a classifier $g$ is then:
\begin{align*}
        J(g, \mathbb{P}_X) = -\sum_{i = 1}^{k} \int_{X_i} \log g^i(x) d\mathbb{P}_X(x) 
\end{align*}

The goal of standard training is to learn a classifier $h \in \argmin_{g \in \mathcal{C}} J(g, \mathbb{P}_X)$.
Any such classifier $h$ will necessarily satisfy $h^i(x) = 1$ on $X_i$ since the $X_i$ are disjoint.

\textbf{Mixup.} In the Mixup version of our setup, we are interested in minimizing the cross-entropy of convex combinations of the original data
and their classes. These convex combinations are determined according to a probability measure $\mathbb{P}_f$
whose support is $[0, 1]$, and we assume this measure has a density $f$. For two points $s, t \in X$, we let $z_{st}(\lambda) = \lambda s + (1 - \lambda) t$
(and use $z_{st}$ when $\lambda$ is understood) and define the Mixup cross-entropy on $s, t$ with respect to a classifier $g$ as:
\begin{align*}
        \ell_{mix}(g, s, t, \lambda) = \begin{cases}
                -\log g^i (z_{st}) & s, t \in X_i \\
                -\left(\lambda \log g^i (z_{st}) + (1 - \lambda) \log g^j (z_{st})\right) & s \in X_i, t \in X_j
        \end{cases}
\end{align*}

Having defined $\ell_{mix}$ as above, we may write the component of the full Mixup cross-entropy loss corresponding to mixing points from classes $i$ and $j$ as:
\begin{align*}
        J_{mix}^{i, j} (g, \mathbb{P}_X, \mathbb{P}_f) = \int_{X_i \times X_j \times [0, 1]} \ell_{mix}(g, s, t, \lambda) \ d(\mathbb{P}_X \times \mathbb{P}_X \times \mathbb{P}_f)(s, t, \lambda) 
\end{align*}

The final Mixup cross-entropy loss is then the sum of $J_{mix}^{i, j}$ over all $i, j \in \left\{1, ..., k\right\}$ (corresponding to all possible mixings between
classes, including themselves):

\begin{align*}
    J_{mix}(g, \mathbb{P}_X, \mathbb{P}_f) = \sum_{i = 1}^k \sum_{j = 1}^k J_{mix}^{i, j}(g, \mathbb{P}_X, \mathbb{P}_f)
\end{align*}

\textbf{Relation to Prior Work.} We have opted for a more general definition of the Mixup loss (at least when constrained to multi-class classification)
than prior works.
This is not generality for generality's sake, but rather because many of our results apply to any mixing distribution supported on $[0, 1]$.
One obtains the original Mixup formulation of \citet{zhang2018Mixup} for multi-class classification on a finite dataset by taking the $X_i$
to be finite sets, and choosing $\mathbb{P}_X$ to be the normalized counting measure (corresponding to a discrete uniform distribution).
Additionally, $\mathbb{P}_f$ is chosen to have density $\mathrm{Beta}(\alpha, \alpha)$, where $\alpha$ is a hyperparameter.

\subsection{Mixup-optimal Classifier} \label{subsec:closedform}

Given our setup, we now wish to characterize the behavior of a Mixup-optimal classifier at a point $x \in \mathbb{R}^n$. However, if the optimization of $J_{mix}$ is considered over the class of functions $\mathcal{C}$, this is intractable (to the best of our knowledge) due to the lack of regularity conditions imposed on functions in $\mathcal{C}$. We thus wish to constrain the optimization of $J_{mix}$ to a class of functions that is sufficiently powerful (so as to include almost all practical settings) while still allowing for local analysis. To do so, we will need the following definitions, which will also be referenced throughout the results in this section and the next:
\begin{align*}
        A_{x, \epsilon}^{i, j} &= \left\{(s, t, \lambda) \in X_i \times X_j \times [0, 1] : \ \lambda s + (1 - \lambda) t \in B_{\epsilon}(x)\right\} \\
        A_{x, \epsilon, \delta}^{i, j} &= \left\{(s, t, \lambda) \in X_i \times X_j \times [0, 1 - \delta] : \ \lambda s + (1 - \lambda) t \in B_{\epsilon}(x)\right\} \\
        X_{mix} &= \left\{x \in \mathbb{R}^n : \ \bigcup_{i, j} A_{x, \epsilon}^{i, j} \text{ has positive measure for every } \epsilon > 0\right\} \\
        \xi_{x, \epsilon}^{i, j} &= \int_{A_{x, \epsilon}^{i, j}} \ d(\mathbb{P}_X \times \mathbb{P}_X \times \mathbb{P}_f)(s, t, \lambda) \\
        \xi_{x, \epsilon, \lambda}^{i, j} &= \int_{A_{x, \epsilon}^{i, j}} \lambda \ d(\mathbb{P}_X \times \mathbb{P}_X \times \mathbb{P}_f)(s, t, \lambda)
\end{align*}
The set $A_{x, \epsilon}^{i, j}$ represents all points in $X_i \times X_j$ that have lines between them intersecting an $\epsilon$-neighborhood of $x$,
while the set $A_{x, \epsilon, \delta}^{i, j}$ represents the restriction of $A_{x, \epsilon}^{i, j}$ to only those points whose connecting line segments intersect an $\epsilon$-neighborhood of $x$ with $\lambda$ values bounded by $1 - \delta$ (used in Section \ref{mixupgen}).
The set $X_{mix}$ corresponds to all points for which every neighborhood factors into $J_{mix}$.
The $\xi_{x, \epsilon}^{i, j}$ term represents the measure of the set $A_{x, \epsilon}^{i, j}$ while $\xi_{x, \epsilon, \lambda}^{i, j}$ 
represents the expectation of $\lambda$ over the same set. To provide better intuition for these definitions, we provide visualizations in Section B of the appendix.
We can now define the subset of $\mathcal{C}$ to which we will constrain our optimization of $J_{mix}$.

\begin{restatable}{definition}{fclass}\label{def0}
Let $\mathcal{C}^*$ to be the subset of $\mathcal{C}$ for which every $h \in \mathcal{C}^*$ satisfies $h(x) = \lim_{\epsilon \to 0} \argmin_{\theta\in [0, 1]^k} J_{mix}(\theta)\vert_{B_{\epsilon}(x)}$ for all $x \in X_{mix}$ when the limit exists. Here $J_{mix}(\theta)\vert_{B_{\epsilon}(x)}$ represents the Mixup loss for a constant function with value $\theta$ with the restriction of each term in $J_{mix}$ to the set $A_{x, \epsilon}^{i, j}$. 
\end{restatable}

We immediately justify this definition with the following proposition.
\begin{restatable}{proposition}{goodclass}\label{prop0}
        Any function $h \in \argmin_{g \in \mathcal{C}^*} J_{mix}(g, \mathbb{P}_X, \mathbb{P}_f)$ satisfies $J_{mix}(h) \leq J_{mix}(g)$ for any continuous $g \in \mathcal{C}$.
\end{restatable}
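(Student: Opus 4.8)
The plan is to localize the global objective into a family of pointwise minimization problems and then argue that the defining condition of $\mathcal{C}^*$ selects exactly the pointwise minimizer at $\mu$-almost every mixing point, where $\mu$ is an appropriate ``mixing measure'' on $\mathbb{R}^n$.

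First I would exploit the fact that $\ell_{mix}(g, s, t, \lambda)$ depends on $g$ only through the single value $g(z_{st}(\lambda))$. This lets me push the product measure $\mathbb{P}_X \times \mathbb{P}_X \times \mathbb{P}_f$ (restricted to $X_i \times X_j \times [0,1]$) forward under the map $(s,t,\lambda) \mapsto (z_{st}(\lambda), \lambda)$, and then take the $x$-marginal to obtain a single measure $\mu$ on $\mathbb{R}^n$ whose support is exactly $X_{mix}$. Disintegrating along the fibers of this map and collecting the contributions of all class pairs gives a local loss $\mathcal{L}_x(\theta)$ — a nonnegative combination of cross-entropy terms $-\log \theta^i$ and $-(\lambda \log \theta^i + (1-\lambda)\log \theta^j)$ averaged against the conditional law of $(i,j,\lambda)$ given $x$ — with the property that $J_{mix}(g) = \int_{\mathbb{R}^n} \mathcal{L}_x(g(x)) \, d\mu(x)$ for every measurable $g \in \mathcal{C}$. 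The key structural point is that each $\mathcal{L}_x$ is strictly convex on the simplex $\{\theta \in [0,1]^k : \sum_i \theta^i = 1\}$, and hence has a unique minimizer.

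Next I would show that any $h \in \mathcal{C}^*$ coincides $\mu$-almost everywhere with the pointwise minimizer $x \mapsto \argmin_{\theta} \mathcal{L}_x(\theta)$. By definition, $J_{mix}(\theta)\vert_{B_\epsilon(x)}$ is the value at the constant function $\theta$ of the loss restricted to $\bigcup_{i,j} A_{x,\epsilon}^{i,j}$, which equals $\Lambda_\theta(B_\epsilon(x))$ for the set function $\Lambda_\theta(E) = \sum_{i,j} \int_{\{z_{st} \in E\}} \tilde{\ell}^{i,j}(\theta,\lambda)\, d(\mathbb{P}_X \times \mathbb{P}_X \times \mathbb{P}_f)$. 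Since $\argmin$ is invariant under dividing by the positive scalar $\mu(B_\epsilon(x))$, I would normalize and apply a Lebesgue-type differentiation theorem to conclude that $\mu(B_\epsilon(x))^{-1}\Lambda_\theta(B_\epsilon(x)) \to \mathcal{L}_x(\theta)$ for $\mu$-a.e.\ $x$. Uniqueness and stability of the minimizer of a strictly convex objective under such convergence then identify the limit of the argmins with $\argmin_{\theta}\mathcal{L}_x(\theta)$, which is precisely the value that the $\mathcal{C}^*$ condition assigns to $h(x)$.

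Finally I would conclude by pointwise comparison: for any $g$, $\mathcal{L}_x(g(x)) \geq \mathcal{L}_x(h(x))$ at $\mu$-a.e.\ $x$, so integrating against $\mu$ yields $J_{mix}(g) \geq J_{mix}(h)$. Because the pointwise-optimal assignment itself satisfies the defining condition of $\mathcal{C}^*$, the class is nonempty and every $h \in \argmin_{g \in \mathcal{C}^*} J_{mix}$ attains the common value $\int \min_\theta \mathcal{L}_x(\theta)\, d\mu$; in this route continuity of $g$ is not actually needed, though it is what enables an alternative, more elementary covering argument in which one approximates $g$ by its value $g(x)$ on each small ball. I expect the main obstacle to be the third step above: rigorously interchanging the limit $\epsilon \to 0$, the normalization, and the $\argmin$. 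This requires a differentiation theorem valid for the (possibly singular) measure $\mu$ on $\mathbb{R}^n$, the strict convexity of the cross-entropy to guarantee a unique and stable minimizer, and some care at the boundary of the simplex, where $\log \theta^i \to -\infty$ forces a separate integrability argument for coordinates that receive no local mass.
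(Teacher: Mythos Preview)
Your approach is correct but substantially different from the paper's. The paper argues directly from continuity of $g$: at any $x\in X_{mix}$ where the limit in Definition~\ref{def0} exists and $h(x)\neq g(x)$, continuity forces $g$ to be uniformly close to the constant $\theta_g=g(x)$ on a small ball, while by definition $\theta_{h_\delta}=\argmin_{\theta}J_{mix}(\theta)\vert_{B_\delta(x)}$ beats $\theta_g$ by a definite margin there; hence $J_{mix}(g)\vert_{B_\delta(x)}\ge J_{mix}(\theta_h)\vert_{B_\delta(x)}$, and one concludes by ranging over such $x$. This is exactly the ``elementary covering argument'' you allude to at the end of your proposal.

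Your route---pushing forward to a mixing measure $\mu$, disintegrating to a pointwise loss $\mathcal{L}_x$, and invoking a Besicovitch-type differentiation theorem to identify the $\mathcal{C}^*$ limit with $\argmin_\theta \mathcal{L}_x(\theta)$ $\mu$-a.e.---is more work but buys more: it yields $J_{mix}(h)\le J_{mix}(g)$ for \emph{any} measurable $g\in\mathcal{C}$, not just continuous ones, and it makes precise why the paper's local comparison can be glued into a global inequality. Two small caveats: $\mathcal{L}_x$ is not strictly convex on the full simplex (only on the face spanned by classes receiving positive conditional weight), though the minimizer is still unique once you set inactive coordinates to zero, exactly as in the proof of Lemma~\ref{lem}; and the interchange of $\argmin$ with the $\epsilon\to 0$ limit requires the differentiation theorem to hold simultaneously for a countable dense set of $\theta$, together with lower semicontinuity at the simplex boundary. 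You have correctly flagged both issues.
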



\textbf{Proof Sketch.} We can argue directly from definitions by considering points in $X_{mix}$ for which $h$ and $g$ differ.

Proposition \ref{prop0} demonstrates that optimizing over $\mathcal{C}^*$ is at least as good as optimizing over the subset of $\mathcal{C}$ consisting of continuous functions, so we cover most cases of practical interest (i.e. optimizing deep neural networks). As such, the term ``Mixup-optimal'' is intended to mean optimal with respect to $\mathcal{C}^*$ throughout the rest of the paper. We may now characterize the classification of a Mixup-optimal classifier on $X_{mix}$.



\begin{restatable}{lemma}{closedform}\label{lem}
        For any point $x \in X_{mix}$ and $\epsilon > 0$, there exists a continuous function $h_{\epsilon}$ satisfying:
        \begin{align}
                h_{\epsilon}^i(x) = \frac{\xi_{x, \epsilon}^{i, i} + \sum_{j \neq i} \big(\xi_{x, \epsilon, \lambda}^{i, j} + (\xi_{x, \epsilon}^{j, i} - \xi_{x, \epsilon, \lambda}^{j, i})\big)}{\sum_{q = 1}^k \bigg(\xi_{x, \epsilon}^{q, q} + \sum_{j \neq q} \big(\xi_{x, \epsilon, \lambda}^{q, j} + (\xi_{x, \epsilon}^{j, q} - \xi_{x, \epsilon, \lambda}^{j, q})\big)\bigg)} \label{eq_1}
        \end{align}

        With the property that $\lim_{\epsilon \to 0} h_{\epsilon}(x) = h(x)$ for every $h \in \argmin_{g \in \mathcal{C}^*} J_{mix}(g, \mathbb{P}_X, \mathbb{P}_f)$
        when the limit exists.
\end{restatable}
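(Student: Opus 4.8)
The plan is to reduce the lemma to an elementary convex minimization: compute the localized Mixup loss $J_{mix}(\theta)\vert_{B_\epsilon(x)}$ over constant functions, minimize it over the simplex, recognize the result as Equation~\eqref{eq_1}, and then read off the limiting statement directly from Definition~\ref{def0}.

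First I would expand $J_{mix}(\theta)\vert_{B_\epsilon(x)}$ for a constant function whose value is $\theta \in [0,1]^k$. Since $g \equiv \theta$, every factor $g^i(z_{st})$ in $\ell_{mix}$ becomes the constant $\theta^i$, so the restriction of each term of $J_{mix}$ to $A_{x,\epsilon}^{i,j}$ reduces to integrating a $\log\theta$-linear integrand against $\mathbb{P}_X \times \mathbb{P}_X \times \mathbb{P}_f$. By the definitions of $\xi_{x,\epsilon}^{i,j}$ and $\xi_{x,\epsilon,\lambda}^{i,j}$, the diagonal pair $(i,i)$ contributes $-\xi_{x,\epsilon}^{i,i}\log\theta^i$, and an off-diagonal pair $(i,j)$ with $i \neq j$ contributes $-\xi_{x,\epsilon,\lambda}^{i,j}\log\theta^i - (\xi_{x,\epsilon}^{i,j} - \xi_{x,\epsilon,\lambda}^{i,j})\log\theta^j$. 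Collecting the total coefficient of $-\log\theta^i$ over all pairs gives exactly $c_i := \xi_{x,\epsilon}^{i,i} + \sum_{j\neq i}\big(\xi_{x,\epsilon,\lambda}^{i,j} + (\xi_{x,\epsilon}^{j,i}-\xi_{x,\epsilon,\lambda}^{j,i})\big)$, the numerator of Equation~\eqref{eq_1}, so that $J_{mix}(\theta)\vert_{B_\epsilon(x)} = -\sum_{i} c_i \log\theta^i$.

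I would then minimize $-\sum_i c_i \log\theta^i$ over the simplex $\{\theta : \sum_i \theta^i = 1,\ \theta^i \geq 0\}$. This objective is convex, and a one-line Lagrange-multiplier (equivalently, Gibbs-inequality) computation gives the unique minimizer $\theta^i = c_i/\sum_q c_q$, which is precisely Equation~\eqref{eq_1}; coordinates with $c_i = 0$ are handled by the convention $0\cdot\log 0 = 0$ and set to $\theta^i = 0$. A short cancellation shows the denominator satisfies $\sum_q c_q = \sum_{i,j}\xi_{x,\epsilon}^{i,j}$ (the off-diagonal $\xi_{\cdot,\lambda}$ terms cancel in pairs), i.e. it is the total mass of $\bigcup_{i,j}A_{x,\epsilon}^{i,j}$; this is positive whenever $x \in X_{mix}$, so the formula is well defined there. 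Defining $h_\epsilon^i(x)$ by this ratio produces the asserted closed form.

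Finally, the limit statement follows immediately from Definition~\ref{def0}: by construction $\argmin_{\theta\in[0,1]^k}J_{mix}(\theta)\vert_{B_\epsilon(x)} = h_\epsilon(x)$, and the definition of $\mathcal{C}^*$ sets $h(x) = \lim_{\epsilon\to 0}\argmin_{\theta}J_{mix}(\theta)\vert_{B_\epsilon(x)}$ whenever that limit exists, so $\lim_{\epsilon\to 0}h_\epsilon(x) = h(x)$ for every $h \in \argmin_{g\in\mathcal{C}^*}J_{mix}$. The step I expect to demand the most care is verifying that $h_\epsilon$ is genuinely continuous in $x$ for each fixed $\epsilon$, so that it is a legitimate object to compare against $\mathcal{C}^*$. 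This reduces to continuity of $\xi_{x,\epsilon}^{i,j}$ and $\xi_{x,\epsilon,\lambda}^{i,j}$ in $x$: writing each as an integral of $\mathbf{1}[\lambda s + (1-\lambda)t \in B_\epsilon(x)]$ (times $\lambda$), dominated convergence gives continuity provided the boundary spheres $\{(s,t,\lambda): \|\lambda s + (1-\lambda)t - x\| = \epsilon\}$ carry no mass, and since $\sum_q c_q$ stays bounded away from $0$ near any $x\in X_{mix}$, the ratio defining $h_\epsilon$ inherits this continuity (and may be extended continuously off the region where the denominator is positive).
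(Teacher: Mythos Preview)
Your proposal is correct and follows essentially the same approach as the paper: expand the localized loss $J_{mix}(\theta)\vert_{B_\epsilon(x)}$ for constant functions, collect the $-\log\theta^i$ coefficients into the $c_i$, minimize over the simplex via Lagrange multipliers/strict convexity to obtain Equation~\eqref{eq_1}, and read off the limit statement from Definition~\ref{def0}. The one place where you diverge slightly is the continuity argument: the paper establishes continuity of $h_\epsilon$ on $X_{mix}$ by a sequential compactness argument (any limit point of $h_\epsilon(x_m)$ must be the unique minimizer of the limiting problem) and then invokes the Tietze extension theorem to extend to all of $\mathbb{R}^n$, whereas you argue directly that the $\xi$-integrals are continuous in $x$ via dominated convergence, conditional on the boundary spheres carrying no $(\mathbb{P}_X\times\mathbb{P}_X\times\mathbb{P}_f)$-mass. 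Your route is more explicit about the underlying regularity condition but introduces a mild hypothesis the paper's abstract argument sidesteps; in practice both are fine, and the Tietze step you allude to at the end is exactly what the paper uses to complete the construction.
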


\textbf{Proof Sketch.} We set $h_{\epsilon} = \argmin_{\theta\in [0, 1]^k} J_{mix}(\theta)\vert_{B_{\epsilon}(x)}$ and show that this is well-defined, continuous, and has the above form using the strict convexity of the minimization problem.

\begin{remark}\label{rem}
For the important case of finite datasets, it will be shown that the limit above always exists as part of the proof of Theorem \ref{theo4}.
\end{remark}


The expression for $h_{\epsilon}^i$ just represents the expected location of the point $x$ on all lines between class $i$ and other classes, normalized by the sum of the expected locations for all classes. It can be simplified significantly if $\mathbb{P}_f$ is assumed to be symmetric; we give this as a corollary after the proof in Section C of the Appendix. Importantly, we note that while $h_{\epsilon}$ as defined in Lemma \ref{lem} is continuous for every $\epsilon > 0$, its pointwise limit $h$ need not be, which we demonstrate below.

\begin{restatable}{proposition}{discontexample}\label{prop}
        Let $X_1 = \left\{(0, 1), (0, -1)\right\}$ and let $X_2 = \left\{(1, 0), (-1, 0)\right\}$, with $\mathbb{P}_X$ being discrete
        uniform over $X_1 \cup X_2$ and $\mathbb{P}_{f}$ being continuous uniform over $[0, 1]$.
        Then the Mixup-optimal classifier $h$ is discontinuous at $(0, 0)$.
\end{restatable}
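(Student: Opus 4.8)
The plan is to apply the closed-form characterization of Lemma \ref{lem} at the origin and at points approaching it along the $y$-axis, and to exhibit a mismatch in the resulting values of $h^1$. The whole argument rests on first cataloguing, for each candidate point $x$ near the origin, which of the sets $A^{i,j}_{x,\epsilon}$ carry positive measure as $\epsilon \to 0$, since this controls which $\xi$-terms survive in \eqref{eq_1}.

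First I would determine the segments that come near the origin. The four points give two same-class ``diameter'' segments \--- the segment joining $(0,-1)$ and $(0,1)$ within $X_1$ (lying on the $y$-axis) and the segment joining $(-1,0)$ and $(1,0)$ within $X_2$ (lying on the $x$-axis) \--- both of which pass through $(0,0)$ at $\lambda = 1/2$. A direct computation shows that every cross-class segment (e.g.\ the one joining $(0,1)$ and $(1,0)$) lies on a line at distance $1/\sqrt{2}$ from the origin. Consequently, once $\abs{x}$ and $\epsilon$ are both small enough (any bound below $1/(2\sqrt{2})$ works), $B_\epsilon(x)$ meets only the two same-class diameter segments, so $\xi^{1,2}_{x,\epsilon} = \xi^{2,1}_{x,\epsilon} = 0$ and the only terms that can be nonzero in \eqref{eq_1} are $\xi^{1,1}_{x,\epsilon}$ and $\xi^{2,2}_{x,\epsilon}$ (the $\lambda$-weighted cross terms are integrals over the same null sets and hence also vanish).

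Next I would evaluate \eqref{eq_1} at $x = (0,0)$. The ball $B_\epsilon(0,0)$ meets the $y$-axis segment on a $\lambda$-interval centered at $1/2$ and, symmetrically, meets the $x$-axis segment on a $\lambda$-interval centered at $1/2$, so $\xi^{1,1}_{0,\epsilon}$ and $\xi^{2,2}_{0,\epsilon}$ equal the same quantity for every small $\epsilon$. This can be read off directly from the uniform $\mathbb{P}_X$ and $\mathbb{P}_f$, or more cleanly from the $90^\circ$ rotation about the origin, which swaps $X_1$ and $X_2$ while preserving both measures and the line parametrization. Since the cross terms vanish, \eqref{eq_1} collapses to $h^1_\epsilon(0,0) = \xi^{1,1}_{0,\epsilon}/(\xi^{1,1}_{0,\epsilon}+\xi^{2,2}_{0,\epsilon}) = 1/2$, hence $h^1(0,0) = 1/2$.

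Finally I would evaluate \eqref{eq_1} along the $y$-axis. For $x = (0,c)$ with $c \neq 0$ small and $\epsilon < \abs{c}$, the ball $B_\epsilon(0,c)$ meets the $y$-axis segment (on a $\lambda$-interval centered at $(1+c)/2$) but is too far from the $x$-axis segment, so $\xi^{2,2}_{x,\epsilon} = 0$ while $\xi^{1,1}_{x,\epsilon} > 0$; then \eqref{eq_1} gives $h^1_\epsilon(0,c) = \xi^{1,1}_{x,\epsilon}/\xi^{1,1}_{x,\epsilon} = 1$, so $h^1(0,c) = 1$. Letting $c \to 0$ yields points converging to the origin along which $h^1 \equiv 1 \neq 1/2 = h^1(0,0)$, establishing discontinuity at $(0,0)$; existence of the relevant $\epsilon \to 0$ limits is guaranteed by Remark \ref{rem}. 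I expect the only delicate step to be the bookkeeping in the first paragraph \--- pinning down exactly which $A^{i,j}_{x,\epsilon}$ have positive measure as $x$ and $\epsilon$ vary \--- since the collapse of \eqref{eq_1} hinges entirely on the cross-class terms being identically zero throughout a neighborhood of the origin.
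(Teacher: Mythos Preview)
Your proposal is correct and follows essentially the same approach as the paper's proof: both compute $h^1(0,0)=1/2$ from the equality of the two same-class $\xi$ terms at the origin (you invoke the $90^\circ$ rotational symmetry where the paper computes $\xi^{1,1}_{0,\epsilon}=\xi^{2,2}_{0,\epsilon}=\epsilon/4$ directly), and both then observe that at nearby points on the $y$-axis only $\xi^{1,1}$ survives, giving $h^1=1$ there. Your bookkeeping on the cross-class segments (distance $1/\sqrt{2}$ from the origin) is slightly more explicit than the paper's, but the argument is the same.
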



\textbf{Proof Sketch.} One may explicitly compute for $x = (0, 0)$ that $h^1(x) = h^2(x) = \frac{1}{2}$.

Proposition \ref{prop} illustrates our first significant difference between Mixup training and standard training: there \textit{always} exists a minimizer of the empirical cross-entropy $J$ that can be extended to a continuous function (since a minimizer is constant on the class supports and not constrained elsewhere), whereas depending on the data the minimizer of $J_{mix}$ can be discontinuous. 

\subsection{A Mixup Failure Case}\label{subsec:failcase}
With that in mind, several model classes popular in practical applications consist of continuous functions. For example, neural networks with ReLU activations
are continuous, and several works have noted that they are Lipschitz continuous with shallow networks having approximately small Lipschitz constant \citep{scaman2019lipschitz, fazlyab2019efficient, latorre2020lipschitz}.
Given the regularity of such models, we are motivated to consider the continuous approximations $h_{\epsilon}$ in Lemma \ref{lem} and see if it is possible to construct a dataset on which $h_{\epsilon}$ (for a fixed $\epsilon$) can fail to classify the original points correctly.
We thus consider the following dataset:
\begin{restatable}{definition}{ncal}[3-Point Alternating Line]\label{ncal}
        We define $\mathcal{X}_3^2$ to be the binary classification dataset consisting of the points $\left\{0, 1, 2\right\}$ classified as $\left\{1, 2, 1\right\}$.
        In our setup, this corresponds to $X_1 = \left\{0, 2\right\}$ and $X_2 = \left\{1\right\}$ with $\mathbb{P}_X = \frac{1}{3} 1_{\left\{0, 1, 2\right\}}$. 
\end{restatable}

Intuitively, the reason why Mixup can fail on $\mathcal{X}_3^2$ is that, for choices of $\mathbb{P}_f$ that concentrate about $\frac{1}{2}$, we will have
by Lemma \ref{lem} that the Mixup-optimal classification in a neighborhood of point 1 should skew towards class 1 instead of class 2 due to the sandwiching
of point 1 between points 0 and 2. The canonical choice of $\mathbb{P}_f$ corresponding to a mixing density of $\mathrm{Beta}(\alpha, \alpha)$ is one such choice:

\begin{restatable}{theorem}{ncaltheorem}\label{theo}
        Let $\mathbb{P}_f$ have associated density $\text{Beta}(\alpha, \alpha)$. 
        Then for any classifier $h_{\epsilon}$ on $\mathcal{X}_3^2$ (as defined in Lemma \ref{lem}),
        we may choose $\alpha$ such that $h_{\epsilon}$ does not achieve 0 classification error on $\mathcal{X}_3^2$.
\end{restatable}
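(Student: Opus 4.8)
The plan is to evaluate the closed form of Lemma~\ref{lem} at the middle point $x = 1$, whose true label is class $2$, and to show that for large $\alpha$ we have $h_\epsilon^1(1) > h_\epsilon^2(1)$. This forces $h_\epsilon$ to classify $1$ as class $1$, so $h_\epsilon$ misclassifies a training point and hence does not achieve $0$ error; since the theorem only asks for nonzero error, analyzing $x=1$ suffices. Throughout I take $\epsilon \in (0,1)$, which is the relevant regime (the Mixup-optimal classifier is the $\epsilon \to 0$ limit of $h_\epsilon$); for $\epsilon \ge 1$ one must also include the self-pairs at $0$ and $2$, which contribute only class-$1$ mass and reinforce the conclusion.

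First I would reduce Equation~\eqref{eq_1} to a finite sum. Since $\mathbb{P}_X \times \mathbb{P}_X$ places mass $\tfrac19$ on each of the nine ordered pairs drawn from $\{0,1,2\}$, I determine, for each pair $(s,t)$, the set of $\lambda \in [0,1]$ for which $\lambda s + (1-\lambda)t \in B_\epsilon(1)$. Only three families contribute: the same-class pairs $(0,2),(2,0)$ pass through $1$ at $\lambda = \tfrac12$ (giving $\lambda \in (\tfrac{1-\epsilon}{2}, \tfrac{1+\epsilon}{2})$); the diagonal pair $(1,1)$ sits at $1$ for all $\lambda$; and the cross pairs $(0,1),(2,1)$ (resp.\ $(1,0),(1,2)$) approach $1$ as $\lambda \to 0$ (resp.\ $\lambda \to 1$), giving $\lambda \in [0,\epsilon)$ (resp.\ $\lambda \in (1-\epsilon,1]$). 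Writing $F_\epsilon = \int_{(1-\epsilon)/2}^{(1+\epsilon)/2} f$, $G_\epsilon = \int_0^\epsilon f$, and $H_\epsilon = \int_0^\epsilon \lambda f$, and using the symmetry $f(\lambda) = f(1-\lambda)$ of the $\mathrm{Beta}(\alpha,\alpha)$ density to evaluate the $\lambda \to 1$ integrals (so that $\int_{1-\epsilon}^1 f = G_\epsilon$ and $\int_{1-\epsilon}^1 \lambda f = G_\epsilon - H_\epsilon$), the numerators in \eqref{eq_1} at $x=1$ become $\tfrac29 F_\epsilon + \tfrac49 H_\epsilon$ for class $1$ and $\tfrac19 + \tfrac49(G_\epsilon - H_\epsilon)$ for class $2$.

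Comparing these numerators (the denominators coincide), the misclassification condition $h_\epsilon^1(1) > h_\epsilon^2(1)$ is equivalent to
\begin{equation*}
2 F_\epsilon - 4\left(G_\epsilon - 2 H_\epsilon\right) > 1 .
\end{equation*}
I would then take $\alpha \to \infty$, so that $\mathrm{Beta}(\alpha,\alpha)$ concentrates at $\tfrac12$. This immediately gives $F_\epsilon \to 1$, since the interval $(\tfrac{1-\epsilon}{2}, \tfrac{1+\epsilon}{2})$ contains $\tfrac12$ in its interior. For the remaining term, the clean observation is that $G_\epsilon - 2H_\epsilon = \int_0^\epsilon (1 - 2\lambda) f(\lambda)\, d\lambda$ and the factor $(1 - 2\lambda)$ vanishes exactly at the concentration point $\tfrac12$; hence this integral tends to $0$ whether or not $\tfrac12$ lies in $[0,\epsilon)$. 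Thus the left-hand side tends to $2 > 1$, so for all sufficiently large $\alpha$ the inequality holds and point $1$ is misclassified.

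The bookkeeping of the nine pairs and their $\lambda$-intervals is the most error-prone step, but it is mechanical. The genuinely delicate point is the $\epsilon$-dependence of the limits: for $\epsilon > \tfrac12$ the cross pairs' intervals $[0,\epsilon)$ and $(1-\epsilon,1]$ also capture the mass accumulating at $\tfrac12$, so $G_\epsilon$ and $H_\epsilon$ no longer vanish individually. The identity $G_\epsilon - 2H_\epsilon = \int_0^\epsilon (1-2\lambda) f$ is what makes the argument uniform in $\epsilon$, since its integrand is zero precisely where the mass concentrates; this is the key step that needs care, after which the conclusion follows directly from concentration of the Beta distribution.
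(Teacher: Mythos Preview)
Your argument is correct and follows the same strategy as the paper's proof: evaluate the closed form of Lemma~\ref{lem} (equivalently Corollary~\ref{symlem}) at the point $x=1$ and use concentration of $\mathrm{Beta}(\alpha,\alpha)$ at $\tfrac12$ to force $h_\epsilon^1(1)>\tfrac12$. The paper obtains an explicit threshold on $\alpha$ via the strict subgaussianity of $\mathrm{Beta}(\alpha,\alpha)$ and reduces the comparison directly to $\xi_{1,\epsilon}^{1,1}>\xi_{1,\epsilon}^{2,2}$, whereas you argue asymptotically and handle the cross terms through the identity $G_\epsilon-2H_\epsilon=\int_0^\epsilon(1-2\lambda)f$; these are minor variations on the same idea.
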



\textbf{Proof Sketch.} For any $\epsilon > 0$, we can bound the $\xi$ terms in Equation \ref{eq_1} using the fact that $\mathrm{Beta}(\alpha, \alpha)$ is strictly subgaussian \citep{marchal}, and then choose $\alpha$ appropriately.

\textbf{Experiments.} The result of Theorem \ref{theo} leads us to believe that the Mixup training of a continuous model should fail on $\mathcal{X}_3^2$ for appropriately chosen
$\alpha$. To verify that the theory predicts the experiments, we train a two-layer feedforward neural network with 512 hidden units and ReLU activations on
$\mathcal{X}_3^2$ with and without Mixup.
The implementation of Mixup training does not differ from the theoretical setup; we uniformly sample pairs of data points and train on their mixtures. Our implementation uses PyTorch \citep{pytorch} and is based heavily on the open source implementation of Manifold Mixup \citep{verma2019manifold} by Shivam Saboo.
Results for training using (full-batch) Adam \citep{adam} with the suggested (and common) hyperparameters of $\beta_1 = 0.9, \beta_2 = 0.999$ 
and a learning rate of $0.001$ are shown in Figure \ref{fig1}.
The class 1 probabilities for each point in the dataset outputted by the learned Mixup classifiers from Figure \ref{fig1} are shown in Table \ref{tab1} below:

\begin{figure*}[t]
       \begin{subfigure}[t]{0.33\textwidth}
              \centering
              \includegraphics[width=\textwidth]{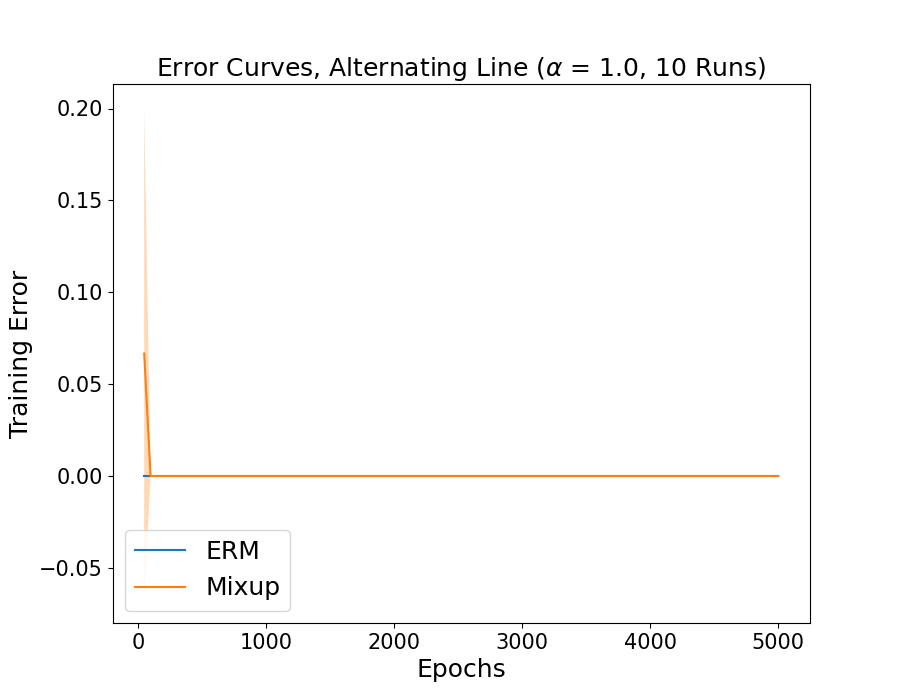}
              \caption{$\alpha = 1$}
       \end{subfigure}%
       \begin{subfigure}[t]{0.33\textwidth}
              \centering
              \includegraphics[width=\textwidth]{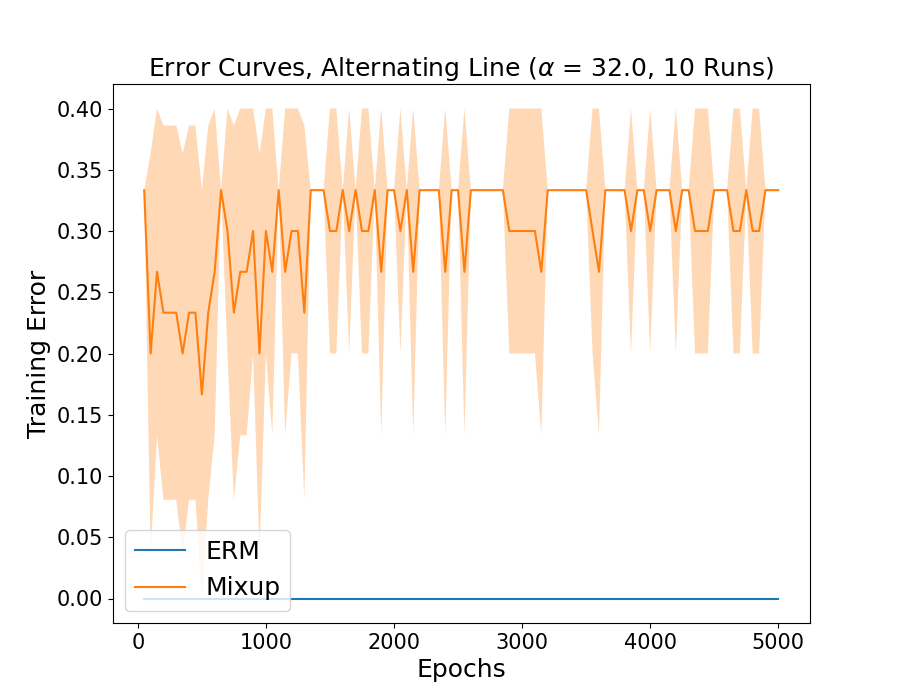}
              \caption{$\alpha = 32$}
       \end{subfigure}%
       \begin{subfigure}[t]{0.33\textwidth}
              \centering
              \includegraphics[width=\textwidth]{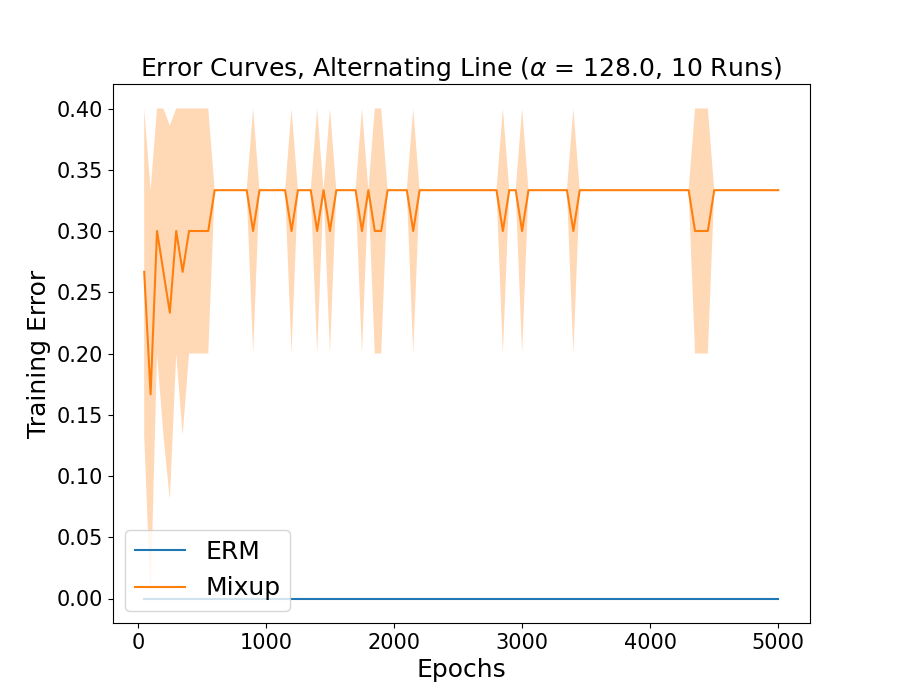}
              \caption{$\alpha = 128$}
       \end{subfigure}
       \caption{Training error for Mixup and regular training on $\mathcal{X}_3^2$. Each curve corresponds to the mean of 10 training runs, and
       the area around each curve represents a region of one standard deviation.}
       \vspace{-4mm}
       \label{fig1}
       
\end{figure*}

\begin{table*}[ht!]
        \centering
        \begin{tabular}{|c|c|c|c|}
                \hline
                $h$ & 0 & 1 & 2 \\
                \hline
                $\alpha = 1$ & 0.995 & 0.156 & 0.979 \\
                $\alpha = 32$ & 1.000 & 0.603 & 0.997 \\
                $\alpha = 128$ & 1.000 & 0.650 & 0.997 \\
                \hline
        \end{tabular}
        \caption{Mixup model evaluations on $\mathcal{X}_3^2$ for different choices of $\alpha$.}
        \label{tab1}
        \vspace{-2mm}
\end{table*}

We see from Figure \ref{fig1} and Table \ref{tab1} that Mixup training fails to correctly classify the points in $\mathcal{X}_3^2$ for $\alpha = 32$, and this misclassification becomes more exacerbated as we increase $\alpha$. The choice of $\alpha$ for which misclassifications begin to happen is largely superficial; we show in Section D of the Appendix that it is straightforward to construct datasets in the style of $\mathcal{X}_3^2$ for which Mixup training will fail even for the very mild choice of $\alpha = 1$. We focus on the case of $\mathcal{X}_3^2$ here to simplify the theory. The key takeaway is that, for datasets that exhibit (approximately) collinear structure amongst points, it is possible for inappropriately chosen mixing distributions to cause Mixup training to fail to minimize the original empirical risk.

\subsection{Sufficient Conditions for Minimizing the Original Risk}\label{subsec:gooderm}
The natural follow-up question to the results of the previous subsection is:
under what conditions on the data can this failure case be avoided? In other words, when can the Mixup-optimal classifier classify the original data points correctly while being continuous at those points?

Prior to answering that question, we first point out that if discontinuous functions are allowed,
then Mixup training always minimizes the original risk on finite datasets:
\begin{restatable}{proposition}{discontprop}\label{prop2}
        Consider $k$-class classification where the supports $X_1, ..., X_k$ are finite and $\mathbb{P}_X$ corresponds to the discrete uniform distribution.
        Then for every $h \in \argmin_{g \in \mathcal{C}^*} J_{mix}(g, \mathbb{P}_X, \mathbb{P}_f)$, we have that $h^i(x) = 1$ on $X_i$.
\end{restatable}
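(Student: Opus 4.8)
The plan is to evaluate the closed form of Lemma~\ref{lem} at an original data point $x_0 \in X_i$ and take the limit $\epsilon \to 0$, showing that a single term dominates. First I would observe that, because $X = \bigcup_\ell X_\ell$ is finite with $\mathbb{P}_X$ uniform (say $N = \abs{X}$ points, each of mass $1/N$), every quantity $\xi_{x_0,\epsilon}^{p,q}$ and $\xi_{x_0,\epsilon,\lambda}^{p,q}$ splits as a finite sum over pairs $(s,t) \in X_p \times X_q$, whose $(s,t)$-summand is $\tfrac{1}{N^2}$ times the $\mathbb{P}_f$-measure (respectively the $\lambda$-weighted $\mathbb{P}_f$-measure) of $\{\lambda : \lambda s + (1-\lambda)t \in B_\epsilon(x_0)\}$. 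The key structural fact is that $x_0 \in X_i$ appears as a \emph{self-pair} $(x_0,x_0)$ only inside $X_i \times X_i$, and for this pair every $\lambda$ satisfies $\lambda x_0 + (1-\lambda)x_0 = x_0 \in B_\epsilon(x_0)$; hence it contributes exactly $\tfrac{1}{N^2}$ to $\xi_{x_0,\epsilon}^{i,i}$ for \emph{every} $\epsilon > 0$. In particular this already shows $x_0 \in X_{mix}$, so Lemma~\ref{lem} applies at $x_0$.

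The second step is to show that all remaining summands vanish as $\epsilon \to 0$. Fix a pair $(s,t) \neq (x_0, x_0)$. If $s = t \neq x_0$ the mixture is the constant point $s$, which lies outside $B_\epsilon(x_0)$ once $\epsilon < \norm{s - x_0}$, so its $\lambda$-set is eventually empty. If $s \neq t$, then $\lambda \mapsto \norm{\lambda s + (1-\lambda)t - x_0}$ is the norm of a nonconstant affine function of $\lambda$, hence convex with a unique minimizer, so $I_\epsilon = \{\lambda : \lambda s + (1-\lambda)t \in B_\epsilon(x_0)\}$ is an interval that shrinks either to the empty set (if $x_0$ is off the line through $s,t$) or to the single point $\lambda^\ast$ with $\lambda^\ast s + (1-\lambda^\ast)t = x_0$ (which is $0$, $1$, or an accidental-collinearity parameter). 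Since $\mathbb{P}_f$ has a density it is non-atomic, so by continuity from above $\mathbb{P}_f(I_\epsilon) \to \mathbb{P}_f(\{\lambda^\ast\}) = 0$, and the same bound controls the $\lambda$-weighted version because $\lambda \in [0,1]$. Hence every summand other than the self-pair tends to $0$.

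Putting these together, as $\epsilon \to 0$ the numerator of $h_\epsilon^i(x_0)$ in \eqref{eq_1} tends to $\tfrac{1}{N^2}$ (the self-pair contribution to $\xi_{x_0,\epsilon}^{i,i}$, with every cross term $\xi^{i,j},\xi^{j,i}$ vanishing), and the denominator likewise tends to $\tfrac{1}{N^2}$, since only the $q=i$ block survives: no self-pair lies in $X_q \times X_q$ for $q \neq i$ by disjointness of the supports. Hence $\lim_{\epsilon \to 0} h_\epsilon^i(x_0) = 1$; in particular the limit exists, which also discharges the claim deferred in Remark~\ref{rem}, and by Lemma~\ref{lem} any $h \in \argmin_{g \in \mathcal{C}^\ast} J_{mix}$ satisfies $h^i(x_0) = 1$.

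I expect the main obstacle to be the vanishing-of-collinear-contributions step, precisely because it is where the failure mechanism of Section~\ref{subsec:failcase} is quarantined: on $\mathcal{X}_3^2$ the collinear pair of class-$1$ points contributes substantial class-$1$ weight near the class-$2$ point for any \emph{fixed} $\epsilon$, and it is only the non-atomicity of $\mathbb{P}_f$ that drives this weight to $0$ in the exact limit. Care is therefore needed to argue that the interval-length estimate together with continuity from above really kills these terms across the finitely many offending pairs, and to track that the surviving self-pair sits in the numerator for the correct class $i$.
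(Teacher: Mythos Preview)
Your proposal is correct and follows essentially the same approach as the paper: both argue via Lemma~\ref{lem} that, as $\epsilon \to 0$, the self-pair $(x_0,x_0)$ contributes a constant $1/N^2$ to $\xi_{x_0,\epsilon}^{i,i}$ while every other pair's contribution vanishes by non-atomicity of $\mathbb{P}_f$, forcing $h_\epsilon^i(x_0) \to 1$. Your version is in fact more carefully argued than the paper's (which simply asserts the limits of the $\xi$ terms), in particular your explicit case split on $s=t\neq x_0$ versus $s\neq t$ and the appeal to continuity from above.
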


\textbf{Proof Sketch.} Only the $\xi_{x, \epsilon}^{i, i}$ term doesn't vanish in $h_{\epsilon}^i(x)$ as $\epsilon \to 0$, as the mixing distribution is continuous and cannot assign positive measure to $x$ alone when mixing two points that are not $x$.

Note that Proposition \ref{prop2} holds for \textbf{any} continuous mixing distribution $\mathbb{P}_f$ supported on $[0, 1]$ - we just need a rich enough model class.

In order to obtain the result of Proposition \ref{prop2} with the added restriction of continuity of $h$ on each of the $X_i$, we need to further assume that the collinearity of different class points that occurred in the previous section does not happen.

\begin{restatable}{assumption}{nocol}\label{as1}
For any point $x \in X_i$, there do not exist $u \in X$ and $v \in X_j$ for $j \neq i$ such that there is a $\lambda > 0$ for which $x = \lambda u + (1 - \lambda) v$.
\end{restatable}

A visualization of Assumption \ref{as1} is provided in Section B of the appendix. With this assumption in hand, we obtain the following result as a corollary of Theorem \ref{theo4} which is proved in the next section:
\begin{restatable}{theorem}{conttheorem}\label{theo2}
        We consider the same setting as Proposition \ref{prop2} and further suppose that Assumption \ref{as1} is satisfied. 
        Then for every $h \in \argmin_{g \in \mathcal{C}^*} J_{mix}(g, \mathbb{P}_X, \mathbb{P}_f)$, we have that $h^i(x) = 1$ on $X_i$ and that $h$
        is continuous on $X$.
\end{restatable}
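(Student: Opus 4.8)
The plan is to treat the two conclusions separately. The claim $h^i(x) = 1$ on $X_i$ is already the content of Proposition \ref{prop2}, which holds in exactly this setting (finite supports, discrete uniform $\mathbb{P}_X$) for any continuous $\mathbb{P}_f$, so Assumption \ref{as1} is needed only for continuity. I read ``$h$ is continuous on $X$'' as the assertion that $h$ is continuous at each $x \in X_i$, where, using Remark \ref{rem} (well-definedness of the limit in Lemma \ref{lem} for finite datasets), the nearby values $h(y)$ are taken along $X_{mix}$, the only points the minimizer actually constrains. I would therefore fix $x \in X_i$ and show $h(y) \to e_i$ (the $i$-th standard basis vector) as $y \to x$ within $X_{mix}$.

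First I would reduce the problem to a purely local, finite one. For a finite dataset the pushforward of $\mathbb{P}_X \times \mathbb{P}_X \times \mathbb{P}_f$ under $(s,t,\lambda) \mapsto z_{st}(\lambda)$ is supported on the finite union of segments $[a,b]$ with $a,b \in X$, and this union is essentially $X_{mix}$. Since there are finitely many segments and each segment not containing $x$ is a compact set avoiding $x$, their distances to $x$ have a positive minimum $d > 0$. Hence for $|y - x| < d/2$ and $\epsilon \le d/2$ the ball $B_\epsilon(y)$ meets only segments passing through $x$, so in the closed form \eqref{eq_1} every $\xi_{y,\epsilon}^{p,q}$ receives contributions solely from segments through $x$.

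Next I would classify those segments using Assumption \ref{as1}. If $x$ were strictly interior to a segment $[a,b]$, i.e. $x = \lambda a + (1-\lambda) b$ with $\lambda \in (0,1)$, then applying Assumption \ref{as1} with $(u,v) = (a,b)$ and with $(u,v) = (b,a)$ forces both $a \in X_i$ and $b \in X_i$; thus every interior segment through $x$ is same-class and feeds only $\xi_{y,\epsilon}^{i,i}$, i.e. purely class $i$. The remaining segments through $x$ have $x$ as an endpoint, $[x,b]$; for these, the part inside $B_\epsilon(y)$ with $y \to x$ sits near the $x$-endpoint, where the Mixup label places weight $\to 1$ on class $i$ regardless of the class of $b$. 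Concretely, in \eqref{eq_1} such a segment enters $h^i$ through $\xi_{y,\epsilon,\lambda}^{i,\mathrm{class}(b)}$ (ordering $(x,b)$, with $\lambda$-weight $\to 1$) or through $\xi_{y,\epsilon}^{\mathrm{class}(b),i} - \xi_{y,\epsilon,\lambda}^{\mathrm{class}(b),i}$ (ordering $(b,x)$, with complementary weight $\to 1$), while its contribution to $h^j$ for $j \ne i$ carries the vanishing weight. Substituting these into \eqref{eq_1} and letting first $\epsilon \to 0$ and then $y \to x$ gives $h^i(y) \to 1$ and $h^j(y) \to 0$ for $j \ne i$, which together with Proposition \ref{prop2} yields the theorem.

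The main obstacle is the interchange of the two limits: $h$ is defined pointwise as $\epsilon \to 0$, whereas continuity requires control as the base point $y \to x$, and one must show that the cross-class $\xi$-weights near an endpoint are genuinely negligible relative to the same-class mass before both limits are taken. This bookkeeping is cleanest to organize once a complete description of the closed form \eqref{eq_1} for finite datasets is in hand, which is why the statement is phrased as a corollary of Theorem \ref{theo4}: I would first prove the general finite-dataset characterization (which also supplies the existence of the limit promised in Remark \ref{rem}) and then read off Theorem \ref{theo2} by specializing to the geometry permitted by Assumption \ref{as1}. A secondary point to handle explicitly is that $h$ is unconstrained off $X_{mix}$, so the continuity conclusion should be, and I would state it as, continuity of the minimizer along $X_{mix}$ at the points of $X$.
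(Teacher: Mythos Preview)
Your proposal is correct and matches the paper's approach: the paper's entire proof of Theorem~\ref{theo2} is the single line ``Obtained as a corollary of Theorem~\ref{theo4},'' together with the remark in the main text that Assumption~\ref{as1} implies Assumption~\ref{as2} at every $x\in X_i$. Your segment classification (interior segments through $x$ must be same-class by applying Assumption~\ref{as1} in both orderings; endpoint segments $[x,b]$ carry $\lambda\to 1$ weight on class $i$) is exactly the verification of Assumption~\ref{as2} spelled out, and your separate invocation of Proposition~\ref{prop2} for the $h^i(x)=1$ conclusion is correct and implicit in the paper (Theorem~\ref{theo4} alone only yields $h^i(x)\ge 1/2$).
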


\textbf{Application of Sufficient Conditions.} 
\begin{figure*}[t]
       \begin{subfigure}[t]{0.33\textwidth}
              \centering
              \includegraphics[width=\textwidth]{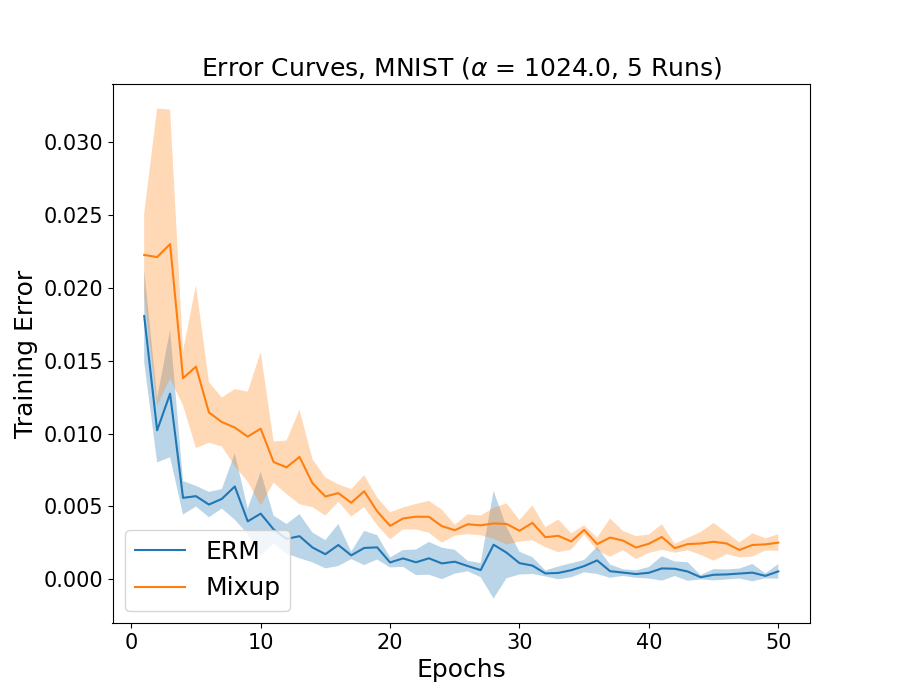}
              \caption{MNIST}
       \end{subfigure}%
       \begin{subfigure}[t]{0.33\textwidth}
              \centering
              \includegraphics[width=\textwidth]{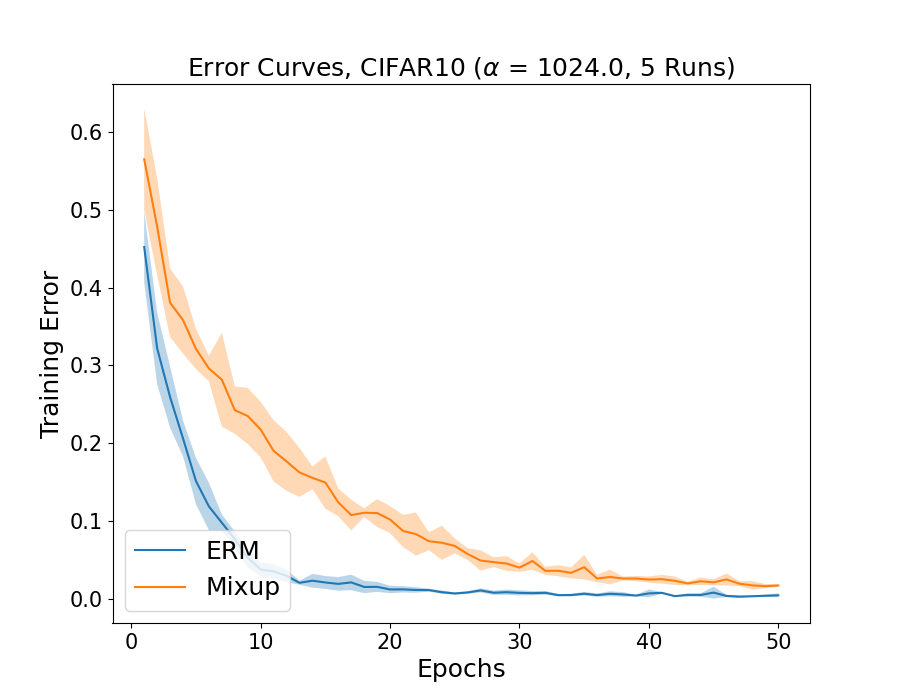}
              \caption{CIFAR-10}
       \end{subfigure}%
       \begin{subfigure}[t]{0.33\textwidth}
              \centering
              \includegraphics[width=\textwidth]{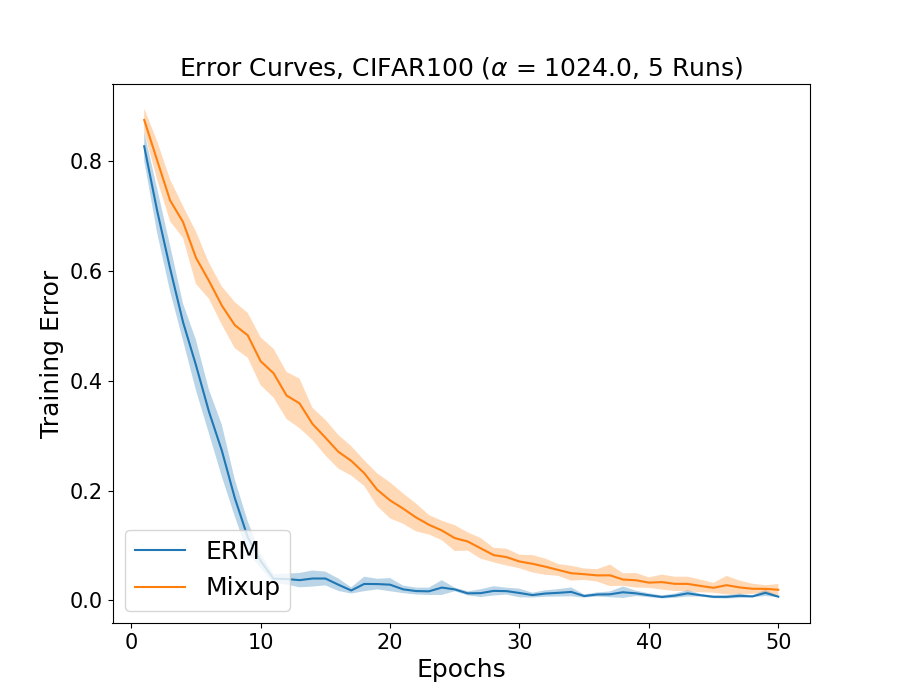}
              \caption{CIFAR-100}
       \end{subfigure}
       \caption{Mean and single standard deviation of 5 training runs for Mixup ($\alpha = 1024$) and ERM on the \textit{original training data}. Mixup achieves near-identical (within 1\%) training accuracy to ERM.}
       \label{fig2}
       \vspace{-5mm}
\end{figure*}
The practical take-away of Theorem \ref{theo2} is that if a dataset does not exhibit approximate collinearity between points of different classes, then Mixup training should achieve near-identical training error on the \textit{original training data} when compared to ERM. We validate this by training ResNet-18 \citep{resnet} (using the popular implementation of Kuang Liu) on MNIST \citep{lecun1998mnist}, CIFAR-10, and CIFAR-100 \citep{cifar} with and without Mixup for 50 epochs with a batch size of 128 and otherwise identical settings to the previous subsection. For Mixup, we consider mixing using $\mathrm{Beta}(\alpha, \alpha)$ for $\alpha = 1, 32, 128, \text{ and } 1024$ to cover a wide band of mixing distributions. Our experimental results for the ``worst case'' of $\alpha = 1024$ (the other choices are strictly closer to ERM in training accuracy) are shown in Figure \ref{fig2}, while the other experiments can be found in Section \ref{additionalexps} of the Appendix.

We now check that the theory can predict the results of our experiments by verifying Assumption \ref{as1} approximately (exact verification is too expensive). We sample one epoch's worth of Mixup points (to simulate training) from a downsampled version of each train dataset, and then compute the minimum distances between each Mixup point and points from classes other than the two mixed classes. The minimum over these distances corresponds to an estimate of $\epsilon$ in Assumption \ref{as1}. We compute the distances for both training and test data, to see whether good training but poor test performance can be attributed to test data conflicting with mixed training points. Results are shown below in Table \ref{tab2}.

\begin{table*}[ht!]
        \centering
        \begin{tabular}{|c|c|c|c|}
                \hline
                 Comparison Type & MNIST & CIFAR-10 & CIFAR-100 \\
                \hline
                Mixup/Train & 11.433 & 17.716 & 12.936 \\
                \hline
                Mixup/Test & 11.897 & 22.133 & 15.076 \\
                \hline
        \end{tabular}
        \caption{Minimum Euclidean distance results using our approximation procedure with Mixup points generated using $\mathrm{Beta}(1024, 1024)$. We downsample all datasets to 20\%, to compare to the experiments of \citet{guo2019mixup}.}
        \label{tab2}
        \vspace{-3mm}
\end{table*}

To interpret the estimated $\epsilon$ values in Table \ref{tab2}, we note that unless $\epsilon \ll 1/L$ (where $L$ is the Lipschitz constant of the model being considered), a Mixup point cannot conflict with an original point (since the function has enough flexibility to fit both). Due to the estimated large Lipschitz constants of deep networks \citep{scaman2019lipschitz}, our $\epsilon$ values certainly do not fall in this regime, explaining how the near-identical performance to ERM is possible on the original datasets. We remark that our results challenge an implication of \citet{guo2019mixup}, which was that training/test performance on the above benchmarks degrades with high values of $\alpha$ due to conflicts between original points and Mixup points.

\subsection{The Rate of Empirical Risk Minimization Using Mixup}

Another striking aspect of the experiments in Figure \ref{fig2} is that Mixup training minimizes the original empirical risk at a very similar rate to that of direct empirical risk minimization.
A priori, there is no reason to expect that Mixup should be able to do this - a simple calculation shows that Mixup training only sees one true data point per epoch in expectation
(each pair of points is sampled with probability $\frac{1}{m^2}$ and there are $m$ true point pairs and $m$ pairs seen per epoch, where $m$ is the dataset size). The experimental results are even more surprising given that we are training using $\alpha = 1024$, which essentially corresponds to training using the midpoints of the original data points.  This seems to imply that it is possible to recover the classifications of the original data points from the midpoints alone (not including the midpoint of a point and itself), and similar phenomena has indeed been observed in recent empirical work \citep{guo2021midpoint}. We make this rigorous with the following result:

\begin{restatable}{theorem}{recovery}\label{theo3}
       Suppose $\left\{x_1, ..., x_m\right\}$ with $m \geq 6$ are sampled from $X$ according to $\mathbb{P}_X$, and that $\mathbb{P}_X$
       has a density.
       Then with probability 1, we can uniquely determine the points $\left\{x_1, ..., x_m\right\}$ given only the $\binom{m}{2}$ midpoints 
       $\left\{x_{i, j}\right\}_{1 \leq i < j \leq m}$.
\end{restatable}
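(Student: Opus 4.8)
We are given precisely the multiset of pairwise sums $R=\{x_i+x_j:i<j\}$ (the midpoints rescaled by $2$), so ``uniquely determine the points'' means that no size-$m$ multiset $\{y_1,\dots,y_m\}$ other than $\{x_1,\dots,x_m\}$ produces the same $R$. Since $\mathbb{P}_X$ has a density, the joint law of $(x_1,\dots,x_m)$ is absolutely continuous with respect to Lebesgue measure on $(\mathbb{R}^n)^m$; hence it suffices to prove that the \emph{non-identifiable set} $N=\{(x_1,\dots,x_m):\exists\,\{y_i\}\neq\{x_i\}\text{ with the same pairwise sums}\}$ is Lebesgue-null. In particular the $x_i$ are distinct almost surely, so the object to be recovered is genuinely a set of $m$ points.

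The computational engine is a power-sum recursion. Writing $P_\alpha=\sum_i x_i^\alpha$ for the multivariate moments and expanding $(x_i+x_j)^\alpha$ by the multinomial theorem, one obtains
\[
\sum_{i<j}(x_i+x_j)^\alpha=\big(m-2^{|\alpha|-1}\big)P_\alpha+\tfrac12\sum_{0<\beta<\alpha}\binom{\alpha}{\beta}P_\beta P_{\alpha-\beta},
\]
where the inner sum runs over multi-indices strictly between $0$ and $\alpha$ and therefore involves only moments of total degree below $|\alpha|$. The left-hand side is computable from $R$, so by induction on $|\alpha|$ every moment $P_\alpha$ with $m\neq 2^{|\alpha|-1}$ is determined by the data. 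When $m$ is \emph{not} a power of two this recovers all $P_\alpha$, hence all moments of the empirical measure $\frac1m\sum_i\delta_{x_i}$; since a finite atomic measure is determined by its moments, $\{x_i\}$ is recovered outright. This settles $m=6,7$ and every non-power-of-two size deterministically, i.e. $N=\varnothing$ there.

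The genuinely harder case is $m=2^r$ (the first relevant one being $m=8$), where the recursion leaves the degree-$(r+1)$ moments undetermined and, via the classical doubling construction, distinct multisets with identical pairwise sums actually exist, so $N\neq\varnothing$. Here I would pass to a Fourier description: with $\phi_Z(u)=\sum_i e^{i\langle u,z_i\rangle}$, two multisets $X,Y$ share all pairwise sums iff $\phi_X(u)^2-\phi_X(2u)=\phi_Y(u)^2-\phi_Y(2u)$ for all $u$, i.e. $G(u)H(u)=G(2u)$ for $G=\phi_X-\phi_Y$ and $H=\phi_X+\phi_Y$. Because characters with distinct frequencies are linearly independent, this is equivalent to the atomic identity ``$(\mu_X-\mu_Y)$ pushed forward under $z\mapsto 2z$ equals $(\mu_X-\mu_Y)*(\mu_X+\mu_Y)$'', a rigid additive relation among the points. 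Projecting onto a generic line (midpoints commute with linear maps) to linearly order the atoms and comparing the extreme atoms of the two sides, where no cancellation can occur, forces the symmetric difference $X\triangle Y$ to obey arithmetic relations; peeling off successive extreme atoms shows that any ghost $Y\neq X$ forces the points of $X\cup Y$ into arithmetic relations which, as argued below, confine $X$ itself to a proper algebraic subvariety of $(\mathbb{R}^n)^m$ that contains $N$ and is therefore Lebesgue-null. The hypothesis $m\ge 6$ discards the small powers of two $m=2,4$, for which the same doubling construction shows that a \emph{generic} configuration already has a ghost, so reconstruction is genuinely impossible there.

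I expect the last step to be the main obstacle. Unlike the non-power-of-two case we cannot rule ghosts out entirely, since Selfridge--Straus-type coincidences genuinely occur; the work is to confine them to a measure-zero set, which requires tracking the full atomic identity --- multiplicities and potential cancellations, not merely its leading terms --- and verifying that the additive constraints it imposes are not vacuously satisfied. A convenient way to certify non-vacuousness, and hence that $N$ lies in a \emph{proper} subvariety, is to exhibit a single uniquely-reconstructible configuration: for points whose coordinates are algebraically independent over $\mathbb{Q}$, any ghost would force a nontrivial rational polynomial relation among those coordinates, which is impossible, so such a configuration has no ghost and $N\neq(\mathbb{R}^n)^m$. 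Lifting the one-dimensional extreme-atom analysis back to $\mathbb{R}^n$ (a single $\mathbb{R}^n$-ghost induces a ghost in every projection, so it suffices that some direction be ``good'') is the other point demanding care.
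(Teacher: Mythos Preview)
Your approach differs substantially from the paper's. For $m$ not a power of two your power-sum recursion is correct and in fact yields a stronger, \emph{deterministic} conclusion: the multiset $\{x_i\}$ is recovered from the pairwise sums with no exceptional set at all. The paper does not exploit this and makes no case distinction on $m$.

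The gap is exactly where you locate it: $m=2^r\ge 8$. Two problems. First, exhibiting one ghost-free configuration (algebraically independent coordinates) shows only $N\neq(\mathbb{R}^n)^m$; to conclude $N$ is null you must first know $N$ lies in a finite union of proper algebraic sets, and your extreme-atom peeling does not establish this --- the relations you extract involve both the $x_i$ and the unknown ghost points $y_j$, and it is the projection to the $x$-coordinates whose smallness must be proved. Second, your algebraic-independence witness does not by itself exclude a \emph{universal} ghost formula $y_j=\sum_i c_{ji}x_i$ (rational $c_{ji}$) that yields $Y\neq X$ with matching pairwise sums for \emph{every} $x$; such formulas exist for $m=2$, and ruling them out for $m\ge 6$ is precisely the missing content.

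The paper closes this gap by a uniform linear-algebraic argument. Encode pairwise sums as $w\mapsto Aw$ for the $\binom{m}{2}\times m$ incidence matrix $A$ (each row has two $1$'s). A ghost with a prescribed matching of midpoints is a solution of $Aw^{*}=PAw$ for some permutation matrix $P$. The key lemma, proved by a short graph-theoretic argument invoking Tur\'an's theorem, is: if $PA$ is not a column permutation of $A$ then $\mathrm{rank}\,[A,\,PA]\ge m+1$, so the solution set has dimension $\le m-1$ and projects to a null set of $w^{*}$; if $PA$ \emph{is} a column permutation of $A$, injectivity of $A$ forces $w$ to be a permutation of $w^{*}$. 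A finite union over $P$ finishes. This rank lemma is exactly the ``no universal ghost for $m\ge 6$'' statement your argument is reaching for but does not supply.
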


\textbf{Proof Sketch.} The idea is to represent the problem as a linear system, and show using rank arguments that the sets of $m$ points that cannot be uniquely determined are a measure zero set.

Theorem \ref{theo3} shows, in an information-theoretic sense, that it is possible to obtain the original data points (and therefore also their labels) from only their
midpoints. While this gives more theoretical backing as to why it is possible for Mixup training using $\mathrm{Beta}(1024, 1024)$ to recover the original data point
classifications with very low error, it does not explain why this actually happens in practice at the rate that it does. A full theoretical analysis of this phenomenon
would necessarily require analyzing the training dynamics of neural networks (or another model of choice)
when trained only on midpoints of the original data, which is outside the intended scope of this work.
That being said, we hope that such analysis will be a fruitful line of investigation for future work.

\section{Generalization Properties of Mixup Classifiers}\label{mixupgen}
Having discussed how Mixup training differs from standard empirical risk minimization with regards to the original training data, we now consider how a learned Mixup
classifier can differ from one learned through empirical risk minimization on unseen test data. To do so, we analyze the \textit{per-class margin} of Mixup classifiers, 
i.e. the distance one can move from a class support $X_i$ while still being classified as class $i$. 

\subsection{The Margin of Mixup Classifiers}\label{subsec:twomoons}
Intuitively, if a point $x$ falls only on line segments between $X_i$ and some other classes $X_j, ...$, and if $x$ always falls closer to $X_i$ than
the other classes, we can expect $x$ to be classified according to class $i$ by the Mixup-optimal classifier due to Lemma \ref{lem}.
To make this rigorous, we introduce another assumption that generalizes Assumption \ref{as1} to points outside of the class supports:


\begin{restatable}{assumption}{nocoldelta}\label{as2}
        For a class $i$ and a point $x \in X_{mix}$, suppose there exists an $\epsilon > 0$ and a $0 < \delta < \frac{1}{2}$ such that
        $A_{x, \epsilon', \delta}^{i, j}$ and $A_{x, \epsilon'}^{j, q}$ have measure zero for all $\epsilon' \leq \epsilon$ and $j, q \neq i$, and the measure of $A_{x, \epsilon}^{i, j}$ is at least that of $A_{x, \epsilon}^{j, i}$. 
\end{restatable}

Here the measure zero conditions are codifying the ideas that the point $x$ falls closer to $X_i$ than any other class on every line segment that intersects it, and there are no line segments between non-$i$ classes that intersect $x$. The condition that the measure of $A_{x, \epsilon}^{i, j}$ is at least that of $A_{x, \epsilon}^{j, i}$ handles asymmetric mixing distributions that concentrate on pathological values of $\lambda$.
A visualization of Assumption \ref{as2} is provided in Section B of the Appendix.
Now we can prove:
\begin{restatable}{theorem}{margin}\label{theo4}
        Consider $k$-class classification where the supports $X_1, ..., X_k$ are finite and $\mathbb{P}_X$ corresponds to the discrete uniform distribution.
        If a point $x$ satisfies Assumption \ref{as2} with respect to a class $i$, 
        then for every $h \in \argmin_{g \in \mathcal{C}^*} J_{mix}(g, \mathbb{P}_X, \mathbb{P}_f)$, we have that $h$ classifies $x$ as class $i$ and that
        $h$ is continuous at $x$.
\end{restatable}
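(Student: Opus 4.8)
The plan is to reduce everything to the closed form $h_{\epsilon}^i(x)$ of Lemma \ref{lem}, show that its $\epsilon \to 0$ limit exists so that $h(x) = \lim_{\epsilon \to 0} h_{\epsilon}(x)$ is genuinely defined, and then read both conclusions off the limiting expression. Fix the $\epsilon$ and the $0 < \delta < \tfrac{1}{2}$ supplied by Assumption \ref{as2}, and only ever work with $\epsilon' \leq \epsilon$. Write $N_q(\epsilon')$ for the numerator of $h_{\epsilon'}^q(x)$ in Equation \ref{eq_1}, so that $h_{\epsilon'}^q(x) = N_q(\epsilon')/\sum_{r} N_r(\epsilon')$; the whole argument is a pair of inequalities, $N_i$ large and $N_q$ small for $q \neq i$.

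The first task is the classification bound. The measure-zero condition on $A_{x,\epsilon'}^{j,q}$ for $j,q \neq i$ annihilates every $\xi$-term indexed by two non-$i$ classes, so for each $q \neq i$ only the terms coming from $A_{x,\epsilon'}^{i,q}$ and $A_{x,\epsilon'}^{q,i}$ survive in $N_q(\epsilon')$. The measure-zero condition on $A_{x,\epsilon',\delta}^{i,j}$, combined with the fact that $\mathbb{P}_f$ has full support $[0,1]$, forces every segment between $X_i$ and a non-$i$ class that meets $B_{\epsilon'}(x)$ to do so with weight exceeding $1-\delta$ on its $X_i$-endpoint; parametrizing the same segments in the two orders then shows that the mass of $A_{x,\epsilon'}^{i,q}$ concentrates at $\lambda > 1-\delta$ while the mass of $A_{x,\epsilon'}^{q,i}$ concentrates at $\lambda < \delta$. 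Feeding this in gives $N_i(\epsilon') \geq \xi_{x,\epsilon'}^{i,i} + (1-\delta)\sum_{j \neq i}\big(\xi_{x,\epsilon'}^{i,j} + \xi_{x,\epsilon'}^{j,i}\big)$ and $N_q(\epsilon') \leq \delta\big(\xi_{x,\epsilon'}^{i,q} + \xi_{x,\epsilon'}^{q,i}\big)$, where the mass comparison $\xi_{x,\epsilon'}^{i,j} \geq \xi_{x,\epsilon'}^{j,i}$ is what safeguards these bounds when $\mathbb{P}_f$ is asymmetric. Since $x \in X_{mix}$ forces $N_i(\epsilon') > 0$, dividing and using $\delta < \tfrac{1}{2}$ yields the uniform estimate $h_{\epsilon'}^i(x) \geq 1-\delta > \tfrac{1}{2}$; because the coordinates sum to one, every other class then has value below $\tfrac{1}{2}$, so in the limit the argmax is $i$.

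Next I would establish that the limit exists, which is the claim deferred in Remark \ref{rem}. As $X$ is finite and $\mathbb{P}_X$ uniform, $\xi_{x,\epsilon'}^{i,j}$ is a finite sum over pairs $(s,t)$ of the $\mathbb{P}_f$-measure of the shrinking $\lambda$-interval on which $\lambda s + (1-\lambda)t \in B_{\epsilon'}(x)$. For small $\epsilon'$ only pairs whose segment actually passes through $x$ contribute (the others are bounded away), and for each such pair the interval collapses to the single crossing value $\lambda_{st}$ with measure $\sim f(\lambda_{st}) \cdot 2\epsilon'/\lvert s - t\rvert$. Both numerator and denominator of $h_{\epsilon'}^i(x)$ are therefore asymptotically linear in $\epsilon'$ with explicit coefficients, so the ratio converges and $h(x) = \lim_{\epsilon' \to 0} h_{\epsilon'}(x)$ exists and inherits $h^i(x) \geq 1-\delta$.

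The hard part will be continuity at $x$. Proposition \ref{prop} shows the limit $h$ can genuinely jump, so the argument must use the non-degeneracy in Assumption \ref{as2} to exclude such jumps near $x$: the danger is that the finite set of segments through a perturbed point $y$ can change discontinuously, or that a segment can cross $x$ with weight near $\delta$ and flip which endpoint it favors. I would argue that the two measure-zero conditions are stable under perturbation — since $X$ is finite, any segment missing $x$ stays bounded away from a neighborhood of $x$, and the strict inequality $\lambda_{st} > 1-\delta$ persists for nearby $y$ — so that points $y$ near $x$ again satisfy a slightly weakened Assumption \ref{as2} and the explicit limiting ratio, written through the continuously varying crossing values $\lambda_{st}(y)$ and segment lengths, is continuous at $x$. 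Equivalently, one can attempt to upgrade the pointwise convergence $h_{\epsilon'} \to h$ to uniform convergence on a neighborhood of $x$ that avoids the degenerate configurations ruled out by the assumption, which gives continuity directly; pinning down this uniformity, and verifying that the active segment structure cannot change abruptly, is where the main technical effort lies.
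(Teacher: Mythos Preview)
Your handling of the classification bound and the existence of the limit is correct and closely parallels the paper's proof. You obtain the sharper estimate $h_{\epsilon'}^i(x) \geq 1-\delta$ where the paper is content with $\geq \tfrac{1}{2}$; both suffice. One small mis-attribution: the mass comparison $\xi_{x,\epsilon'}^{i,j} \geq \xi_{x,\epsilon'}^{j,i}$ is not what delivers your bounds on $N_i$ and $N_q$. Those follow directly from the measure-zero condition on $A_{x,\epsilon',\delta}^{i,j}$ alone, because swapping endpoints is a purely geometric operation that forces the $\lambda$-support of $A_{x,\epsilon'}^{j,i}$ into $[0,\delta)$ regardless of whether $\mathbb{P}_f$ is symmetric. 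The paper's proof also cites the mass comparison, but its inequality chain, like yours, does not actually invoke it.

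Where you diverge from the paper is in the continuity step. The paper's entire continuity argument is one sentence: ``Since the above also holds for a sufficiently small neighborhood about $x$, we have the desired result.'' That is, what is actually established is that the limit exists at each nearby $y \in X_{mix}$ and that the \emph{classification} is locally constant (equal to $i$). Your proposal attempts the stronger statement that the \emph{value} $h(y)$ tends to $h(x)$, via continuous dependence of the limiting ratio on the crossing parameters $\lambda_{st}(y)$. That stronger statement need not hold, precisely through the mechanism you flag but do not resolve: if two distinct $X_i$--$X_j$ segments pass through $x$ with different crossing values in $(1-\delta,1]$, a generic nearby $y \in X_{mix}$ lies on only one of them, so the limiting ratio at $y$ is built from one segment while the ratio at $x$ averages both, and these need not agree. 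The ``uniformity'' you hope to pin down therefore cannot be obtained in general. What does go through --- and all the paper actually proves --- is that the bound, hence the classification, is stable in a neighborhood; if you downgrade your continuity claim to that, your argument is complete and matches the paper's.
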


\textbf{Proof Sketch.} The limit in Lemma \ref{lem} can be shown to exist using the Lebesgue differentiation theorem, and we can bound the limit below since the $A_{x, \epsilon', \delta}^{i, j}$ have measure zero.

Assumption \ref{as1} implies Assumption \ref{as2} with respect to each class, and hence we get Theorem \ref{theo2} as a corollary of Theorem \ref{theo4}
as mentioned in Section 2. To use Theorem \ref{theo4} to understand generalization, we make the observation that a point $x$ can satisfy Assumption \ref{as2} while being a distance of up to $\frac{\min_j d(X_i, X_j)}{2}$ from some class $i$. This distance can be significantly farther than, for example, the optimal linear separator in a linearly separable dataset.



\textbf{Experiments.} To illustrate that Mixup can lead to more separation between points than a linear decision boundary, we consider the two moons dataset \citep{twomoons}, which consists of two classes of points supported on semicircles with added Gaussian noise. Our motivation for doing so comes from the work of \citet{gradstarv}, in which it was noted that neural network models trained on a separated version of the two moons dataset essentially learned a linear separator while ignoring the curvature of the class supports. While \citet{gradstarv} introduced an explicit regularizer to encourage a nonlinear decision boundary, we expect due to Theorem \ref{theo4} that Mixup training will achieve a similar result without any additional modifications.

To verify this empirically, we train a two-layer neural network with 500 hidden units with and without Mixup, to have a 1-to-1 comparison with the setting of \citet{gradstarv}. We use $\alpha = 1$ and $\alpha = 1024$ for Mixup to capture a wide band of mixing densities. The version of the two moons dataset we use is also identical to that of the one used in the experiments of \citet{gradstarv}, and we are grateful to the authors for releasing their code under the MIT license. We do full-batch training with all other training, implementation, and compute details remaining the same as the previous section. Results are shown in Figure \ref{fig3}.

\begin{figure*}[t]
    \centering
    \includegraphics[scale=0.35]{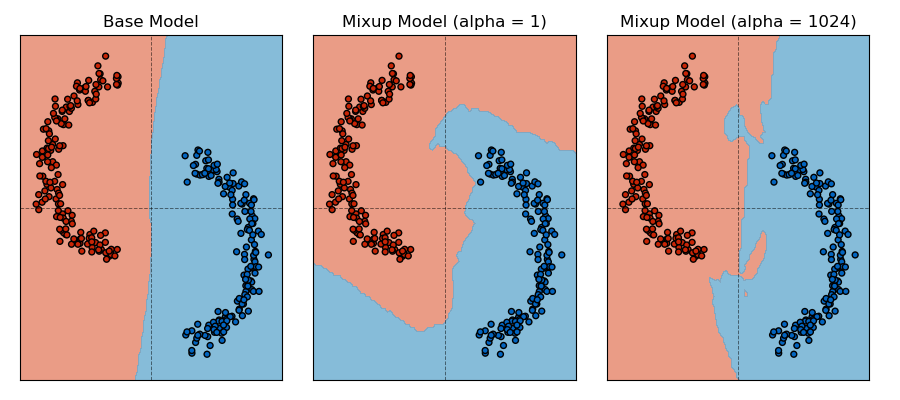}
    \caption{Decision boundary plots for standard and Mixup training on the two moons dataset of \citet{gradstarv} with a class separation of 0.5. Each boundary represents the average of 10 training runs of 1500 epochs.}
    \label{fig3}
    \vspace{-4mm}
\end{figure*}

Our results affirm the observations of \cite{gradstarv} and previous work \citep{combes} that neural network training dynamics may ignore salient features of the dataset; in this case the ``Base Model'' learns to differentiate the two classes essentially based on the $x$-coordinate alone. On the other hand, the models trained using Mixup have highly nonlinear decision boundaries. Further experiments for different class separations and values of $\alpha$ are included in Section F of the Appendix.

\subsection{When Mixup Training Learns the Same Classifier}
The experiments and theory of the previous sections have shown how a Mixup classifier can differ significantly from one learned through standard training. In this subsection, we now consider the opposing question - when is the Mixup classifier the same as the one learned through standard training? Prior work \citep{archambault2019} has considered when Mixup training coincides with certain adversarial training, and our results complement this line of work. The motivation for our results comes from the fact that a practitioner need not spend compute on Mixup training in addition to standard training in settings where the performance will be provably the same.

We consider the case of binary classification using a linear model $\theta^{\top} x$ on high-dimensional Gaussian data, which is a setting that arises naturally when training using Gaussian kernels. Specifically, we consider the dataset $X$ to consist of $n$ points in $\mathbb{R}^d$ distributed according to $\mathcal{N}(0, I_d)$ with $d > n$ (to be made more precise shortly). We also consider the mixing distribution to be any symmetric distribution supported on $[0, 1]$ (thereby including as a special case $\mathrm{Beta}(\alpha, \alpha)$). We let the labels of points in $X$ be $\pm 1$ (so that the sign of $\theta^{\top} x$ is the classification), and use $X_1$ and $X_{-1}$ to denote the individual class points. We will show that in this setting, the optimal Mixup classifier is the same (up to rescaling of $\theta$) as the ERM classifier learned using gradient descent with high probability. To do so we need some additional definitions. 

\begin{restatable}{definition}{interpolating}\label{intersoln}
    We say $\hat{\theta}$ is an interpolating solution, if there exists $k > 0$ such that
    \begin{align*}
    \hat{\theta}^{\top}x_i = - \hat{\theta}^{\top}z_j = k \ \ \ \forall x_i \in X_1,\ \forall z_j \in X_{-1}. 
    \end{align*}
\end{restatable}

\begin{restatable}{definition}{maxmargin}
The maximum margin solution $\tilde{\theta}$ is defined through:
\begin{align*}
    \tilde{\theta} := \argmax_{\|\theta\|_2 = 1}\left\{\min_{x_i \in X_1, z_j \in X_{-1}}\left\{\theta^{\top}x_i, -\theta^{\top}z_j\right\}\right\}
\end{align*}
\end{restatable}

When the maximum margin solution coincides with an interpolating solution for the dataset $X$ (i.e. all the points are support vectors), we have that Mixup training leads to learning the max margin solution (up to rescaling). 

\begin{restatable}{theorem}{maxmargintheorem}   \label{max-margin}
If the maximum margin solution for $X$ is also an interpolating solution for $X$, then any $\theta$ that lies in the span of $X$ and minimizes the Mixup loss $J_{mix}$ for a symmetric mixing distribution $\mathbb{P}_f$ is a rescaling of the maximum margin solution.
\end{restatable}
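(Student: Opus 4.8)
The plan is to turn the problem into a convex first-order optimality computation. First I would observe that $J_{mix}$ depends on $\theta$ only through the inner products $\theta^\top z_{st}$ of $\theta$ with the mixed points, and every mixed point lies in $\mathrm{span}(X)$; hence $J_{mix}$ is constant along directions orthogonal to $\mathrm{span}(X)$ and it suffices to study its restriction to the span. On the span each summand is a convex function of $\theta$ (binary cross-entropy composed with a linear map), the cross-class terms make the restricted objective coercive (their soft labels lie in $(0,1)$, forcing a finite optimal logit on every cross-class segment, so the loss blows up as $\|\theta\|\to\infty$ in any direction), and because $d>n$ the points $x_i$ are almost surely linearly independent, which makes the restricted objective strictly convex. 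Therefore there is a unique minimizer in $\mathrm{span}(X)$, and it is exactly the unique point at which the gradient vanishes; the whole theorem reduces to exhibiting that point.

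Next I would compute $\nabla_\theta J_{mix}$. Writing $\sigma$ for the logistic link, every ordered pair $(s,t)$ and mixing value $\lambda$ contributes a term of the form $(\sigma(\theta^\top z_{st}) - \tau_{st}(\lambda))\,z_{st}$, where $\tau_{st}(\lambda)$ is the soft label Mixup assigns to the mixed point: it equals $1$ or $0$ on same-class pairs and $\lambda$ or $1-\lambda$ on cross-class pairs. Here the symmetry of $\mathbb{P}_f$ is what makes things tractable: pairing the ordered pair $(s,t)$ at parameter $\lambda$ against the reversed pair $(t,s)$ at parameter $1-\lambda$ shows the two contribute identically (the soft labels and the densities match after the reflection $\lambda\mapsto 1-\lambda$), so the gradient can be reorganized over unordered pairs with every $\lambda$-integral taken against the symmetric density $f$.

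The core of the argument is to evaluate this gradient along the ray $\theta=c\,\tilde\theta$ of positive rescalings of the maximum-margin solution and to use the hypothesis that $\tilde\theta$ is also interpolating. Under that hypothesis $\tilde\theta^\top x_i=\gamma$ and $\tilde\theta^\top z_j=-\gamma$ for a common margin $\gamma>0$, so on any cross-class segment the logit is the affine function $c\gamma(2\lambda-1)$, which is antisymmetric about $\lambda=1/2$, while on any same-class segment the logit is constant in $\lambda$. The one-dimensional map $c\mapsto J_{mix}(c\tilde\theta)$ is convex and coercive, hence has a unique interior minimizer $c^{*}$ at which the radial derivative $\tilde\theta^\top\nabla J_{mix}(c^{*}\tilde\theta)$ vanishes; the entire claim then collapses to showing that at this point the gradient is purely radial, i.e. parallel to $\tilde\theta$, since a nonzero multiple of $\tilde\theta$ cannot be orthogonal to $\tilde\theta$. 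If the gradient at $c^{*}\tilde\theta$ is parallel to $\tilde\theta$ and has zero radial component, it is zero, and by the uniqueness from the first step this is the global minimizer, which is a rescaling of $\tilde\theta$.

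The step I expect to be the main obstacle is precisely this last one: ruling out any component of $\nabla J_{mix}(c^{*}\tilde\theta)$ transverse to $\tilde\theta$. The radial stationarity is free from the one-dimensional convexity argument, but a priori the full gradient on the ray could point off the ray, and this is exactly where the interpolating hypothesis must do all the work. The plan is to use the antisymmetry of the cross-class logits in $\lambda$ together with the equal-margin condition (all points being support vectors, so every $\theta^\top x_i$ and every $\theta^\top z_j$ take a single common value) to argue that the transverse contributions cancel across the support vectors rather than merely aligning. Verifying this cancellation carefully — and checking that it is robust to asymmetric-yet-symmetrically-distributed or atomic mixing densities inside the allowed symmetric class, rather than relying on a clean closed form for $f$ — is the delicate part of the proof, and it is what genuinely requires the maximum-margin solution to coincide with an interpolating solution.
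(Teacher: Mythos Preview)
Your route differs substantially from the paper's, and the gap you yourself flag --- showing that $\nabla J_{mix}(c^{*}\tilde\theta)$ has no transverse component --- is exactly the step you have not closed. The paper avoids this step entirely by a pairwise decomposition that you are missing. Rather than working with the global gradient, the paper restricts attention to a single cross-class pair $(x_i,z_j)$ and views the corresponding term $J_{mix}^{i,j}$ as a strictly convex function of the two scalars $u=\theta^\top x_i$ and $v=-\theta^\top z_j$. The symmetry of $\mathbb{P}_f$ is then used not inside a gradient computation but as an \emph{invariance of the loss itself}: performing the substitution $\lambda\mapsto 1-\lambda$ in the integral shows that $J_{mix}^{i,j}(u,-v)=J_{mix}^{i,j}(v,-u)$, so if $(u,-v)$ is a minimizer then so is $(v,-u)$, and strict convexity forces $u=v=k(\mathbb{P}_f)$ for a constant depending only on the mixing law. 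Since the data are linearly independent, one can choose a single $\theta$ in the span that satisfies $\theta^\top x_i=-\theta^\top z_j=k(\mathbb{P}_f)$ for every pair simultaneously; this $\theta$ minimizes every summand and hence the sum. The transverse question never arises.

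By contrast, your plan keeps all pairs coupled and tries to argue cancellation of the transverse part of the aggregated gradient directly. Even for the cross-class terms alone this requires showing that a certain $\lambda$-integral (your antisymmetric residual) vanishes at the radial optimum $c^{*}$, which is true but is precisely equivalent to what the paper's two-point symmetry argument proves in one line; you have not supplied that argument. For the same-class terms the cancellation you hope for is more delicate still: at an interpolating $\theta$ each same-class contribution is proportional to $\sigma(c\gamma)-1$ times a data point, and these do not cancel transversely when $n_1\neq n_2$, so your sketch would need additional structure (the paper handles these terms by a separate, informal reduction to the ERM case rather than by the gradient cancellation you propose). In short, your outline is reasonable up to the point you identified as the obstacle, but the missing idea that actually resolves it is the per-pair symmetry-plus-strict-convexity argument, not a direct transverse-cancellation computation.
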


\textbf{Proof Sketch.} It can be shown that $\theta$ is an interpolating solution using a combination of the strict convexity of $J_{mix}$ as a function of $\theta$ and the symmetry of the mixing distribution.

\begin{remark}
For every $\theta$, we can decompose it as $\theta = \theta_{X} + \theta_{X^{\perp}}$ where $\theta_{X}$ is the projection of $\theta$ onto the subspace spanned by $X$. By definition we have that $\theta_{X^{\perp}}$ is orthogonal to all possible mixings of points in $X$. Hence, $\theta_{X^{\perp}}$ does not affect the Mixup loss or the interpolating property, so for simplicity we may just assume $\theta$ lies in the span of $X$.
\end{remark}

To characterize the conditions on $X$ under which the maximum margin solution interpolates the data, we use a key result of \citet{muthukumar2020classification}, restated below. Note that \citet{muthukumar2020classification} actually provide more settings in their paper, but we constrain ourselves to the one stated below for simplicity.

\begin{restatable}{lemma}{muthukumar}[Theorem 1 in \citet{muthukumar2020classification}, Rephrased] \label{equivalence}
Assuming $d > 10n \ln n+n-1,$ then with probability at least $1-2/n,$ the maximum margin solution for $X$ is also an interpolating solution.   
\end{restatable}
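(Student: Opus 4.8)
The plan is to reconstruct the argument of \citet{muthukumar2020classification} for the stated constants. First I would recast the geometric condition ``the maximum margin solution is an interpolating solution'' as a positivity condition on dual variables. Let $A$ be the $n \times d$ matrix whose $i$-th row is $y_i x_i$, so that the interpolation constraints $y_i \theta^\top x_i = k$ become $A\theta = k\mathbf{1}$. The hard-margin program $\min \|\theta\|_2^2$ subject to $y_i \theta^\top x_i \geq 1$ is, up to rescaling, equivalent to the unit-norm max-margin solution $\tilde{\theta}$, and when $d > n$ it admits a minimum-norm interpolant $\theta_0 = A^\top (AA^\top)^{-1}\mathbf{1}$ meeting every constraint with equality. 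By KKT duality for this convex program, $\theta_0$ is the max-margin solution (equivalently, every point is a support vector, equivalently the max-margin solution interpolates) if and only if the dual vector $\alpha := (AA^\top)^{-1}\mathbf{1}$ is entrywise nonnegative: the stationarity condition forces $\theta = A^\top \alpha$, complementary slackness makes all constraints tight exactly when all $\alpha_i > 0$, and feasibility plus $\alpha \geq 0$ certifies optimality. Thus it suffices to prove $G^{-1}\mathbf{1} \geq 0$ componentwise, where $G := AA^\top$ is the signed Gram matrix with entries $G_{ij} = y_i y_j \langle x_i, x_j \rangle$.

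Next I would carry out a concentration analysis of $G$. Since the $x_i$ are i.i.d. $\mathcal{N}(0, I_d)$, the diagonal entries $G_{ii} = \|x_i\|_2^2$ concentrate around $d$ by a $\chi^2_d$ tail bound, while for $i \neq j$ the entries $G_{ij}$ are mean-zero with variance of order $d$ and conditionally subgaussian given one of the two factors. I would decompose $G = D + H$ with $D$ the diagonal part ($D \approx d\, I_n$) and $H$ the off-diagonal part, then control $\|H\mathbf{1}\|_\infty$ (each coordinate is a sum of $n-1$ mean-zero terms of typical size $\sqrt{d}$, hence of order $\sqrt{nd}$) and the operator norm $\|H\|_{\mathrm{op}} \lesssim \sqrt{nd}$. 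A Neumann-series expansion then yields
\begin{align*}
G^{-1}\mathbf{1} = D^{-1}\mathbf{1} - D^{-1} H D^{-1}\mathbf{1} + \cdots,
\end{align*}
whose leading term $D^{-1}\mathbf{1} \approx d^{-1}\mathbf{1}$ is positive with margin of order $1/d$, while the first correction has $\ell_\infty$ size of order $\sqrt{nd}/d^2 = \sqrt{n/d}\cdot d^{-1}$ per coordinate. Hence once $d \gg n$ the leading term dominates coordinatewise and every entry of $G^{-1}\mathbf{1}$ stays strictly positive.

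Finally I would assemble the probability bound, which is where the threshold $d > 10n\ln n + n - 1$ and the failure probability $2/n$ arise. Positivity must hold simultaneously in all $n$ coordinates, so I would union-bound over coordinates, requiring each per-coordinate event to fail with probability $O(1/n^2)$. The relevant deviation $(H\mathbf{1})_i$ is conditionally Gaussian with variance of order $nd$, so a Bernstein/Gaussian tail gives failure probability $\exp(-c\,d/n)$ at the needed threshold $t \asymp d$; demanding $\exp(-c\,d/n) \lesssim 1/n^2$ forces $d \gtrsim n\ln n$, which is precisely the source of the logarithmic factor. I expect the main obstacle to be exactly this quantitative bookkeeping: one must track constants through the $\chi^2$ and subgaussian tails carefully enough that the $1/d$ margin of the leading term provably beats the correction uniformly over all coordinates with total failure probability at most $2/n$, which is what pins down the explicit $10n\ln n + n - 1$ and $2/n$. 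Since this is the content of Theorem 1 of \citet{muthukumar2020classification}, I would invoke their result directly for the exact constants, with the computation above explaining why this scaling is the natural one.
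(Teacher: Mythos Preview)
Your proposal is correct, and in fact goes well beyond what the paper does: the paper provides no proof of this lemma at all, as it is simply a rephrasing of Theorem~1 from \citet{muthukumar2020classification} and is invoked directly as a black box. Your final sentence already recognizes this and would suffice on its own; the preceding KKT/concentration sketch is a sound explanation of why the cited result holds and where the $d > 10n\ln n + n - 1$ threshold and $2/n$ failure probability come from, but none of it is required here.
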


To tie the optimal Mixup classifier back to the classifier learned through standard training, we appeal to the fact that minimizing the empirical cross-entropy of a linear model using gradient descent leads to learning the maximum margin solution on linearly separable data \citep{soudry2018implicit, matus}. From this we obtain the desired result of this subsection:

\begin{corollary}
Under the same conditions as Lemma \ref{equivalence}, the optimal Mixup classifier has the same direction as the classifier learned through minimizing the empirical cross-entropy using gradient descent with high probability.
\end{corollary}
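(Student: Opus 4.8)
The plan is to chain together the three ingredients assembled in this subsection: the implicit bias of gradient descent on the cross-entropy loss, the characterization of the Mixup-optimal classifier from Theorem \ref{max-margin}, and the high-probability interpolation guarantee of Lemma \ref{equivalence}. The common object tying the two classifiers together is the maximum margin solution $\tilde{\theta}$, so I would argue that both the gradient-descent ERM classifier and the Mixup-optimal classifier point in the direction of $\tilde{\theta}$ with high probability, which is exactly the claimed ``same direction'' conclusion.

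First I would verify that the hypotheses of both the Mixup result and the implicit-bias result are met. Since $d > 10n\ln n + n - 1 > n$ and the $n$ points are drawn from $\mathcal{N}(0, I_d)$, the points are almost surely linearly independent, so any $\pm 1$ labeling is realizable by a separating hyperplane through the origin; in particular $X$ is linearly separable almost surely and $\tilde{\theta}$ is well defined with positive margin. (Separability also follows directly on the interpolation event, since an interpolating solution in the sense of Definition \ref{intersoln} is by construction a separator with margin $k > 0$.) Next I would invoke Lemma \ref{equivalence}: on an event $E$ of probability at least $1 - 2/n$, the maximum margin solution $\tilde{\theta}$ is also an interpolating solution for $X$.

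Working on $E$, the hypothesis of Theorem \ref{max-margin} is satisfied, so any $\theta$ in the span of $X$ that minimizes $J_{mix}$ for the symmetric mixing distribution is a rescaling of $\tilde{\theta}$; by the remark following Theorem \ref{max-margin}, the component of a minimizer orthogonal to the span of $X$ is immaterial to $J_{mix}$, so the Mixup-optimal classifier has direction $\tilde{\theta}$. Separately, and without conditioning on $E$, the cited implicit-bias results of \citet{soudry2018implicit, matus} guarantee that gradient descent on the empirical cross-entropy of a linearly separable dataset converges in direction to $\tilde{\theta}$, which likewise lies in the span of $X$ (it is a combination of the support vectors, all of which are active on $E$). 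Combining the two facts, on the event $E$ both classifiers share the direction $\tilde{\theta}$, and this occurs with probability at least $1 - 2/n$, giving the result.

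Each step is short, so the argument is primarily a matter of correctly aligning hypotheses rather than new computation. The main point requiring care is the interaction between the two ``span of $X$'' conditions: Theorem \ref{max-margin} only characterizes Mixup minimizers restricted to $\mathrm{span}(X)$, and the gradient-descent limit must be compared within $\mathrm{span}(X)$ as well, so I would make explicit that orthogonal components are irrelevant to the notion of ``same direction'' being claimed (both classifiers are determined only up to scaling, and the relevant directions both lie in $\mathrm{span}(X)$). Once this is handled, the high-probability qualifier is inherited verbatim from the $1 - 2/n$ bound of Lemma \ref{equivalence}.
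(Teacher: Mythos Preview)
Your proposal is correct and takes essentially the same approach as the paper. The paper does not give a formal proof of this corollary; it simply states immediately before the statement that the result follows by combining Theorem \ref{max-margin}, Lemma \ref{equivalence}, and the implicit-bias results of \citet{soudry2018implicit, matus}, which is exactly the chain you spell out (with some added care about separability and the span-of-$X$ restriction that the paper leaves implicit).
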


\section{Conclusion}
The main contribution of our work has been to provide a theoretical framework for analyzing how Mixup training can differ from empirical risk minimization. Our results characterize a practical failure case of Mixup, and also identify conditions under which Mixup can provably minimize the original risk.
They also show in the sense of margin why the generalization of Mixup classifiers can be superior to those learned through empirical risk minimization, while again identifying model classes and datasets for which the generalization of a Mixup classifier is no different (with high probability). We also emphasize that the generality of our theoretical framework allows most of our results to hold for any continuous mixing distribution. Our hope is that the tools developed in this work will see applications in future works concerned with analyzing the relationship between benefits obtained from Mixup training and properties of the training data.

\section{Ethics Statement}
We do not anticipate any direct misuses of this work due to its theoretical nature. That being said, the failure case of Mixup discussed in Section \ref{mixuperm} could serve as a way for an adversary to potentially exploit a model trained using Mixup to classify data incorrectly. However, as this requires knowledge of the mixing distribution and other hyperparameters of the model, we do not flag this as a significant concern - we would just like to point it out for completeness.

\section{Reproducibility Statement}
Full proofs for all results in the main body of the paper can be found in Sections C and E of the Appendix.
All of the code used to generate the plots and experimental results in this paper can be found at: \url{https://github.com/2014mchidamb/Mixup-Data-Dependency}. We have tried our best to organize the code to be easy to use and extend. Detailed instructions for how to run each type of experiment are provided in the \texttt{README} file included in the GitHub repository. 

\section*{Acknowledgements}
Rong Ge, Muthu Chidambaram, Xiang Wang, and Chenwei Wu are supported in part by NSF Award DMS-2031849, CCF-1704656, CCF-1845171 (CAREER), CCF-1934964 (Tripods), a Sloan Research Fellowship, and a Google Faculty Research Award. Muthu would like to thank Michael Lin for helpful discussions during the early stages of this project.

\bibliography{iclr2022_conference}
\bibliographystyle{iclr2022_conference}

\newpage

\appendix
\section{Review of Definitions and Assumptions}
For convenience, we first recall the definitions and assumptions stated throughout the paper below.
\begin{align*}
        \ell_{mix}(g, s, t, \lambda) &= \begin{cases}
                -\log g^i (z_{st}) & s, t \in X_i \\
                -\left(\lambda \log g^i (z_{st}) + (1 - \lambda) \log g^j (z_{st})\right) & s \in X_i, t \in X_j
        \end{cases} \\
        J_{mix}^{i, j} (g, \mathbb{P}_X, \mathbb{P}_f) &= \int_{X_i \times X_j \times [0, 1]} \ell_{mix}(g, s, t, \lambda) \ d(\mathbb{P}_X \times \mathbb{P}_X \times \mathbb{P}_f)(s, t, \lambda) \\
        J_{mix}(g, \mathbb{P}_X, \mathbb{P}_f) &= \sum_{i = 1}^k \sum_{j = 1}^k J_{mix}^{i, j}(g, \mathbb{P}_X, \mathbb{P}_f) \\
        A_{x, \epsilon}^{i, j} &= \left\{(s, t, \lambda) \in X_i \times X_j \times [0, 1] : \ \lambda s + (1 - \lambda) t \in B_{\epsilon}(x)\right\} \\
        A_{x, \epsilon, \delta}^{i, j} &= \left\{(s, t, \lambda) \in X_i \times X_j \times [0, 1 - \delta] : \ \lambda s + (1 - \lambda) t \in B_{\epsilon}(x)\right\} \\
        X_{mix} &= \left\{x \in \mathbb{R}^n : \ \bigcup_{i, j} A_{x, \epsilon}^{i, j} \text{ has positive measure for every } \epsilon > 0\right\} \\
        \xi_{x, \epsilon}^{i, j} &= \int_{A_{x, \epsilon}^{i, j}} \ d(\mathbb{P}_X \times \mathbb{P}_X \times \mathbb{P}_f)(s, t, \lambda) \\
        \xi_{x, \epsilon, \lambda}^{i, j} &= \int_{A_{x, \epsilon}^{i, j}} \lambda \ d(\mathbb{P}_X \times \mathbb{P}_X \times \mathbb{P}_f)(s, t, \lambda)
\end{align*}

\fclass*
\ncal*
\nocol*
\nocoldelta*
\interpolating*
\maxmargin*

\section{Visualizations of Definitions and Assumptions}
Due to the technical nature of the definitions and assumptions above, we provide several visualizations in Figures \ref{xmix} to \ref{as2vis} to help aid the reader's intuition for our main results.

\begin{figure*}[htp]
    \centering
    \includegraphics[scale=0.25]{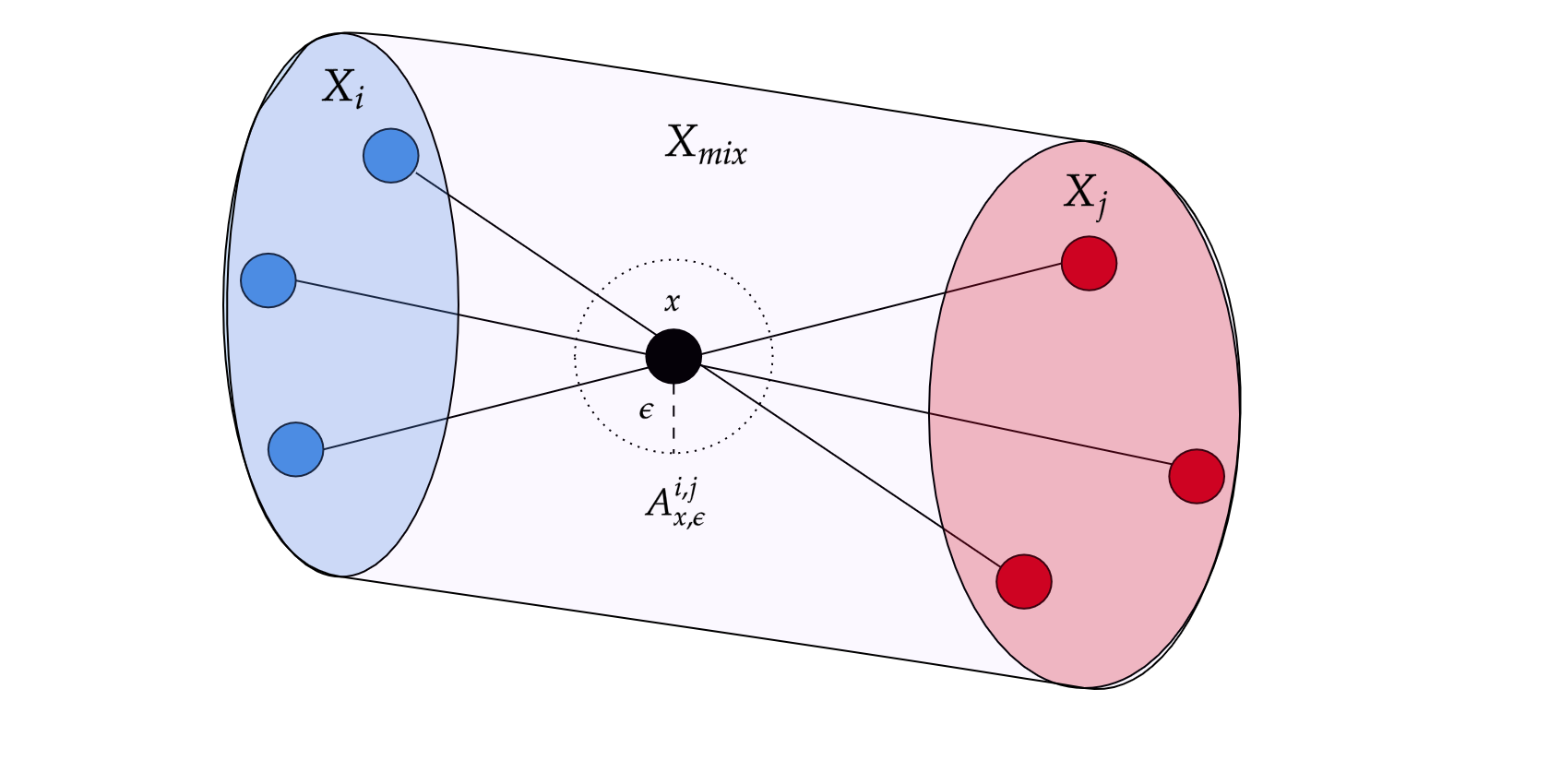}
    \caption{A visualization of $X_{mix}$ and $A_{x, \epsilon}^{i, j}$.}
    \label{xmix}
\end{figure*}

\begin{figure*}[htp]
    \centering
    \includegraphics[scale=0.25]{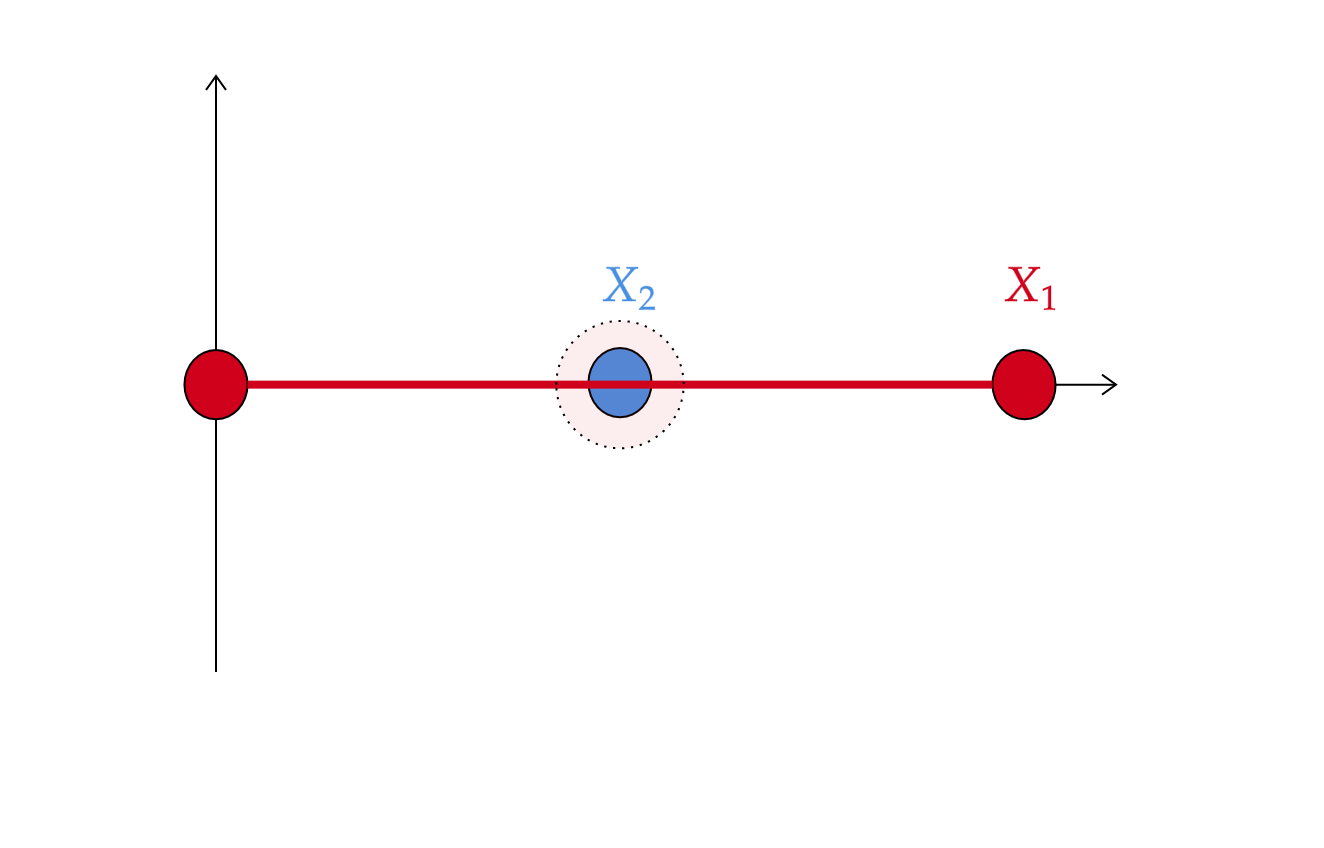}
    \caption{A visualization of the $\mathcal{X}_3^2$ dataset and how the Mixup sandwiching works.}
    \label{2calvis}
\end{figure*}

\begin{figure*}[htp]
    \centering
    \includegraphics[scale=0.25]{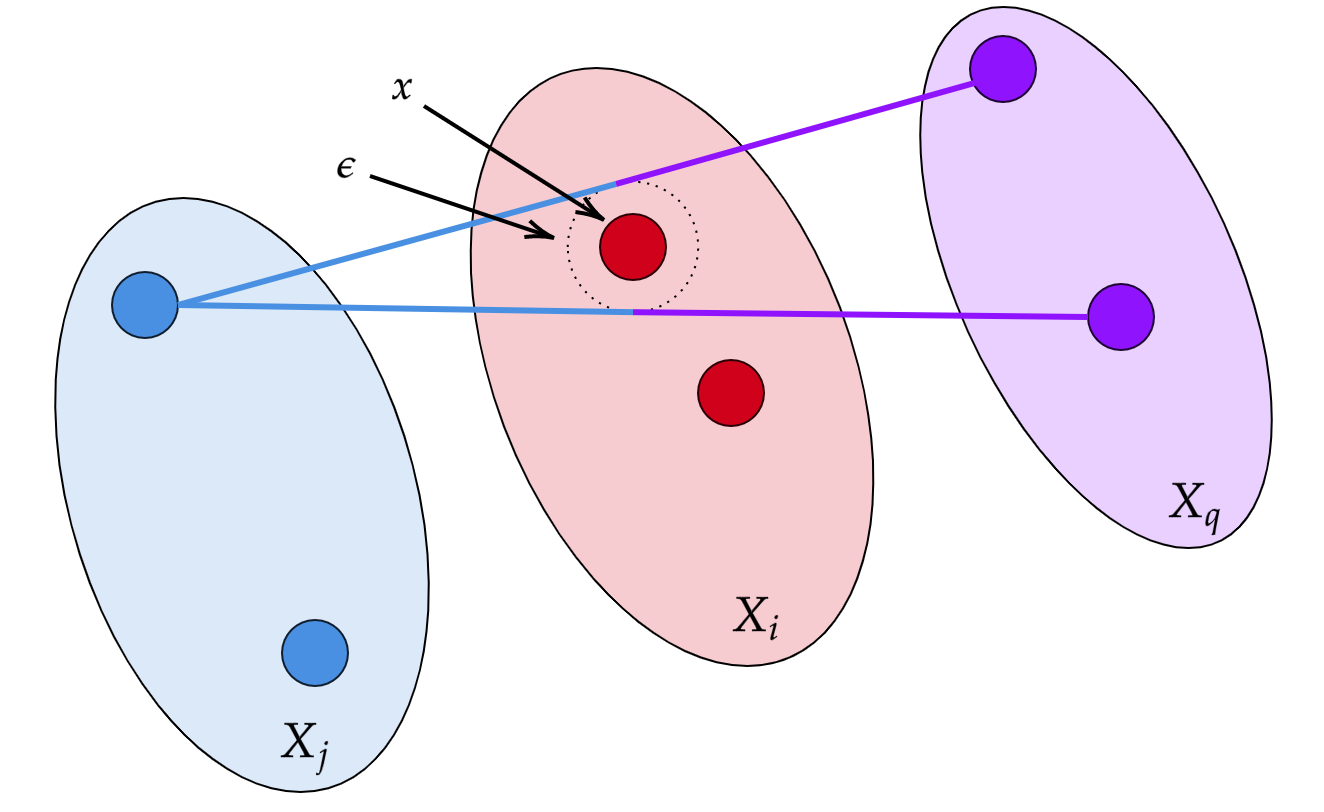}
    \caption{A visualization of Assumption \ref{as1}, i.e. the ``no collinearity'' assumption.}
    \label{as1vis}
\end{figure*}

\begin{figure*}[htp]
    \centering
    \includegraphics[scale=0.25]{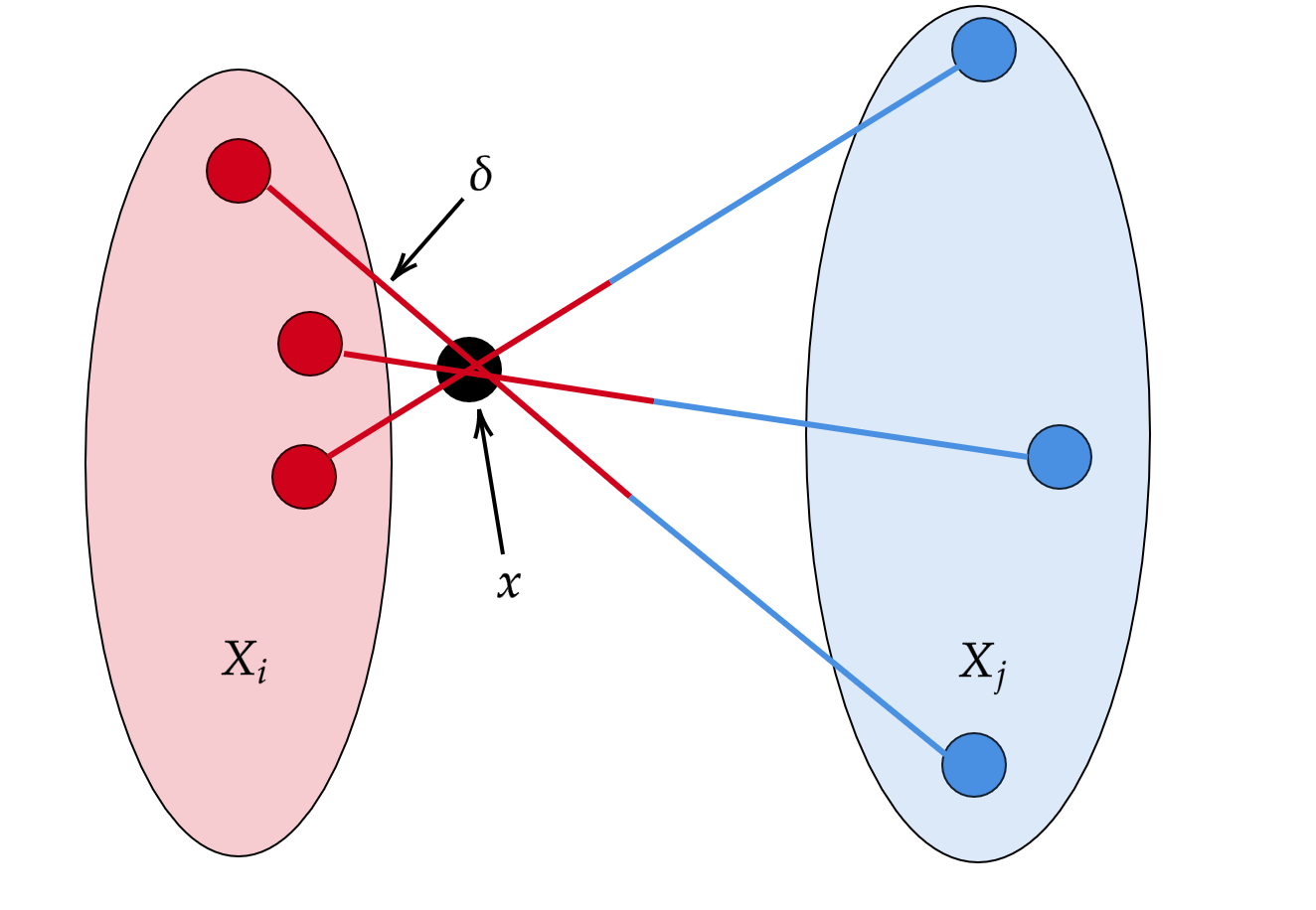}
    \caption{A visualization of Assumption \ref{as2}. Once again, the key idea is that $x$ falls at most $\delta$ away from $X_i$ on every line between $X_i$ and $X_j$ that intersects it.}
    \label{as2vis}
\end{figure*}

\section{Full Proofs for Section 2}
We now prove all results found in Section 2 of the main body of the paper in the order that they appear.

\subsection{Proofs for Propositions, Lemmas, and Theorems \ref{prop0} - \ref{theo2}}
\goodclass*
\begin{proof}
Intuitively, the idea is that when $h$ and $g$ differ at a point $x \in X_{mix}$, there must be a neighborhood of $x$ for which the constant function that takes value $h(x)$ has lower loss than $g$ due to the continuity constraint on $g$. We formalize this below.

Let $h$ be an arbitrary function in $\argmin_{g \in \mathcal{C}^*} J_{mix}(g, \mathbb{P}_X, \mathbb{P}_f)$ and let $g$ be a continuous function in $\mathcal{C}$. Consider a point $x \in X_{mix}$ such that the limit in Definition \ref{def0} exists and that $h(x) \neq g(x)$ (if such an $x$ did not exist, we would be done). Now let $\theta_h$ and $\theta_g$ be the constant functions whose values are $h(x)$ and $g(x)$ (respectively) on all of $\mathbb{R}^n$, and further let $\theta_{h_{\delta}} = \argmin_{\theta\in [0, 1]^k} J_{mix}(\theta)\vert_{B_{\delta}(x)}$ (this is shown to be a single value in the proof of Lemma \ref{lem} below). Finally, as a convenient abuse of notation, we will use $J_{mix}(\epsilon')\vert_{B_{\delta}(x)}$ to indicate the result of replacing all $\log g^i$ terms in the integrands of $J_{mix}(g)\vert_{B_{\delta}(x)}$ with $\epsilon'$, as shown below (note that in doing so, we can combine the $\lambda$ and $1 - \lambda$ terms from mixing classes $i$ and $j$; this simplifies the $J_{mix}^{i, j}$ expression obtained in the proof of Lemma \ref{lem}).
\begin{align*}
    J_{mix}(\epsilon')\vert_{B_{\delta}(x)} = -\epsilon' \sum_{i = 1}^k \sum_{j = 1}^k \xi_{x, \epsilon}^{i, j}
\end{align*}

Since $\theta_h \neq \theta_g$, we have that there exists a $\delta' > 0$ such that for $\delta \leq \delta'$ we have $\abs{\theta_{h_{\delta}} - \theta_{g}} = \epsilon > 0$. From this we get that there exists $\epsilon' > 0$ depending only on $\epsilon$ and $\theta_g$ such that:
\begin{align*}
   J_{mix}(\theta_g)\vert_{B_{\delta}(x)} - J_{mix}(\theta_h)\vert_{B_{\delta}(x)} \geq J_{mix}(\epsilon')\vert_{B_{\delta}(x)}  
\end{align*}

Now by the continuity of $g$ (and thus the continuity of $\log g^i$), we may choose $\delta \leq \delta'$ such that $J_{mix}(g)\vert_{B_{\delta}(x)} \in J_{mix}(\theta_g)\vert_{B_{\delta}(x)} \pm J_{mix}(\epsilon')\vert_{B_{\delta}(x)}$. This implies $J_{mix}(g)\vert_{B_{\delta}(x)} \geq J_{mix}(\theta_h)\vert_{B_{\delta}(x)}$, and since $x$ was arbitrary (within the initially mentioned constraints, as outside of them $h$ is unconstrained) we have the desired result.
\end{proof}

\closedform*
\begin{proof}
        Firstly, the condition that $x \in X_{mix}$ is necessary, since if $\ \bigcup_{i, j} A_{x, \epsilon}^{i, j}$ has measure zero
        the LHS of Equation \ref{eq_1} is not even defined.
        
        Now we simply take $h_{\epsilon}(x) = \argmin_{\theta\in [0, 1]^k} J_{mix}(\theta)\vert_{B_{\epsilon}(x)}$ as in Definition \ref{def0} and show that $h_{\epsilon}$ is well-defined.

        Since the $\argmin$ is over constant functions, we may unpack the definition of $J_{mix}\vert_{B_{\epsilon}(x)}$ 
        and pull each of the $\log \theta^i(z_{st}(\lambda))$ terms out of the integrands and rewrite them simply as $\log \theta^i$. Doing so, we obtain:
        \begin{align*}
            J_{mix}^{i, j} = -\big(\xi_{x, \epsilon, \lambda}^{i, j} \log \theta^i + (\xi_{x, \epsilon}^{i, j} - \xi_{x, \epsilon, \lambda}^{i, j})\log \theta^j\big)
        \end{align*}
        Plugging the above back into $J_{mix}$ and collecting the $\log \theta^i$ terms as $i = 1, ..., k$ we get:
        \begin{align*}
            J_{mix}(g, \mathbb{P}_X, \mathbb{P}_f)\vert_{B_{\epsilon}(x)} = -\sum_{i = 1}^k \bigg(\xi_{x, \epsilon}^{i, i} + \sum_{j \neq i} \big(\xi_{x, \epsilon, \lambda}^{i, j} + (\xi_{x, \epsilon}^{j, i} - \xi_{x, \epsilon, \lambda}^{j, i})\big)\bigg)\log \theta^i
        \end{align*}

        Where the first part of the summation above is from mixing $X_i$ with itself, and the second part of the summation corresponds to the $\lambda$ and
        $(1 - \lambda)$ components of mixing $X_i$ with $X_j$. Discarding the terms for which the coefficients above are 0 (the associated $\theta^i$ terms are taken to be 0, as anything else is suboptimal due to the summation constraint), we are left with a linear combination
        of $-\log \theta^{i_1}, -\log \theta^{i_2}, ..., -\log \theta^{i_m}$, where the set $\left\{i_1, i_2, ..., i_m\right\}$ is a subset of $\left\{1, ..., k\right\}$. If the aforementioned set consists of only a single index $i$, then the unique optimizer is of course $\theta^i = 1$. On the other hand, if there are multiple $\theta^{i_p}$, we cannot minimize $J_{mix}\vert_{B_{\epsilon}(x)}$ by choosing any of the $\theta^{i_p}$ to be 0, and as a consequence we also cannot choose any of the $\theta^{i_p}$ to be 1 since they are constrained to add to 1. 

        As such, we may consider each of the $\theta^{i_p} \in [\delta, 1 - \delta]$ for some $\delta > 0$. 
        With this consideration in mind, $J_{mix}\vert_{B_{\epsilon}(x)}$ is strictly convex in terms
        of the $\theta^{i_p}$, since the Hessian of a linear combination of $-\log \theta^{i_p}$ will be diagonal with positive entries when the arguments of the log
        terms are strictly greater than 0 and less than 1. Thus, as $[\delta, 1 - \delta]^m$ is compact, there exists a unique solution to 
        $h_{\epsilon}(x) = \argmin_{\theta\in [0, 1]^k} J_{mix}(\theta)\vert_{B_{\epsilon}(x)}$, justifying the use of equality.
        This unique solution is easily computed via Lagrange multipliers, and the solution is given in Equation \ref{eq_1}.

        We have thus far defined $h_{\epsilon}$ on $X_{mix}$, and it remains to check that this construction of $h_{\epsilon}$ corresponds to the restriction
        of a continuous function from $\mathbb{R}^n \to [0, 1]^k$. To do so, we first note that $X_{mix}$ is closed, since any limit point $x'$ of $X_{mix}$
        must necessarily either be contained in one of the supports $X_i$ (due to compactness) or on a line segment between/within two supports
        (else it has positive distance from $X_{mix}$), and every
        $\epsilon'$-neighborhood of $x'$ contains points $x$ for which $\ \bigcup_{i, j} A_{x, \epsilon}^{i, j}$ has positive measure for every $\epsilon$ (immediately
        implying that $\ \bigcup_{i, j} A_{x', \epsilon'}^{i, j}$ has positive measure). 

        Now we can check that $h_{\epsilon}$ is continuous on $X_{mix}$ as follows. Consider a sequence $x_m \to x^*$; we wish to show that
        $h_{\epsilon}(x_m) \to h_{\epsilon}(x^*)$. Since the codomain of $h_{\epsilon}$ is compact, 
        the sequence $h_{\epsilon}(x_m)$ must have a limit point $h^*$.
        Furthermore, since each $h_{\epsilon}(x_m)$ is the unique minimizer of $J_{mix}\vert_{B_{\epsilon}(x_m)}$, we must have that $h^*$ is the
        unique minimizer of $J_{mix}\vert_{B_{\epsilon}(x^*)}$, implying that $h^* = h_{\epsilon}(x^*)$.
        We have thus established the continuity of $h_{\epsilon}$ on $X_{mix}$, and since $X_{mix}$ is closed (in fact, compact), we have by the
        Tietze Extension Theorem that $h_{\epsilon}$ can be extended to a continuous function on all of $\mathbb{R}^n$.

        Finally, if $\lim_{\epsilon \to 0} h_{\epsilon}(x)$ exists, then it is by definition $h(x)$ for any $h \in \mathcal{C}^*$ and therefore for any $h \in \argmin_{g \in \mathcal{C}^*} J_{mix}(g, \mathbb{P}_X, \mathbb{P}_f)$.
\end{proof}

\begin{corollary}[Symmetric Version]\label{symlem}
If $\mathbb{P}_f$ is symmetric, then we may simplify $h_{\epsilon}^i$ to be:
\begin{align*}
    h_{\epsilon}^i(x) = \frac{\xi_{x, \epsilon}^{i, i} + 2\sum_{j \neq i} \xi_{x, \epsilon, \lambda}^{i, j}}{\sum_{q = 1}^k \big(\xi_{x, \epsilon}^{q, q} + 2\sum_{j \neq q} \xi_{x, \epsilon, \lambda}^{q, j}\big)}
\end{align*}
\end{corollary}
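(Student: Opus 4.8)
The plan is to exploit the symmetry of the mixing distribution through the natural \emph{swap} map $\Phi(s, t, \lambda) = (t, s, 1 - \lambda)$, which I claim furnishes a measure-preserving bijection between $A_{x, \epsilon}^{i, j}$ and $A_{x, \epsilon}^{j, i}$. First I would verify that $\Phi$ really maps one set onto the other: if $(s, t, \lambda) \in A_{x, \epsilon}^{i, j}$ then $s \in X_i$, $t \in X_j$, and $\lambda s + (1 - \lambda) t \in B_{\epsilon}(x)$; applying $\Phi$ gives $(t, s, 1 - \lambda)$ with $t \in X_j$, $s \in X_i$, and the mixed point $(1 - \lambda) t + \lambda s$ is literally the same point, hence still in $B_{\epsilon}(x)$. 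Thus $\Phi$ sends $A_{x, \epsilon}^{i, j}$ bijectively onto $A_{x, \epsilon}^{j, i}$, with inverse given by the analogous swap.

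Next I would argue that $\Phi$ preserves the measure $\mathbb{P}_X \times \mathbb{P}_X \times \mathbb{P}_f$. The product $\mathbb{P}_X \times \mathbb{P}_X$ is invariant under exchanging the two coordinates $s, t$ for free, and $\mathbb{P}_f$ is invariant under $\lambda \mapsto 1 - \lambda$ precisely because we assumed $\mathbb{P}_f$ symmetric. This is the only place the symmetry hypothesis enters, and it is the crux of the argument. With measure-preservation in hand, the change-of-variables formula gives, for any integrand $\psi$,
\begin{align*}
    \int_{A_{x, \epsilon}^{j, i}} \psi(s, t, \lambda) \ d(\mathbb{P}_X \times \mathbb{P}_X \times \mathbb{P}_f) = \int_{A_{x, \epsilon}^{i, j}} \psi(t, s, 1 - \lambda) \ d(\mathbb{P}_X \times \mathbb{P}_X \times \mathbb{P}_f).
\end{align*}
Taking $\psi \equiv 1$ yields $\xi_{x, \epsilon}^{j, i} = \xi_{x, \epsilon}^{i, j}$, while taking $\psi(s, t, \lambda) = \lambda$ (so that $\psi(t, s, 1 - \lambda) = 1 - \lambda$) yields $\xi_{x, \epsilon, \lambda}^{j, i} = \xi_{x, \epsilon}^{i, j} - \xi_{x, \epsilon, \lambda}^{i, j}$.

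These two identities then combine into the single simplification I need: $\xi_{x, \epsilon}^{j, i} - \xi_{x, \epsilon, \lambda}^{j, i} = \xi_{x, \epsilon}^{i, j} - (\xi_{x, \epsilon}^{i, j} - \xi_{x, \epsilon, \lambda}^{i, j}) = \xi_{x, \epsilon, \lambda}^{i, j}$, so that the bracketed term $\xi_{x, \epsilon, \lambda}^{i, j} + (\xi_{x, \epsilon}^{j, i} - \xi_{x, \epsilon, \lambda}^{j, i})$ appearing in Lemma \ref{lem} collapses to $2 \xi_{x, \epsilon, \lambda}^{i, j}$. Substituting this into both the numerator and the denominator of the formula in Lemma \ref{lem} produces exactly the stated expression, completing the proof.

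Since every step reduces to a clean change of variables, I do not expect a genuine obstacle here; the one thing to be careful about is checking that the mixed point is truly unchanged under $\Phi$, so that the neighborhood constraint defining the $A$-sets is respected, and that the compensation between the swap $\lambda \mapsto 1 - \lambda$ and the symmetry of $\mathbb{P}_f$ is applied in the correct direction.
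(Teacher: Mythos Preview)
Your proof is correct and follows essentially the same approach as the paper: the paper's one-line proof simply asserts the identity $\xi_{x, \epsilon, \lambda}^{i, j} = \xi_{x, \epsilon}^{j, i} - \xi_{x, \epsilon, \lambda}^{j, i}$ and substitutes, while you supply the justification for that identity via the swap map $\Phi$. Your derivation is a strictly more detailed version of the same argument.
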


\begin{proof}
In the symmetric case, we have $\xi_{x, \epsilon, \lambda}^{i, j} = \xi_{x, \epsilon}^{j, i} - \xi_{x, \epsilon, \lambda}^{j, i}$.
\end{proof}

\discontexample*
\begin{proof}
        Considering the point $x = (0, 0)$, we note that only $\xi_{x, \epsilon}^{1, 1}$ and $\xi_{x, \epsilon}^{2, 2}$ are non-zero for all $\epsilon > 0$.
        Furthermore, we have that $\xi_{x, \epsilon}^{1, 1} = \xi_{x, \epsilon}^{2, 2} = \frac{\epsilon}{4}$ since the $\mathbb{P}_f$-measure of
        $B_{\epsilon}(\frac{1}{2})$ is $2\epsilon$ and the sets $A_{x, \epsilon}^{1, 1}, A_{x, \epsilon}^{2, 2}$ have $\mathbb{P}_X$-measure $\frac{1}{8}$.
From this we have that the limit $\lim_{\epsilon \to 0} h_{\epsilon}(x) = \left[\frac{1}{2} \ \frac{1}{2}\right]^{\top}$ is the Mixup-optimal value at $x$.

        On the other hand, every other point on the line segment connecting the points in $X_1$ will have an $\epsilon$-neighborhood disjoint
        from the line segment between the points in $X_2$ (and vice versa), so we will have $h_{\epsilon'}^1(x) = 1$ for all $\epsilon' \leq \epsilon$.
        This implies that $h_{\epsilon'}(x) \to \left[1 \ 0\right]$ (and an identical result for points on the $X_2$ line segment), so
        we have that the Mixup-optimal classifier is discontinuous at $(0, 0)$ as desired.
\end{proof}

\ncaltheorem*
\begin{proof}
        Fix a classifier $h_{\epsilon}$ as defined in Lemma \ref{lem} on $\mathcal{X}_3^2$. 
        Now for $Y \sim \mathrm{Beta}(\alpha, \alpha)$ we have by the fact that $\mathrm{Beta}(\alpha, \alpha)$ is strictly subgaussian
        that $P\left(\abs{Y - \frac{1}{2}} \leq \epsilon\right) \geq 1 - 2\exp\left(-\epsilon^2/(2\sigma^2)\right)$ where $\sigma^2 = \frac{1}{4\alpha + 2}$
        is the variance of $\mathrm{Beta}(\alpha, \alpha)$.
        As a result, we can choose $\alpha > \frac{1}{2} \left(\frac{\log 4}{\epsilon^2} - 1\right)$ to guarantee that
        $P\left(\abs{Y - \frac{1}{2}} \leq \epsilon\right) > \frac{1}{2}$ and therefore that $\xi_{1, \epsilon}^{1, 1} > \frac{1}{9} = \xi_{1, \epsilon}^{2, 2}$.

        Now we have by Lemma \ref{lem} (or more precisely, Corollary \ref{symlem}) that:
        \begin{align*}
                h_{\epsilon}^1(1) = \frac{\xi_{1, \epsilon}^{1, 1} + 2\xi_{1, \epsilon, \lambda}^{1, 2}}{\xi_{1, \epsilon}^{1, 1} + 4\xi_{1, \epsilon, \lambda}^{1, 2} + \xi_{1, \epsilon}^{2, 2}}
                       > \frac{1}{2}
        \end{align*}

        Thus, we have shown that $h_{\epsilon}$ will classify the point 1 as class 1 despite
        it belonging to class 2.
\end{proof}

\discontprop*
\begin{proof}
        The full proof is not much more than the proof sketch in the main body. For $x \in X_i$, we have that $\xi_{x, \epsilon}^{j,q} \to 0$ 
        and $\xi_{x, \epsilon, \lambda}^{j, q} \to 0$ as $\epsilon \to 0$ for every $(j, q) \neq (i, i)$, while 
        $\xi_{x, \epsilon}^{i, i} \to \frac{1}{\abs{\bigcup_i X_i}^2}$ and 
        $\xi_{x, \epsilon, \lambda}^{i, i} \to \frac{\mathbb{E}_{\mathbb{P}_f} \left[\lambda\right]}{\abs{\bigcup_i X_i}^2}$.
        As a result, we have $\lim_{\epsilon \to 0} h_{\epsilon}^i(x) = 1$ as desired.
\end{proof}

\conttheorem*
\begin{proof}
Obtained as a corollary of Theorem \ref{theo4}.
\end{proof}

\subsection{Proof for Theorem \ref{theo3}}
Prior to proving Theorem \ref{theo3}, we first introduce some additional notation as well as a lemma that will be necessary for the proof.

\textbf{Notation:} Throughout the following $m$ corresponds to the number of data points being considered (as in Theorem \ref{theo3}), and as a shorthand we use $[m]$ to indicate $\left\{1, ..., m\right\}$. Additionally, for two matrices $A$ and $B$, we use the notation $[A, B]$ to indicate the matrix formed by the concatenation of the columns of $B$ to $A$. We use $e_i$ to denote the $i$-th basis vector in $R^m$, and use $e_i'$ to denote the $i$-th basis vector in $R^{\binom{m}{2}}$. Let $A\in\mathbb{R}^{\binom{m}{2}\times m}$ be the ``Mixup matrix'' where each row has two $1$s and the other entries are $0$, representing a mixture of the two data points whose associated indices have a $1$. The $\binom{m}{2}$ rows enumerate all the possible mixings of the $m$ data points. In this way, $A$ is uniquely defined up to a permutation of rows. We can pick any representative as our $A$ matrix, and prove the following lemma.

\begin{restatable}{lemma}{uniquerecovery}
\label{lem:unique-recovery}
Assume $m > 6$, and $P\in\mathbb{R}^{\binom{m}{2}\times \binom{m}{2}}$ is a permutation matrix. If $PA$ is not a permutation of the columns of $A$, then the rank of $[A,PA]$ is larger than $m$.
\end{restatable}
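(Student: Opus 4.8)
I observe that $A$ is exactly the unsigned vertex--edge incidence matrix of the complete graph $K_m$: each column indexes a data point (vertex) and each row is $e_i + e_j$ for an edge $\{i,j\}$. Since $K_m$ contains a triangle for $m \ge 3$, a short computation (e.g.\ $(e_i+e_j)+(e_i+e_k)-(e_j+e_k) = 2e_i$) shows every standard basis vector lies in the row space, so $\operatorname{rank}(A) = m$. Consequently $\operatorname{rank}([A,PA]) \ge m$ always, and it suffices to prove the contrapositive: if $\operatorname{rank}([A,PA]) = m$ then $PA$ is a column permutation of $A$. Assuming $\operatorname{rank}([A,PA]) = m = \operatorname{rank}(A)$ forces $\operatorname{col}(PA) \subseteq \operatorname{col}(A)$, and because $A$ has full column rank there is a unique $M \in \mathbb{R}^{m\times m}$ with $PA = AM$; since $P$ is a permutation, $\operatorname{rank}(AM) = \operatorname{rank}(A) = m$, so $M$ is invertible. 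The whole lemma thus reduces to showing that this $M$ must be a permutation matrix.

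\textbf{Constraints on $M$.} Writing $\mathbf{m}_i$ for the $i$-th row of $M$, the identity $PA = AM$ says that for every edge $\{i,j\}$ of $K_m$ the vector $\mathbf{m}_i + \mathbf{m}_j$ is a row of $PA$, hence a row of $A$ \--- that is, a $0/1$ vector with exactly two ones. Two bookkeeping consequences follow. Summing $\mathbf{m}_i + \mathbf{m}_j$ over all $\binom{m}{2}$ edges yields $(m-1)\mathbf{1}^\top M$ on one side and $(m-1)\mathbf{1}^\top$ on the other (each column of $A$ has $m-1$ ones), so every column of $M$ sums to $1$. Comparing entrywise sums gives $s_i + s_j = 2$ for all $i \ne j$, where $s_i$ is the row sum of $\mathbf{m}_i$, which for $m \ge 3$ forces every row sum to equal $1$.

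\textbf{The crux: column rigidity.} The main work is to show each column of $M$ is a standard basis vector. Fixing a column and calling its entries $a_1,\dots,a_m$, the fact that each $\mathbf{m}_i + \mathbf{m}_j$ is $0/1$ gives $a_i + a_j \in \{0,1\}$ for all $i \ne j$, while the previous step gives $\sum_i a_i = 1$. I would argue by a short case analysis on repeated values: any value occurring twice must lie in $\{0, \tfrac{1}{2}\}$ (since $2a \in \{0,1\}$), and if all $a_i$ were distinct then the $m-1$ strictly increasing sums $a_1 + a_2 < \dots < a_1 + a_m$ would all lie in $\{0,1\}$, forcing $m \le 3$. For $m$ large this leaves two possibilities, both pinned down by the sum constraint: if some $a_i = \tfrac{1}{2}$ then all entries lie in $\{-\tfrac{1}{2},\tfrac{1}{2}\}$ with at most one $-\tfrac{1}{2}$, making the column sum $\tfrac{m}{2}$ or $\tfrac{m-2}{2}$, impossible for $m \ge 5$; otherwise the only repeatable value is $0$, and pairing any nonzero entry with a $0$ forces it to equal $1$, with at most one $1$ allowed, so the column sum of $1$ makes the column exactly some $e_r$. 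Hence every column of $M$ is a basis vector, and invertibility forces them to be distinct, so $M$ is a permutation matrix \--- contradicting the hypothesis that $PA$ is not a column permutation of $A$.

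\textbf{Main obstacle.} The linear-algebra reduction ($\operatorname{rank} = m \Rightarrow PA = AM$) and the bookkeeping identities are routine; the real content is the column-rigidity step, i.e.\ showing that the combinatorial constraint ``$\mathbf{m}_i + \mathbf{m}_j$ is the indicator of an edge for every edge $\{i,j\}$'' together with the column-sum normalization rigidly forces $0/1$ entries. The hypothesis $m > 6$ is comfortably more than the $m \ge 5$ my argument needs, so I expect no edge-case trouble at the stated bound.
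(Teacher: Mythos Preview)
Your proof is correct and takes a genuinely different route from the paper's. The paper argues constructively: after showing $\operatorname{rank}(A)=m$, it uses the hypothesis that $PA$ is not a column permutation of $A$ to locate a set $I$ of $m-1$ row indices on which some column of $A$ is all ones but no column of $PA$ is; from this it builds an explicit vector $v=Au$ in $\operatorname{col}(A)$ and then rules out $v\in\operatorname{col}(PA)$ by showing (via Tur\'an's theorem applied to the ``edge graph'' of the remaining rows) that the complementary block $C_2$ of $PA$ has full column rank, which pins down the only candidate preimage and yields a contradiction.

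Your approach is cleaner and more algebraic: passing to the contrapositive, you reduce to $PA=AM$ for an invertible $M$, extract the constraints $a_i+a_j\in\{0,1\}$ and $\sum_i a_i=1$ on each column of $M$, and then do a short pigeonhole/case analysis to force each column to be a standard basis vector. This avoids Tur\'an's theorem and the edge-selection argument entirely, and as you note it actually goes through for all $m\ge 5$, whereas the paper's triangle-finding step genuinely needs $m$ around $6$. The paper's argument, on the other hand, is more ``hands-on'' in that it exhibits a concrete witness vector separating the two column spaces, which some readers may find more illuminating for intuition about why the rank must jump.

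One small point worth making explicit in your write-up: in Case~B you implicitly use that at least one entry equals $0$ (so you can ``pair any nonzero entry with a $0$''). This is justified because for $m\ge 4$ the distinctness case is already excluded, so some value repeats, and in Case~B the only repeatable value is $0$; hence at least two zeros are present. Stating this sentence would remove any doubt.
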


Using this lemma we can prove Theorem \ref{theo3}.
\recovery*

\begin{proof}
We only need to show that the set of samples that cannot be uniquely determined from their midpoints has Lebesgue measure zero, since $\mathbb{P}_X$ is absolutely continuous with respect to the Lebesgue measure. It suffices to show this for the first entry (dimension) of the $x_i$'s, as the result then follows for all dimensions. Let $\{x_i'\}$ be another sample of $m$ points. For convenience, we group the first entries of the data points $\{x_i\}_{i=1}^m$ into a vector $w^*\in\mathbb{R}^m,$ and similarly obtain $w\in\mathbb{R}^m$ from $\{x_i'\}_{i=1}^m$.
Suppose $w^*\in \mathbb{R}^m$ is not a permutation of $w\in \mathbb{R}^m$ but that they have the same set of Mixup points. We only need to show that the set of such $w^*$ has measure zero.

Suppose $A$ is a Mixup matrix and $P$ is a permutation matrix. Suppose $PA$ is not a permutation of the columns in $A$. We would need $Aw^* = PAw$, which is equivalent to $[A, PA][(w^*)^\top , -w^\top]^\top = 0$ (here $[(w^*)^\top , -w^\top]^\top$ indicates the vector resulting from concatenating $-w$ to $w^*$). According to Lemma~\ref{lem:unique-recovery}, we know the rank of $[A, PA]$ is at least $m+1$, which implies that the solution set of $w^*$ is at most $m-1.$

So fixing $A$ and $P$, the set of non-recoverable $w^*$ has measure zero. 
There are only a finite number of combinations of $A$ and $P$. Thus, considering all of these $A$ and $PA$, the full set of non-recoverable $w^*$ still has measure zero.
\end{proof}

\subsubsection{Proof of Supporting Lemma}
\uniquerecovery*
\begin{proof}
First, we show that both the ranks of $A$ and $PA$ are $m$. For all $i\in[m-1]$, define $u_i=e_1+e_{i+1}$, and define $u_m=e_2+e_3$. Note that these $m$ vectors are all rows of $A$. The first $(m-1)$ vectors $\{u_i\}_{i=1}^{m-1}$ are linearly independent because each $u_i$ has a unique direction $e_{i+1}$ that is not a linear combination of any other vectors in $\{u_i\}_{i=1}^{m-1}$. Besides, we know that the span of $\{u_i\}_{i=1}^{m-1}$ is a subspace of $\{v\in\mathbb{R}^m:\sum_{i=2}^m v(i)=v(1)\}$ where $v(i)$ is the $i$-th entry of $v$. Therefore, $u_m$ doesn't lie in the span of $\{u_i\}_{i=1}^{m-1}$, which implies that these $n$ vectors $\{u_i\}_{i=1}^{n}$ are linearly independent. This shows that $A$ is at least rank $m$. Since $A$ only has $m$ columns, we know that the rank of $A$ is $m$. The matrix $PA$ is also rank $m$ because $P$ is full rank.

Therefore, to show that the rank of $[A,PA]$ is larger than $m$, we only need to find a vector $v$ that lies in the column span of $A$ but is not in the column span of $PA$. To do this, we need the following claim: 

\textbf{Claim 1.} There exists a row index subset $I\subseteq[\binom{m}{2}]$ with size $(m-1)$ such that the sub-matrix composed by the rows of $A$ with indices in $I$ has a column that is all-one vector, but every column of the sub-matrix composed by the rows of $PA$ with indices in $I$ is not all-one vector.

\textbf{Proof of Claim 1.} By the definition of $A$, each column of $A$ has only $(m-1)$ ones and the other entries are zero. Therefore, the position of $(m-1)$ ones in a column of $A$ can uniquely determine that column vector. Besides, for an index set $I$ with size $(m-1)$, the sub-matrix composed by the rows of $A$ with indices in $I$ cannot have two columns that are both all-one vector. This is because otherwise $A$ will have duplicate rows, which contradicts the definition of $A$. Therefore, there are $m$ possible choices of $I$, each of which corresponds to the positions of all ones in a column of $A$.

Assume by contradiction that for these $m$ choices of $I$, there exist a column of the sub-matrix composed by the rows of $PA$ with indices in $I$ that is all-one vector. Then these $m$ choices of $I$ also correspond to the positions of all ones in a column of $PA$. This means that the columns of $A$ and $PA$ are the same up to permutations, which contradicts the assumption of our theorem. This finishes the proof of this claim.

Now define $B_1$ as the sub-matrix composed by the rows of $A$ with indices in $I$, and $C_1$ as the sub-matrix composed by the rows of $PA$ with indices in $I$. Without loss of generality, suppose $I=[m-1]$, and suppose the first column of $B_1$ is all-one vector. Let $u=-e_1+\sum_{i=2}^m e_i\in\mathbb{R}^m$, we know that $Au=2\sum_{i=m}^{\binom{m}{2}}e_i'$,
i.e., the first $(m-1)$ entries of $Au$ are $0$, and the other entries are $2$. Define $v\triangleq Au$, we are going to show that $v$ is not in the column span of $PA$. Let $C_2$ be the sub-matrix in $PA$ consisting of the rows that are not in $C_1$, then the following claim shows that $C_2$ has full column rank.

\textbf{Claim 2.} $rank(C_2)=m$.

\textbf{Proof of Claim 2.} In this proof, we will consider each column of $C_2$ as a vertex. Since each row of $C_2$ has only two $1$s, we view each row as an edge connecting the two columns which correspond to the two $1$s in that row. From the definition of $A$ we know that the graph we constructed is a simple undirected graph. Then we are going to show that we can select $m$ ``edges'' from $C_2$ which are linearly independent.

There are $\binom{m}{2}-(m-1)$ edges in $C_2$. From $m>6$ we know that $\binom{m}{2}-(m-1)>\frac{m^2}{4}$, so from Turán's theorem, $C_2$ contains at least one triangle. Assume this triangle is $(i,j,k)$, then we select edges $(i,j)$, $(j,k)$ and $(i,k)$ and define $E_3=\{i,j,k\}$. $\forall r\in[m], r\geq 3$, we select an edge connecting $E_r$ with $[m]\setminus E_r$. Assume that edge is $(s,t)$ where $s\in E_r$ and $t\notin E_r$, we then add $t$ to $E_r$, i.e., $E_{r+1}=E_r\cup\{t\}$. In the next two paragraphs, we are going to show that there are always edges between $E_r$ and $[m]\setminus E_r$ (so we can successfully select $m$ edges in total), and the $m$ edges we selected are linearly independent.

In matrix $PA$, there are $r(m-r)$ edges between $E_r$ and $[m]\setminus E_r$. Since $C_2$ is constructed by deleting $(m-1)$ edges from $PA$, the number of edges left between $E_r$ and $[m]\setminus E_r$ is at least $r(m-r)-(m-1)$. When $3\leq r\leq m-2$, we have $r(m-r)-(m-1)>0$. When $r=m-1$, the only case where there is no edge between $E_r$ and $[m]\setminus E_r$ is when all edges from vertex $[m]\setminus E_r$ is in $C_1$, which means that $C_1$ has a column that is all-one vector and is a contradiction. Therefore, there are always edges between $E_r$ and $[m]\setminus E_r$ and we can successfully select $m$ edges.

Then we only need to show that these $m$ selected edges are linearly independent. We use $\{u_i\}_{i=1}^m$ to denote the vectors that correspond to these edges, i.e., $u_1=e_i+e_j$, $u_2=e_j+e_k$, $u_3=e_i+e_k$, $\cdots$ Assume by contradiction that they are linearly independent, then there exists $x\in\mathbb{R}^m, x\neq 0$ such that $\sum_{i=1}^mx(i)u_i=0$. By the selection process of the edges, we know that $\forall r\geq 4$, $u_r$ has a unique direction, so $x(r)=0$. Therefore, $x(1)u_1+x(2)u_2+x(3)u_3=0$. Since $\{u_1,u_2,u_3\}$ are linearly independent, we have $x=0$, which is a contradiction. Thus, these $m$ selected edges are linearly independent, proving the claim.

Now assume by contradiction that $v$ is in the column span of $PA$, then $\exists w\in\mathbb{R}^m$ such that $v=PAw$. Let $v_2\in\mathbb{R}^{\binom{m}{2}-(m-1)}$ be the bottom $\binom{m}{2}-(m-1)$ entries of $v$, then $v_2=C_2w$. Define $w_0$ to be the all-one vector in $\mathbb{R}^m$, we know that $w_0$ is a valid solution to $v_2=C_2w$. Since $C_2$ has full column rank, $w_0$ must be the unique solution to $v_2=C_2w$. This implies that $v=PAw_0$. However, we know that $PAw_0 = 2\sum_{i=1}^{\binom{m}{2}}e_i'\neq v$, which is a contradiction. Thus, $v$ is not in the column span of $A$, which finishes the proof of the lemma.
\end{proof}

\section{Additional Experiments for Section 2}\label{additionalexps}
\subsection{Experiments for Section 2.3}
As noted in the main paper, it is not difficult to extend Definition \ref{ncal} to construct datasets on which Mixup training empirically fails for small values of $\alpha$. The observation to make is that a major part of the proof of Theorem \ref{theo} is the fact that the same-point mixing probability of point 1 is small relative to the same-class mixing probability of class 0. As the former probability decreases quadratically with the dataset size $m$, we are motivated to consider extensions of $\mathcal{X}_3^2$ consisting of more points and more classes. Towards that end, we consider the following generalization of $\mathcal{X}_3^2$:

\begin{definition}[$m$-Point $k$-Class Alternating Line]\label{mkcal}
We define $\mathcal{X}_m^k$ to be the $k$-class classification dataset consisting of the points $\left\{0, 1, ..., m - 1\right\}$ classified according to their value mod $k$ incremented by 1. As before, $\mathbb{P}_X$ is the normalized counting measure on $\mathcal{X}_m^k$.
\end{definition}

We now consider training a two-layer feedforward network on $\mathcal{X}_{10}^2$ and $\mathcal{X}_{10}^{10}$ using the same procedure and hyperparameters as in the main paper. The results are shown in Figure \ref{mkcalfig} below.

\begin{figure*}[ht]
       \begin{subfigure}[t]{0.5\textwidth}
              \centering
              \includegraphics[width=\textwidth]{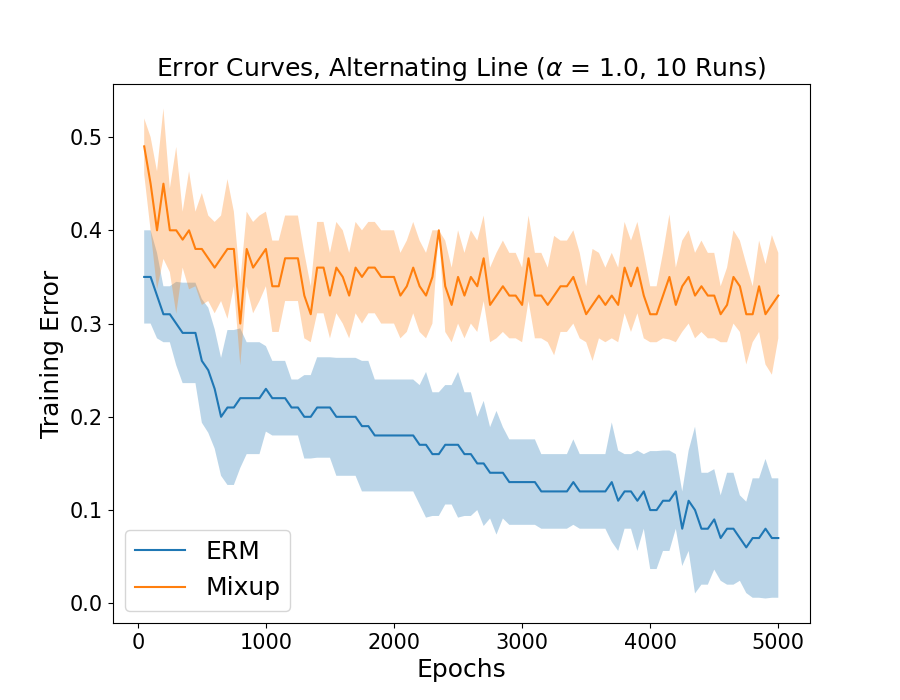}
              \caption{$\mathcal{X}_{10}^2, \ \alpha = 1$}
       \end{subfigure}%
       \begin{subfigure}[t]{0.5\textwidth}
              \centering
              \includegraphics[width=\textwidth]{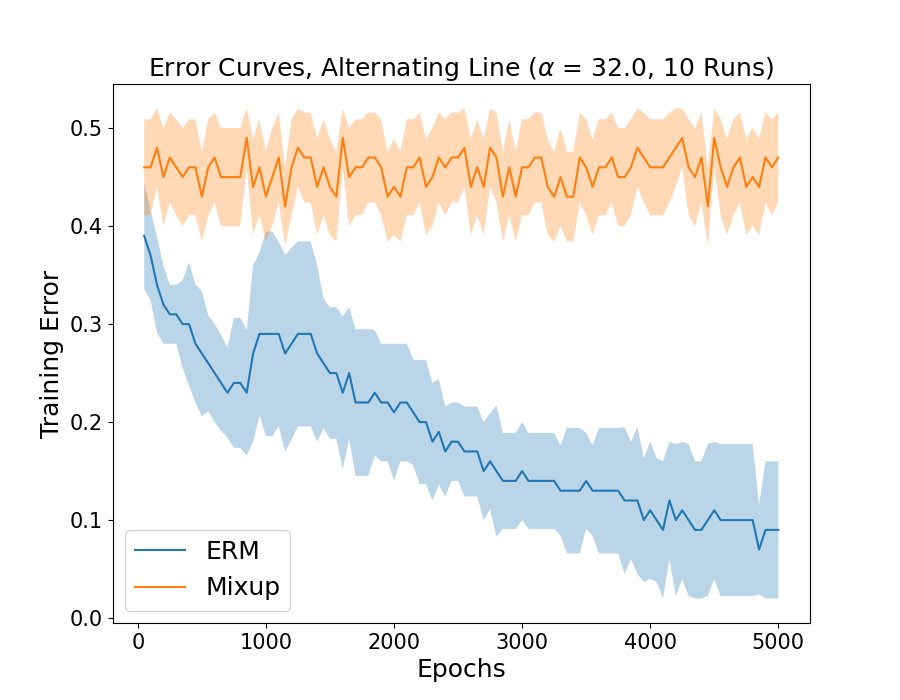}
              \caption{$\mathcal{X}_{10}^2, \ \alpha = 32$}
       \end{subfigure}
       \begin{subfigure}[t]{0.5\textwidth}
              \centering
              \includegraphics[width=\textwidth]{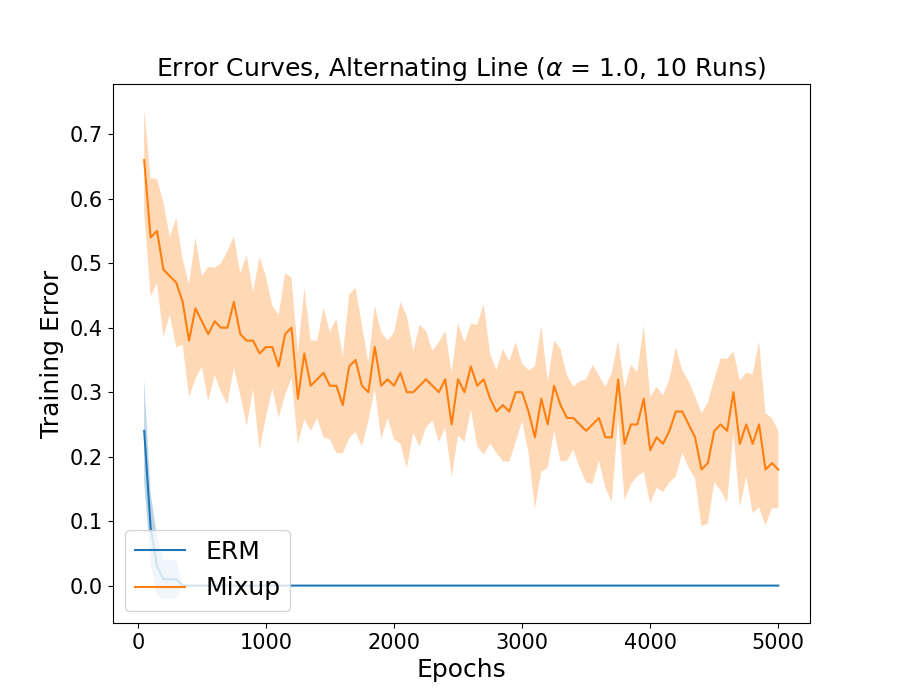}
              \caption{$\mathcal{X}_{10}^{10}, \ \alpha = 1$}
       \end{subfigure}%
       \begin{subfigure}[t]{0.5\textwidth}
              \centering
              \includegraphics[width=\textwidth]{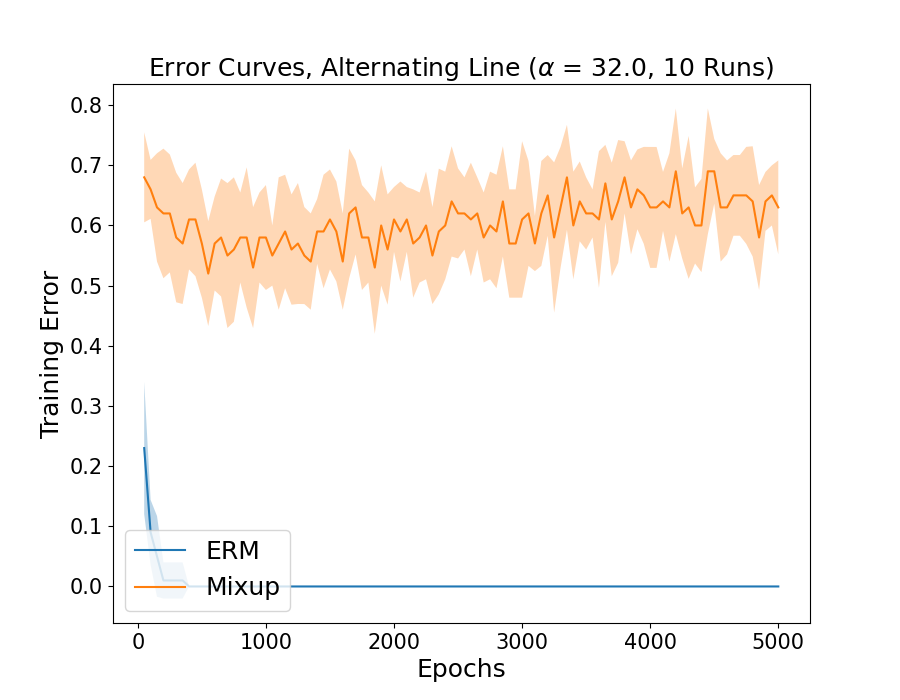}
              \caption{$\mathcal{X}_{10}^{10}, \ \alpha = 32$}
       \end{subfigure}
       \caption{Training error plots for Mixup and regular training on $\mathcal{X}_{10}^2$ and $\mathcal{X}_{10}^{10}$. Each curve corresponds to the mean of 10 training runs, and
       the area around each curve represents a region of one standard deviation. Note that the ERM curves appear slightly different across $\alpha$ values due to changes in $y$-axis scale.}
       \label{mkcalfig}
\end{figure*}

Here we see that even at $\alpha = 1$, Mixup training fails to minimize the original empirical risk on $\mathcal{X}_{10}^2$ and $\mathcal{X}_{10}^{10}$, whereas in the main paper we noted that at $\alpha = 1$ Mixup training had no issues minimizing the original risk on $\mathcal{X}_3^2$. Interestingly, we find that even standard training does not completely minimize the original empirical risk on $\mathcal{X}_{10}^2$, once again perhaps a result of the regularity of the two-layer network model (although this is merely a hypothesis, and we do not analyze this phenomenon further here).

\subsection{Experiments for Section 2.4}
Here we include the additional experiments mentioned in Section \ref{subsec:gooderm}, namely the results of training ResNet-18 on MNIST, CIFAR-10, and CIFAR-100 with and without Mixup for $\alpha = 1, 32, \text{ and } 128$ (with the other hyperparameters being as described in Section \ref{subsec:gooderm}). The results are shown in Figure \ref{fig2sup1}. As mentioned in the main body, these choices of $\alpha$ only lead to Mixup performing more similarly to ERM than $\alpha = 1024$.

\begin{figure*}[ht]
       \begin{subfigure}[t]{0.33\textwidth}
              \centering
              \includegraphics[width=\textwidth]{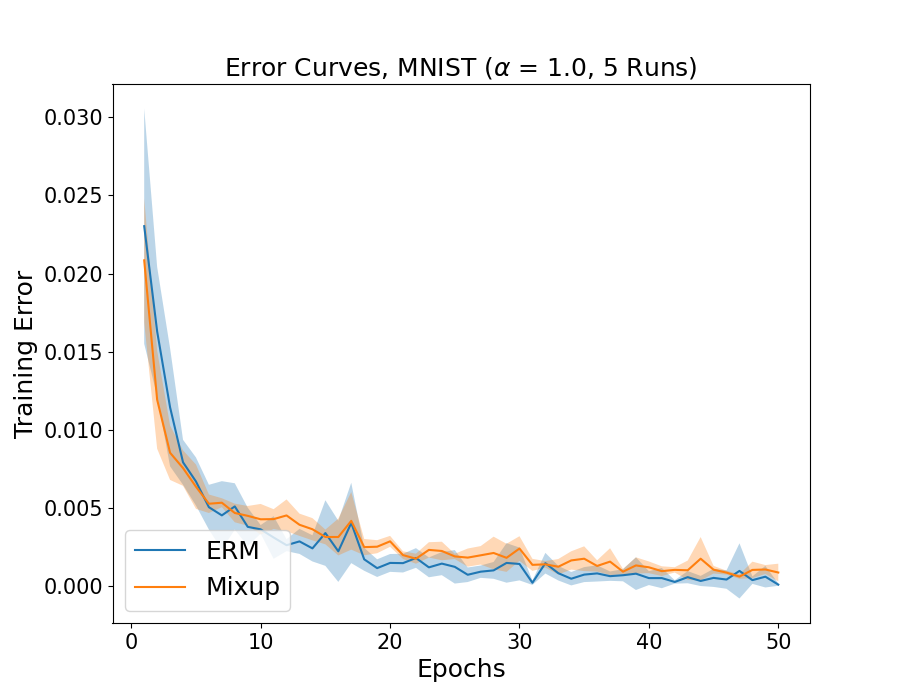}
              \caption{MNIST ($\alpha = 1$)}
       \end{subfigure}%
       \begin{subfigure}[t]{0.33\textwidth}
              \centering
              \includegraphics[width=\textwidth]{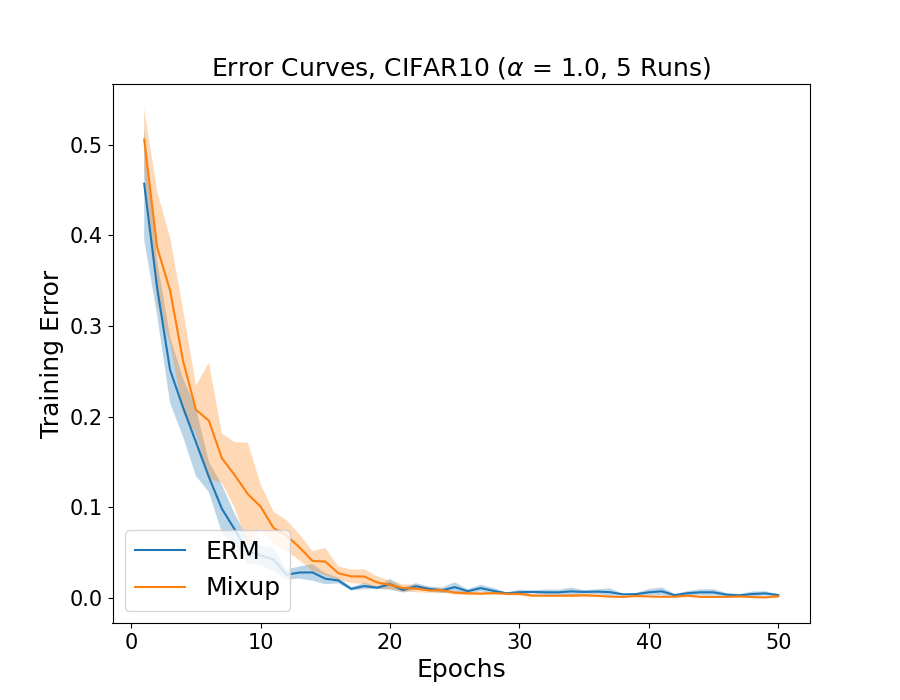}
              \caption{CIFAR-10 ($\alpha = 1$)}
       \end{subfigure}%
       \begin{subfigure}[t]{0.33\textwidth}
              \centering
              \includegraphics[width=\textwidth]{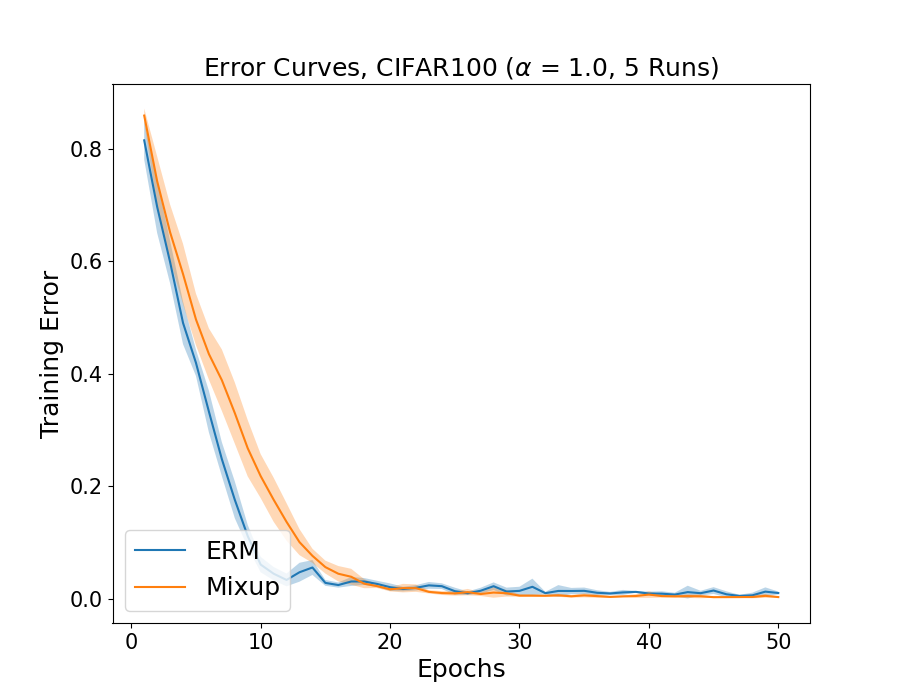}
              \caption{CIFAR-100 ($\alpha = 1$)}
       \end{subfigure}
              \begin{subfigure}[t]{0.33\textwidth}
              \centering
              \includegraphics[width=\textwidth]{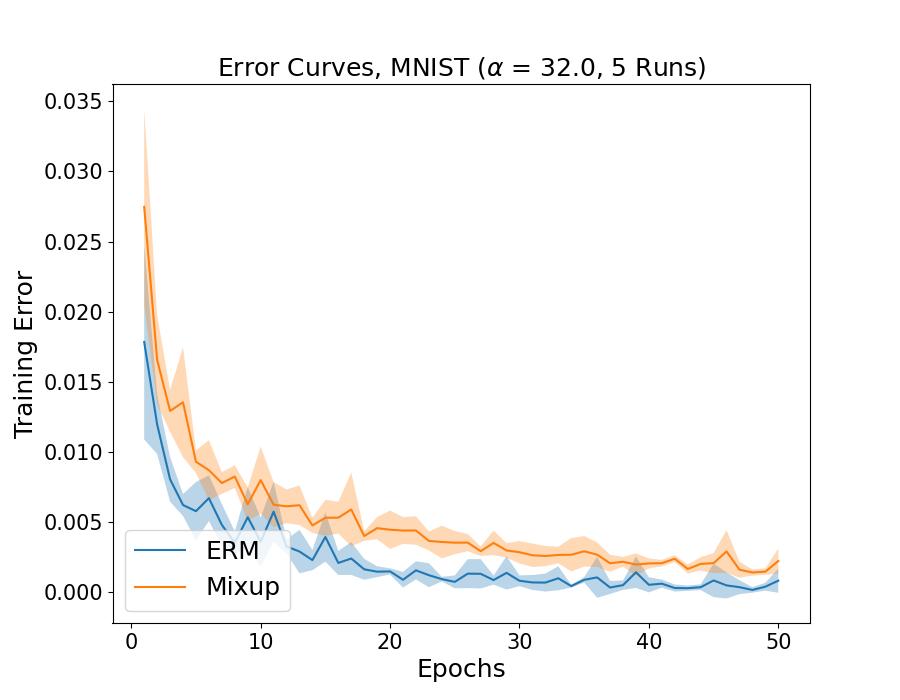}
              \caption{MNIST ($\alpha = 32$)}
       \end{subfigure}%
       \begin{subfigure}[t]{0.33\textwidth}
              \centering
              \includegraphics[width=\textwidth]{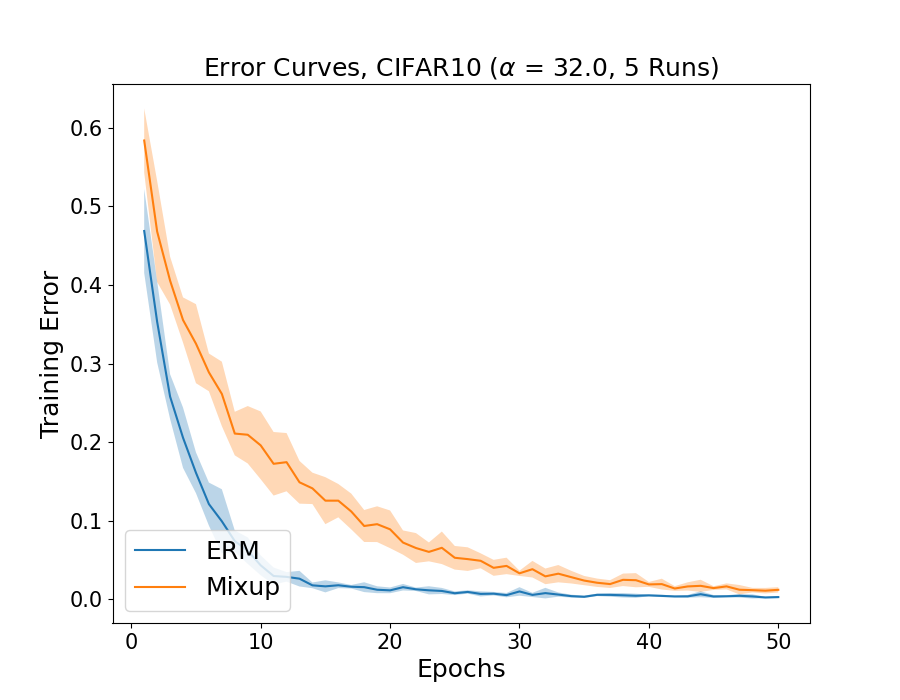}
              \caption{CIFAR-10 ($\alpha = 32$)}
       \end{subfigure}%
       \begin{subfigure}[t]{0.33\textwidth}
              \centering
              \includegraphics[width=\textwidth]{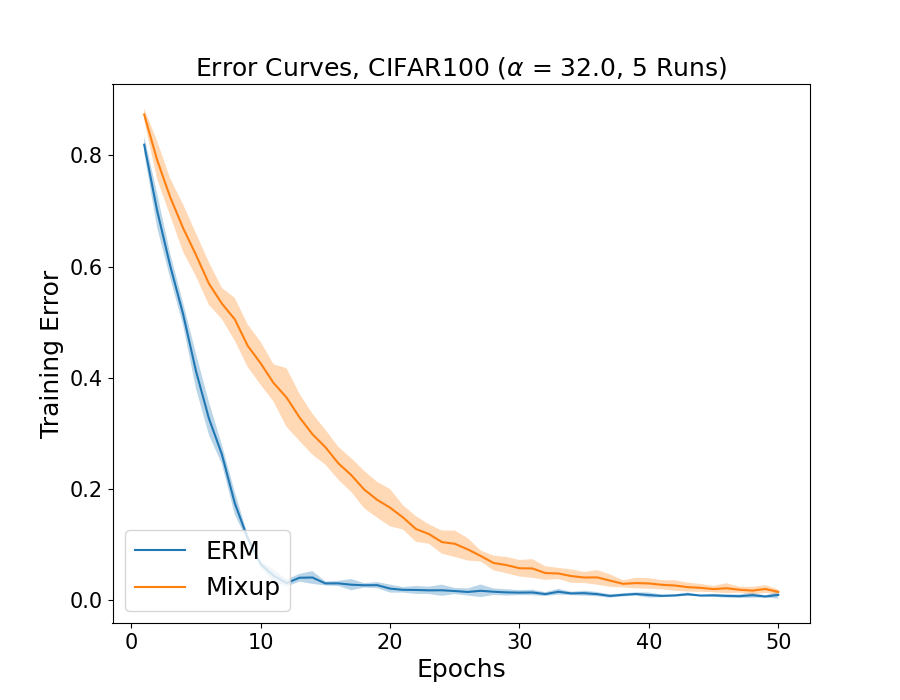}
              \caption{CIFAR-100 ($\alpha = 32$)}
       \end{subfigure}
        \begin{subfigure}[t]{0.33\textwidth}
              \centering
              \includegraphics[width=\textwidth]{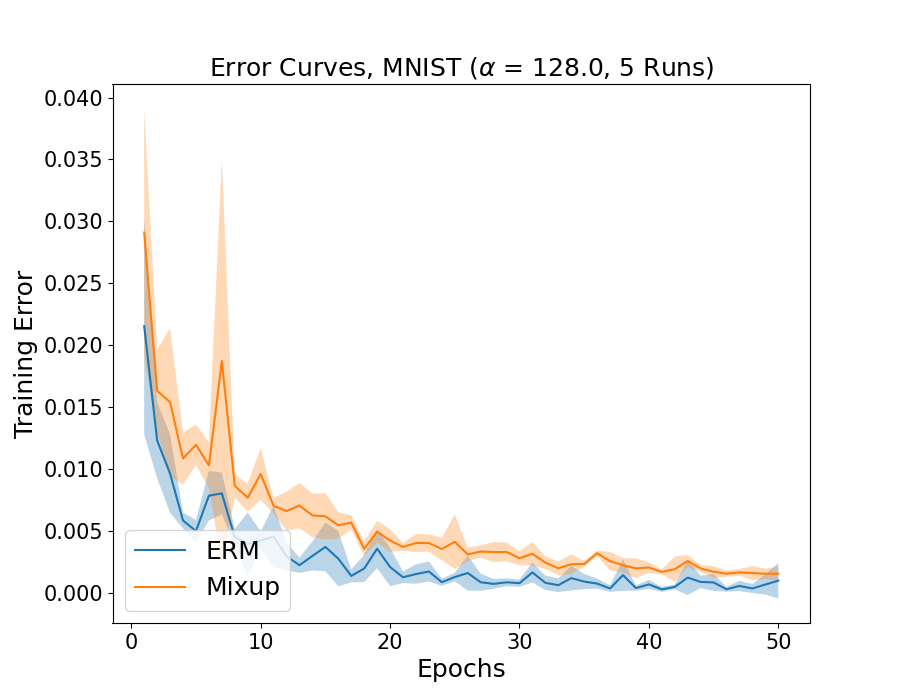}
              \caption{MNIST ($\alpha = 128$)}
       \end{subfigure}%
       \begin{subfigure}[t]{0.33\textwidth}
              \centering
              \includegraphics[width=\textwidth]{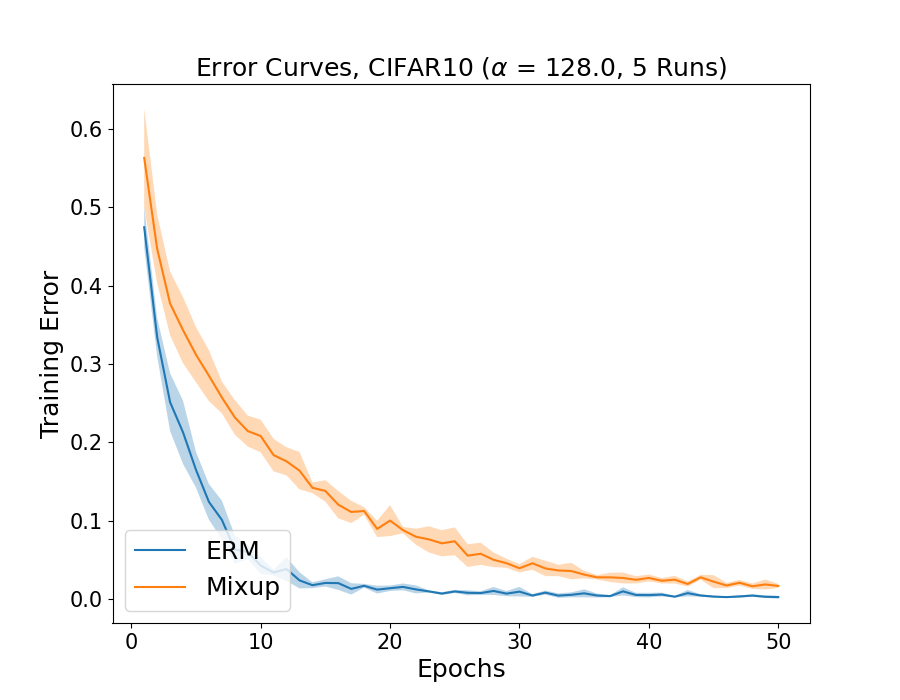}
              \caption{CIFAR-10 ($\alpha = 128$)}
       \end{subfigure}%
       \begin{subfigure}[t]{0.33\textwidth}
              \centering
              \includegraphics[width=\textwidth]{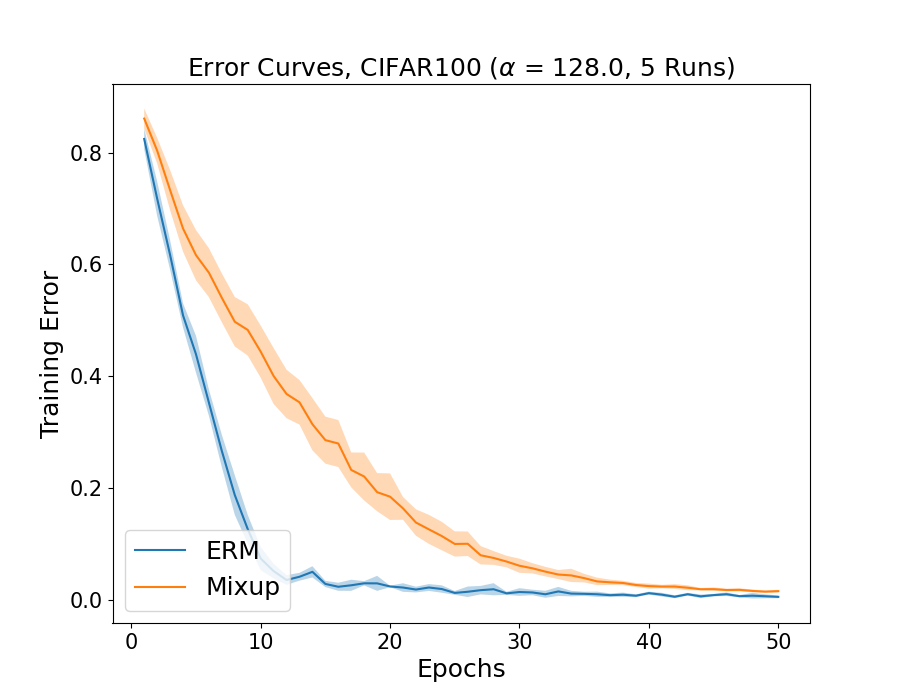}
              \caption{CIFAR-100 ($\alpha = 128$)}
       \end{subfigure}
       \caption{Mean and single standard deviation of 5 training runs for Mixup ($\alpha = 1, 32, 128$) and ERM on the \textit{original training data}.}
       \label{fig2sup1}
\end{figure*}

\section{Full Proofs for Section 3}
\subsection{Proof of Theorem \ref{theo4}}
\margin*
\begin{proof}
For two points $p, q$ in the supports $X_1, ..., X_k$ with a line segment between them intersecting $x$, let $\lambda(p, q, x)$ denote the value of $\lambda$ for which $\lambda p + (1 - \lambda) q = x$. Since the supports are finite, we have that there exists $\epsilon_1 > 0$ such that for all $\epsilon \leq \epsilon_1$ we have that:
\begin{align*}
    \xi_{x, \epsilon}^{i, j} = \sum_p \sum_q \int_{B_{\epsilon}(\lambda(p, q, x))} d\mathbb{P}_f
\end{align*}

Where the summations are over all points $p, q$ with line segments containing $x$. Now we have by the Lebesgue differentiation theorem applied to the integral term in the summations above (this is possible because $\mathbb{P}_f$ has a density) that the following limit (as well as each limit for the other coordinate functions) exists:
\begin{align*}
    \lim_{\epsilon \to 0} h_{\epsilon}^i &= \lim_{\epsilon \to 0} \frac{\xi_{x, \epsilon}^{i, i} + \sum_{j \neq i} \big(\xi_{x, \epsilon, \lambda}^{i, j} + (\xi_{x, \epsilon}^{j, i} - \xi_{x, \epsilon, \lambda}^{j, i})\big)}{\sum_{q = 1}^k \bigg(\xi_{x, \epsilon}^{q, q} + \sum_{j \neq q} \big(\xi_{x, \epsilon, \lambda}^{q, j} + (\xi_{x, \epsilon}^{j, q} - \xi_{x, \epsilon, \lambda}^{j, q})\big)\bigg)} \\
    &= \lim_{\epsilon \to 0}\frac{\frac{1}{\epsilon}\xi_{x, \epsilon}^{i, i} + \frac{1}{\epsilon}\sum_{j \neq i} \big(\xi_{x, \epsilon, \lambda}^{i, j} + (\xi_{x, \epsilon}^{j, i} - \xi_{x, \epsilon, \lambda}^{j, i})\big)}{\frac{1}{\epsilon}\sum_{q = 1}^k \bigg(\xi_{x, \epsilon}^{q, q} + \sum_{j \neq q} \big(\xi_{x, \epsilon, \lambda}^{q, j} + (\xi_{x, \epsilon}^{j, q} - \xi_{x, \epsilon, \lambda}^{j, q})\big)\bigg)}
\end{align*}

Now by Assumption \ref{as2}, we need only consider the $\xi_{x, \epsilon}^{i, i}, \xi_{x, \epsilon}^{i, j}, \xi_{x, \epsilon, \lambda}^{i, j}$ terms (as there are no other line segments that contain $x$). Furthermore, we have that there exists $\epsilon_2 > 0$ such that for all $\epsilon \leq \epsilon_2$ we have $\lambda(p, q, x) - \epsilon \geq \frac{1}{2}$ for $p \in X_i$ and $q \in X_j$, and that the measure of $A_{x, \epsilon}^{i, j}$ is at least that of $A_{x, \epsilon}^{j, i}$. 
From this we get that for all $\epsilon \leq \min(\epsilon_1, \epsilon_2)$:
\begin{align*}
    h_{\epsilon}^i &= \frac{\xi_{x, \epsilon}^{i, i} + \sum_{j \neq i} \big(\xi_{x, \epsilon, \lambda}^{i, j} + (\xi_{x, \epsilon}^{j, i} - \xi_{x, \epsilon, \lambda}^{j, i})\big)}{\xi_{x, \epsilon}^{i, i} + \sum_{j \neq i} \big(\xi_{x, \epsilon, \lambda}^{i, j} + \xi_{x, \epsilon, \lambda}^{j, i} + (\xi_{x, \epsilon}^{j, i} - \xi_{x, \epsilon, \lambda}^{j, i}) + (\xi_{x, \epsilon}^{i, j} - \xi_{x, \epsilon, \lambda}^{i, j})\big)} \\
    &\geq \frac{\xi_{x, \epsilon}^{i, i} + \sum_{j \neq i} \big(\xi_{x, \epsilon, \lambda}^{i, j} + (\xi_{x, \epsilon}^{j, i} - \xi_{x, \epsilon, \lambda}^{j, i})\big)}{\xi_{x, \epsilon}^{i, i} + 2\sum_{j \neq i} \big(\xi_{x, \epsilon, \lambda}^{i, j} + (\xi_{x, \epsilon}^{j, i} - \xi_{x, \epsilon, \lambda}^{j, i})\big)} \geq \frac{1}{2}
\end{align*}

Since the above also holds for a sufficiently small neighborhood about $x$, we have the desired result.

\end{proof}

\subsection{Proof of Theorem \ref{max-margin}}
Let us first recall the setting of Theorem \ref{max-margin}, since it is more specialized than that of the previous results.

\textbf{Setting.} We consider the case of binary classification using a linear model $\theta^{\top} x$ on high-dimensional Gaussian data, which is a setting that arises naturally when training using Gaussian kernels. Specifically, we consider the dataset $\mathcal{X}$ to consist of $n$ points in $\mathbb{R}^d$ distributed according to $\mathcal{N}(0, I_d)$ with $d > n$ (to be made more precise shortly). We let the labels of points in $X$ be $\pm 1$ (so that the sign of $\theta^{\top} x$ is the classification), and use $X_1$ and $X_{-1}$ to denote the individual class points. Additionally, we let $n_1 = \abs{X_1}$ and $n_2 = \abs{X_{-1}}$.

Before introducing the proof of Theorem \ref{max-margin}, we first present a lemma that will be necessary in the proof.

\begin{restatable}{lemma}{strictconvex}[Strict Convexity of $J_{mix}$ on Data Span] \label{lemma strong-convexity}
Suppose $n_1 = n_2 = 1$, i.e. there are two data points $x, z$ with opposite labels. If $x$ and $z$ are linearly independent, then $J_{mix}$ is strictly convex with respect to $\theta$ on the span of $x$ and $z$.
\end{restatable}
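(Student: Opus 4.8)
The plan is to exploit the fact that $J_{mix}$, viewed as a function of $\theta$, is a nonnegative-weighted integral of compositions of scalar convex functions with the linear evaluation maps $\theta \mapsto \theta^{\top} y$, and that the strict convexity on $\mathrm{span}(x, z)$ will already follow from the Hessian contributions of the \emph{same-class self-mixtures} alone, with the cross-class terms only helping.

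First I would make the loss explicit in the binary linear setting. Writing $g^1(y) = \sigma(\theta^{\top} y)$ and $g^{-1}(y) = \sigma(-\theta^{\top} y)$ with $\sigma$ the logistic function, every term of $\ell_{mix}$ is a nonnegative combination of $\phi(\theta^{\top} y) := -\log \sigma(\theta^{\top} y)$ and $\psi(\theta^{\top} y) := -\log \sigma(-\theta^{\top} y)$, both of which are convex in $\theta$ (softplus composed with a linear map). Since $\mathbb{P}_X$ is uniform on $\{x, z\}$, each of the four blocks $J_{mix}^{1,1}, J_{mix}^{1,-1}, J_{mix}^{-1,1}, J_{mix}^{-1,-1}$ carries weight $\tfrac14$, and because $z_{xx}(\lambda) = x$ and $z_{zz}(\lambda) = z$ the two self-mixing blocks collapse to $\tfrac14 \phi(\theta^{\top} x)$ and $\tfrac14 \psi(\theta^{\top} z)$, independent of $\lambda$. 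Integrating the convex cross-class blocks against the positive measure $\mathbb{P}_f$ preserves convexity, so $J_{mix}$ is convex on all of $\mathbb{R}^d$; the remaining task is to upgrade this to strict convexity on the two-dimensional span.

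Next I would differentiate twice. The Hessian of $\phi(\theta^{\top} y)$ in $\theta$ is $\phi''(\theta^{\top} y)\, y y^{\top}$ with $\phi''(u) = \sigma(u)\sigma(-u) > 0$, and identically for $\psi$. Hence
\begin{align*}
\nabla_{\theta}^2 J_{mix} = \tfrac14\, \phi''(\theta^{\top} x)\, x x^{\top} + \tfrac14\, \psi''(\theta^{\top} z)\, z z^{\top} + M,
\end{align*}
where $M$ collects the cross-class contributions and is an integral of terms $c(\lambda)\, (\lambda x + (1 - \lambda) z)(\lambda x + (1 - \lambda) z)^{\top}$ with $c(\lambda) \geq 0$, hence positive semidefinite. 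The coefficients of the first two terms are strictly positive because $\theta^{\top} x$ and $\theta^{\top} z$ are finite. Discarding the harmless term $M$, it suffices to show that $c_1 x x^{\top} + c_2 z z^{\top}$ with $c_1, c_2 > 0$ is positive definite on $\mathrm{span}(x, z)$.

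The main (and essentially only) obstacle is this final positive-definiteness check, and it is exactly where linear independence is used. For $v = \alpha x + \beta z \in \mathrm{span}(x, z)$ I would compute $v^{\top}(c_1 x x^{\top} + c_2 z z^{\top}) v = c_1 (x^{\top} v)^2 + c_2 (z^{\top} v)^2$ and observe that $(x^{\top} v, z^{\top} v)^{\top} = G (\alpha, \beta)^{\top}$, where $G$ is the Gram matrix of $x$ and $z$. Since $x$ and $z$ are linearly independent, $G$ is invertible (strict Cauchy--Schwarz), so $(x^{\top} v, z^{\top} v) = 0$ forces $(\alpha, \beta) = 0$, i.e. $v = 0$; thus the quadratic form is strictly positive for every nonzero $v \in \mathrm{span}(x, z)$. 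Adding back the positive semidefinite $M$ preserves this, so $\nabla_{\theta}^2 J_{mix} \succ 0$ on $\mathrm{span}(x, z)$, which is the claimed strict convexity.
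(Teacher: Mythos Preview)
Your proof is correct and takes a genuinely different route from the paper. You extract strict convexity from the two self-mixing (ERM) blocks $\tfrac14\phi(\theta^\top x)$ and $\tfrac14\psi(\theta^\top z)$, whose combined Hessian $c_1\,xx^\top + c_2\,zz^\top$ you show is positive definite on $\mathrm{span}(x,z)$ via the Gram matrix, and then treat the cross-class contribution $M$ as a harmless positive semidefinite remainder. The paper instead dismisses the ERM part in one line as ``clearly strictly convex'' and devotes the argument to proving that the \emph{cross-class} mixing term alone is strictly convex on the span: its Hessian is $\mathbb{E}_\lambda\bigl[c(\lambda)\,f(\lambda)f(\lambda)^\top\bigr]$ with $f(\lambda)=\lambda x+(1-\lambda)z$, and by continuity together with the full-support assumption on $\mathbb{P}_f$ one reduces to checking that no nonzero $a\in\mathrm{span}(x,z)$ is simultaneously orthogonal to $f(1)=x$ and $f(0)=z$. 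Your argument is more self-contained and, notably, does not rely on $\mathbb{P}_f$ having full support on $[0,1]$. The paper's extra work buys something downstream, however: in the proof of Theorem~\ref{max-margin} the ERM terms are deliberately discarded and strict convexity (hence uniqueness of the minimizer) is invoked for the cross-class loss in isolation, which your decomposition by itself would not supply.
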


The reason this lemma focuses on the two data point case is that in the proof of Theorem \ref{max-margin} we will break up the Mixup loss into the sum over these cases. With this lemma, we may now prove Theorem \ref{max-margin}. 
Before doing so, we point out that the version of $J_{mix}$ considered here is after composition with the logistic loss (since we are considering binary classification with a linear classifier).

\maxmargintheorem*

\begin{proof}
Since $d > n$, we have that all of the points $\{x_i\}_{i=1}^{n_1}$, $\{z_j\}_{j=1}^{n_2}$ are linearly independent with probability one. We will break the proof into two parts. 
In doing so, we make the following important observation: it suffices to prove the result for mixings of distinct points. This is because an interpolating solution (as given in Definition \ref{intersoln}) is immediately seen to be optimal for the ERM part of $J_{mix}$ (the terms corresponding to mixing points with themselves). Thus, in what follows, we disclude these ERM terms from $J_{mix}$ to simplify the presentation.

\textbf{Part I: $n_1 = n_2 = 1$.} Denote $\theta^{\top}x_1 = u$ and $\theta^{\top}z_1 = -v$, then 
\begin{align}
    J_{mix} = \mathbb{E}_{\lambda}\left[\lambda\log(1+\exp(-\lambda u + (1-\lambda)v)) + (1-\lambda)\log(1+\exp(\lambda u - (1-\lambda)v)) \right].   \label{Eq J_mix}
\end{align}
Where $\lambda$ is distributed according to $\mathbb{P}_f$ which is symmetric and has full support on $[0,1]$. Therefore, we can do the change of variables $\lambda := 1-\lambda$, and
\begin{align}
    J_{mix} = \mathbb{E}_{\lambda}\left[\lambda\log(1+\exp(-\lambda v + (1-\lambda)u)) + (1-\lambda)\log(1+\exp(\lambda v - (1-\lambda)u)) \right].  \label{Eq J_mix_symmetric}
\end{align}
Combining Eq.(\ref{Eq J_mix}) and Eq.(\ref{Eq J_mix_symmetric}), we know if $(u, -v)$ is a global minimum of $J_{mix}$, then so is $(v, -u)$. But the strict convexity in Lemma \ref{lemma strong-convexity} implies such a global minimum is unique, so we must have $u = v = k(\mathbb{P}_f)$, where $k(\mathbb{P}_f)$ is a constant that only depends on the density of $\mathbb{P}_f$. Furthermore, $\forall \lambda \in [0,1]$, define
\begin{align}
    h_{\lambda}(k) = \lambda\log(1+\exp((1-2\lambda)k)) + (1-\lambda)\log(1+\exp((2\lambda-1)k)),
\end{align}
then $\forall k > 0$,
\begin{align}
    h_{\lambda}(-k) - h_{\lambda}(k) &= (1-2\lambda)\log(1+\exp((1-2\lambda)k)) + (2\lambda-1)\log(1+\exp((2\lambda-1)k)) \nonumber \\
    &= (1-2\lambda)\log\left(\frac{1+\exp((1-2\lambda)k)}{1+\exp((2\lambda-1)k)}\right) \ge 0, \ \ \ \   
\end{align}
and
\begin{align}
    h_{\lambda}'(k)|_{k=0} = -\frac{1}{2}(1-2\lambda)^2 \le 0.
\end{align}
Hence, we must have $k(\mathbb{P}_f)>0$.

\textbf{Part II: General $n_1$ and $n_2$.} 
For the general case, we extend the observation we made prior to the proof of Part I. Namely, if we can show that an interpolating solution is optimal for mixing across the two classes, it follows immediately that the solution is optimal for all of $J_{mix}$ (it is not hard to see that the calculation for mixing points from the same class is essentially no different from the ERM case in this context, as the $\lambda$ and $1 - \lambda$ terms can be combined).
We thus focus only on mixing across classes, and overload the $J^{i, j}_{mix}$ notation to indicate mixing of points $x_i$ and $z_j$, so that we may write the loss in consideration as:
\begin{align}
    J_{mix}(\theta) = \frac{1}{n_1 n_2} \sum_{i=1}^{n_1}\sum_{j=1}^{n_2} J^{i,j}_{mix}(\theta).
\end{align}
By the proof in the previous part we know if $J^{i,j}_{mix}$ is minimized, then we must have $\theta^{\top}x_i = -\theta^{\top}z_j = k(\mathbb{P}_f) > 0$. On the other hand, if $J^{i,j}_{mix}(\theta)$ are minimized simultaneously for all pairs $(i,j)$, then clearly $J_{mix}(\theta)$ is also minimized. This is possible since the data points are linearly independent, so there exists $\theta \in \mathbb{R}^d$, such that
\begin{align}
    \theta^{\top}x_i = -\theta^{\top}z_j = k(\mathbb{P}_f) > 0 \ \ \ \ \forall i \in [n_1], \ \forall j \in [n_2].
\end{align}
Now we can conclude that any $\theta$ that minimizes the Mixup loss $J_{mix}$ is an interpolating solution. Restricting $\theta$ to the span of $\mathcal{X}$ finishes the proof. 
\end{proof}

\subsubsection{Proof of Supporting Lemma}
\strictconvex*
\begin{proof}
We note again that it suffices to prove the strict convexity with respect to only the mixings of different points, as the ERM part is clearly strictly convex and the sum of two strictly convex functions remains strictly convex.
Denote 
\begin{align}
    f(\lambda) = \lambda x + (1-\lambda)z, 
\end{align}
then $J_{mix}$ can be expressed as 
\begin{align}
    J_{mix}(\theta) &= \mathbb{E}_{\lambda}\left[ \lambda\log(1+\exp(-\theta^{\top}f(\lambda))) + (1-\lambda)\log(1+\exp{\theta^{\top}f(\lambda)})\right] \nonumber \\
    &= \mathbb{E}_{\lambda} \left[\log(1+\exp(-\theta^{\top}f(\lambda))) + (1-\lambda)\theta^{\top}f(\lambda)\right].   \label{Eq. Two-point J_mix}
\end{align}
Where again $\lambda \sim \mathbb{P}_f$. Note that the second term in Eq.(\ref{Eq. Two-point J_mix}) is linear in $\theta$, hence the Hessian of $J_{mix}$ can be written as
\begin{align}
    \nabla^2 J_{mix}(\theta) &= \mathbb{E}_{\lambda} \left[\frac{\exp(-\theta^{\top}f(\lambda))}{(1 + \exp(-\theta^{\top}f(\lambda)))^2}f(\lambda)f(\lambda)^{\top}\right] \\
    &:= \mathbb{E}_{\lambda} g(\lambda) \nonumber.
\end{align}
Define $\mathcal{B} := \text{Span}\{x,z\}$. To show $J_{mix}$ is strictly convex on $\mathcal{B}$, it suffices to show for every non-zero vector $a \in \mathcal{B}$, we always have
\begin{align}
    a^{\top} \nabla^2 J_{mix}(\theta) a > 0.
\end{align}
Note that $g(\lambda)$ is continuous \textit{w.r.t.} $\lambda$ and that $\mathbb{P}_f$ has full support on $[0,1]$, it suffices to show either
\begin{align}
    a^{\top}f(0)f(0)^{\top} a > 0 \ \ \text{or} \ \ a^{\top}f(1)f(1)^{\top} a > 0,
\end{align}
which is equivalent to either
\begin{align}
    a^{\top}x \neq 0 \ \ \text{or} \ \ a^{\top}z \neq 0.  
\end{align}
This is obvious since $a$ is a non-zero vector in $\mathcal{B}$, and that $x$ and $z$ are linearly independent.
\end{proof}

\section{Additional Experiments for Section 3}
In this section, we consider different class separations and choices of the mixing parameter $\alpha$ when training on the two moons dataset, with all other experimental settings being the same as in Section \ref{subsec:twomoons}.

\begin{figure*}[htp]
    \centering
    \includegraphics[scale=0.5]{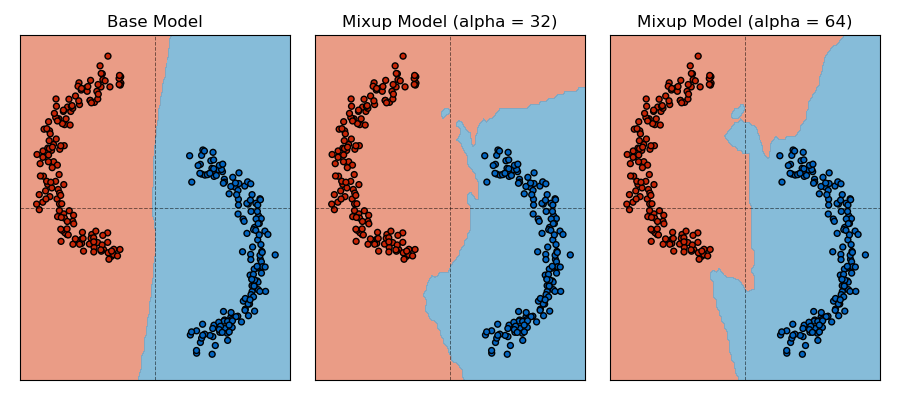}
    \caption{Decision boundary plots for $\alpha = 32, \ 64$ and a class separation of 0.5.}
\end{figure*}

\begin{figure*}[htp]
    \centering
    \includegraphics[scale=0.5]{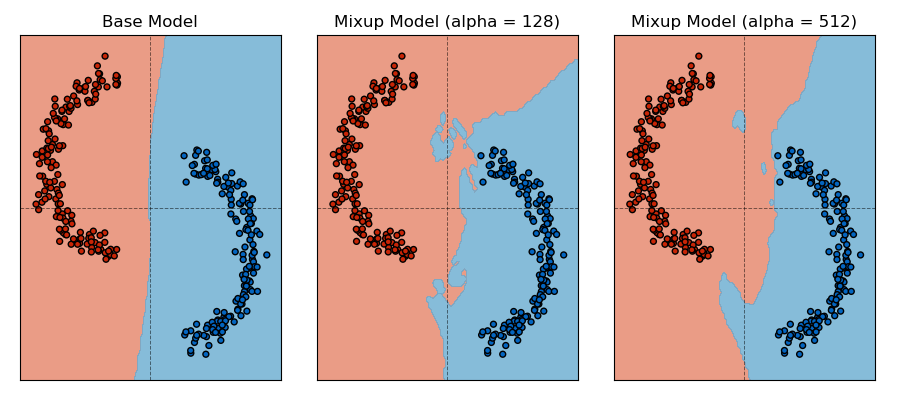}
    \caption{Decision boundary plots for $\alpha = 128, \ 512$ and a class separation of 0.5.}
\end{figure*}

Upon decreasing the class separation to 0.1, we note that even standard training captures more of the nonlinear aspects of the data, as was observed in the prior work of \citet{gradstarv}.
\begin{figure*}[htp]
    \centering
    \includegraphics[scale=0.5]{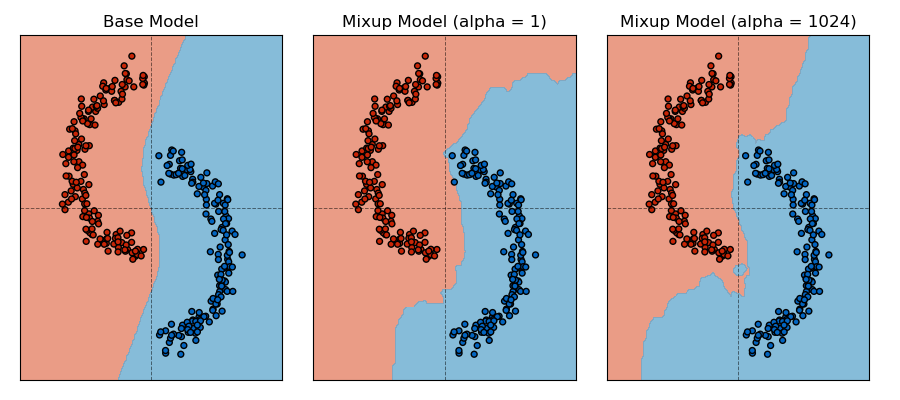}
    \caption{Decision boundary plots for $\alpha = 1, \ 1024$ and a class separation of 0.1.}
\end{figure*}

\begin{figure*}[htp]
    \centering
    \includegraphics[scale=0.5]{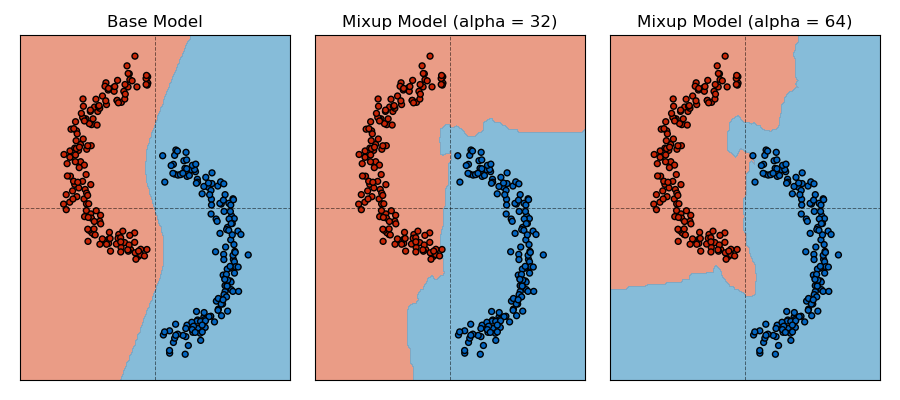}
    \caption{Decision boundary plots for $\alpha = 32, \ 64$ and a class separation of 0.1.}
\end{figure*}

\begin{figure*}[htp]
    \centering
    \includegraphics[scale=0.5]{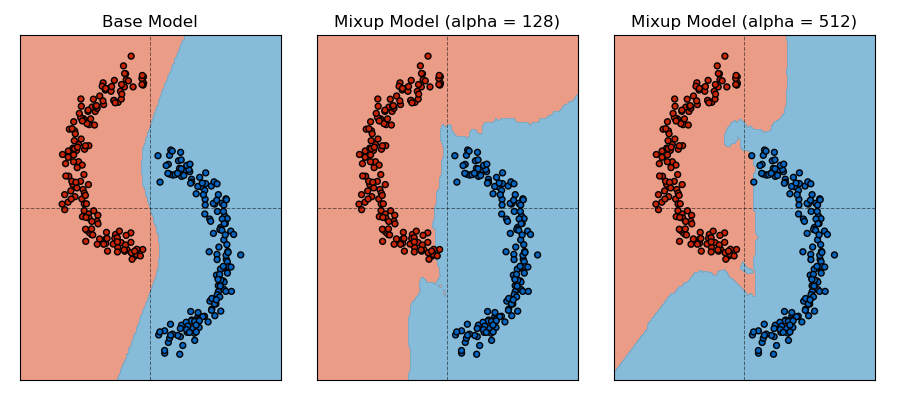}
    \caption{Decision boundary plots for $\alpha = 128, \ 512$ and a class separation of 0.1.}
\end{figure*}

\end{document}